\documentclass{article}

\usepackage{microtype}
\usepackage{graphicx}
\usepackage{subcaption}
\usepackage{booktabs} %

\usepackage{hyperref}

\usepackage[preprint]{icml}

\usepackage{amsmath}
\usepackage{amssymb}
\usepackage{mathtools}
\usepackage{amsthm}

\usepackage[capitalize,noabbrev]{cleveref}

\theoremstyle{plain}
\newtheorem{theorem}{Theorem}[section]
\newtheorem{proposition}[theorem]{Proposition}
\newtheorem{lemma}[theorem]{Lemma}

\theoremstyle{definition}

\theoremstyle{remark}

\usepackage[textsize=tiny]{todonotes}

\usepackage[table]{xcolor}

\colorlet{lgreen}{green!10}
\colorlet{lblue}{blue!10}
\colorlet{lred}{red!10}

\usepackage{listings}
\definecolor{Numbers}{RGB}{0,137,84}      %
\definecolor{Definitions}{RGB}{0,0,255}   %
\definecolor{Functions}{RGB}{126,93,23}   %
\definecolor{Variables}{RGB}{0,17,134}    %
\definecolor{Comments}{RGB}{0,100,0}      %
\definecolor{Strings}{RGB}{178,0,2}       %
\definecolor{Operators}{RGB}{0,0,0}       %
\definecolor{BaseFunctions}{RGB}{192,0,227} %
\definecolor{Bakgroung}{RGB}{255,255,255} %

\lstdefinelanguage{PythonVSCodeLight}{
    language=Python,
    morekeywords={
        torch, nn, autograd, optim, utils, cuda, device, Tensor, einops, opt_einsum,
        tensor, from_numpy, zeros, ones, randn, manual_seed, no_grad, save, load,
        Linear, Conv2d, ReLU, Sigmoid, Tanh, Dropout, Sequential, Module,
        SGD, Adam, lr_scheduler,
        CrossEntropyLoss, MSELoss, BCELoss, NLLLoss,
    },
    keywords=[2]{abs, norm, kthvalue, where, einsum, rearrange, contract_path, svd, sort, view, reshape, reshape_as, flip, cumsum, square, std, expand_as, gather, pow, sign, view_as, flatten, clamp, lu, solve_triangular, allclose, mse_loss, contract_expression, pop, get, AdamW, zero_grad, backward, step, item, copy_},
    keywordstyle=[2]\color{Functions}, %
    sensitive=true
}

\usepackage{tikz}
\usetikzlibrary{tikzmark, shapes.geometric, arrows.meta, fit, positioning, calc, matrix}

\NewDocumentCommand{\progressbar}{m O{2} O{red!70} O{0.3}}{%
  \begin{tikzpicture}
    \fill[gray!30] (0,0) rectangle (#2,#4); %
    \fill[#3] (0,0) rectangle ({#1*#2},#4); %
  \end{tikzpicture}%
}

\icmltitlerunning{EinSort: Sorting is All We Need for Tensorizing LLM}

\begin{document}

\twocolumn[
  \icmltitle{EinSort: Sorting is All We Need for Tensorizing LLM}

  \icmlsetsymbol{equal}{*}

  \begin{icmlauthorlist}
    \icmlauthor{Toshiaki Koike-Akino}{equal,merl}
    \icmlauthor{Jing Liu}{merl}
    \icmlauthor{Ye Wang}{merl}
  \end{icmlauthorlist}

  \icmlaffiliation{merl}{Mitsubishi Electric Research Laboratories (MERL), 201 Broadway, Cambridge, MA 02139, USA.}

  \icmlcorrespondingauthor{Toshiaki Koike-Akino}{koike@merl.com}

  \icmlkeywords{Machine Learning, LLM, Tensor Decomposition, Tensor Networks}

  \vskip 0.3in
]

\printAffiliationsAndNotice{}  %

\begin{abstract}
Tensor networks provide efficient representations for compressing large neural networks. 
By carefully designing shapes and topologies, they can significantly reduce memory and computational costs. 
However, identifying implicit low-rank structures in large foundation models remains challenging due to their enormous scale and unstructured weight distributions.
We propose an adaptive tensorization method that discovers inherent low-rank structure in a target tensor by index ordering. 
Experiments on weight and KV-cache compression demonstrate improved reconstruction quality compared to baselines.
\end{abstract}

\section{Introduction}

Large foundation models achieve remarkable performance across a variety
of  tasks~\citep{katz2024gpt}. 
Nonetheless, their large parameter counts and memory requirements make deployment expensive~\citep{schwartz2020green}. 
Numerous compression techniques have therefore been proposed~\citep{xu2023survey, zhu2024survey, bai2024beyond}, including partial activation~\citep{lin2024moe}, weight pruning~\citep{bai2024sparsellm},
quantization~\citep{wang2024q}, knowledge distillation~\citep{hwang2024pc}, and rank reduction~\citep{yuan2023asvd, hwang2024pc, saxena2024eigen}.

Among them, tensor decomposition methods~\citep{lebedev2014speeding, sidiropoulos2017tensor} provide a flexible framework for representing large neural networks with significantly fewer parameters~\citep{novikov2015tensorizing,denil2013predicting, sainath2013low, jaderberg2014speeding}.
Existing tensor network approaches~\citep{roberts2019tensornetwork,huggins2019towards} primarily focus on optimizing topology, contraction, and tensor ranks~\citep{luo2024adaptive}.
However, one important degree of freedom has received relatively little attention: the ordering of tensor indices.
In this work, we show that index ordering can dramatically alter the effective rank structure of tensors. In particular, simple sorting operations can expose latent low-rank structure that is otherwise hidden in pretrained LLM weights and KV caches. Motivated by this observation, we propose Einstein sorted sum (\textbf{EinSort}), a tensorization framework that augments Einstein summation with reversible permutation operators.

Our contributions are summarized as follows:
(1) We introduce EinSort, a tensor decomposition framework with adaptive index ordering for LLM compression.
(2) We theoretically and empirically demonstrate that sorting operations can substantially reduce effective tensor rank.
(3) We propose practical low-overhead permutation schemes for tensor compression.
(4) We demonstrate improved compression quality on LLM weight and KV-cache compression tasks.

\section{Tensorizing LLM for compression}

\begin{figure}[t]
    \centering
    \includegraphics[width=0.9\linewidth]{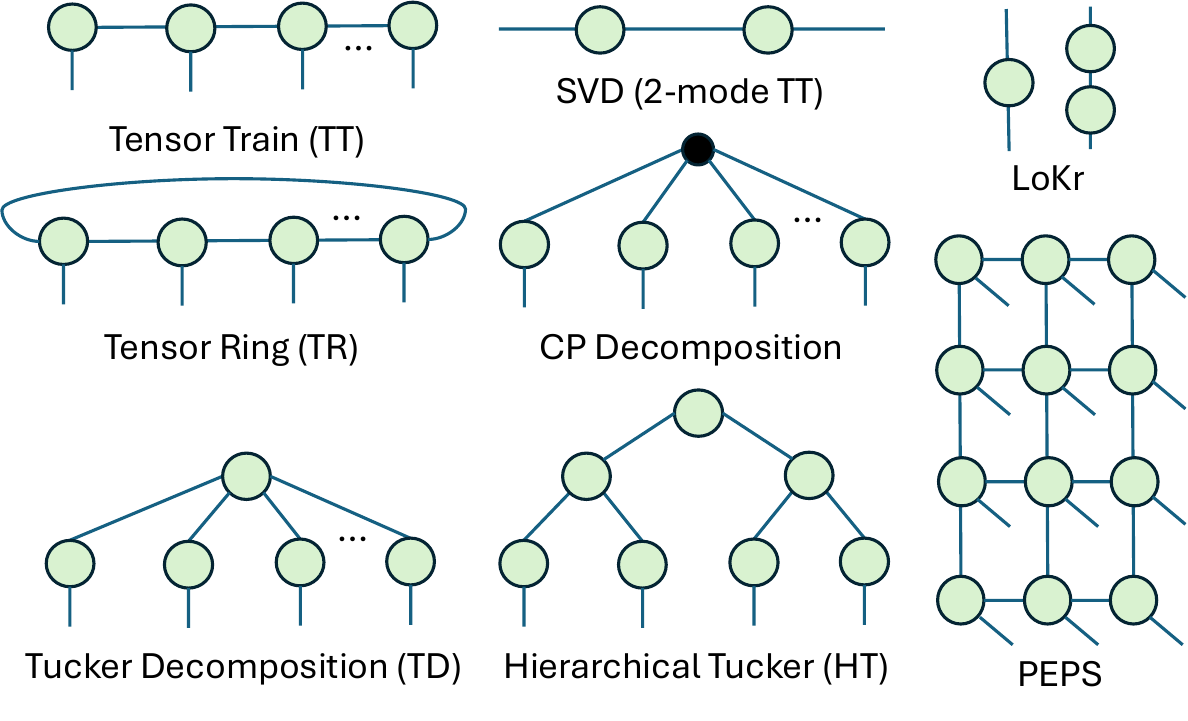}
    \caption{Various tensor networks in tensor diagrams.}
    \label{fig:tensornet}
\end{figure}

\subsection{Einstein summation as general tensor networks}

Model decomposition is a widely used approach to compress large models, e.g., low-rank factorization via singular-value decomposition (SVD).
More generally, tensorizing models can be viewed as a tensor network framework, including tensor train (TT), tensor ring (TR), canonical polyadic (CP), Tucker decomposition (TD), hierarchical Tucker (HT), and projected entangled-pair states (PEPS) as depicted in Fig~\ref{fig:tensornet}.
Almost all tensor networks can be represented by {Einstein summation} (\textbf{einsum}), which is a powerful format for linear algebra.
Given tensor cores with proper shapes, a tensor network is factorized in the einsum form, e.g., for 4-mode TT:
$
X_{i,j,k,l} =
\sum_{p,q,r} A_{i,p} B_{p,j,q} C_{q,k,r} D_{r,l},
$
where $A_{i,p}$, $B_{p,j,q}$, $C_{q,k,r}$, and $D_{r,l}$ are factorized cores.
The set $\{i,j,k,l\}$ are physical site indices, and $\{p,q,r\}$ are bond indices whose maximum ranges are called as tensor ranks.
If we reduce the tensor ranks, it can considerably reduce the memory footprint to represent a tensor $X$.
For example, if $X$ is of shape $(d,d,d,d)$, the total number of parameters is a quartic order of $d^{4}$, while it can be reduced to $2d(R+1)R$ for the maximum rank of $R$. 
When $R=\rho d$ for $0< \rho<1$, it will be reduced to a cubic order of $2\rho d^2(\rho d+1)$.

Several examples are listed in the pseudo code below:
\begin{lstlisting}
TT = torch.einsum("ip,pjq,qkr,rl->ijkl", A, B, C, D)
TR = torch.einsum("mip,pjq,qkr,rlm->ijkl", A, B, C, D)
CP = torch.einsum("r,ri,rj,rk->ijk", A, B, C, D)
TD = torch.einsum("pqr,pi,qj,rk->ijk", A, B, C, D)
X = einops.rearrange("i p j q -> (i p) (j q)", TT) # unfolding 4-mode to 2-mode
\end{lstlisting}
Here we use \href{https://einops.rocks/} {\texttt{einops}} for folding/unfolding. 
See Appendix~\ref{sec:einsum} for more examples.
As shown above, the einsum is a powerful tool to manipulate multi-linear operations.
It gives an opportunity to optimize topologies for tensor networks by just designing the einsum equation (a string like \texttt{"ip,pj->ij"}) with properly shaped tensor cores.

In addition to the parameter reduction, the computational cost can be also reduced by designing the contraction order.
For example with the 4-mode TT for a site dimension of $d=32$ with a rank of $R=20$, a na\"{i}ve contraction from left to right requires $3.4\times 10^{10}$ FLOPs, whereas an optimized contraction order reduces it to $4.4\times 10^{7}$, nearly 3 orders of magnitude when using \href{https://optimized-einsum.readthedocs.io/en/stable/index.html}{\texttt{opt\_einsum}}.

\subsection{Gauge fixing}

Given a large tensor, we can numerically factorize into any tensor networks, while exact solutions generally do not exist, except in special cases such as matrix SVD.
As einsum is a multi-linear operation, gradient optimization can be used to minimize the tensorization error.
For TT, a recursive SVD per bond cutting offers a good initialization, while alternating least-squares (ALS) is often used to refine the estimate.
We use \href{https://tensorly.org/stable/index.html}{\texttt{tensorly}}.

Actually, the tensorization solution is not unique because they have \textbf{gauge} freedom~\citep{evenbly2018gauge}. 
For example, consider factorizing $X$ as $X\simeq A B$ for low-rank cores $A$ and $B$. 
Optimal factors are given by truncated SVD, i.e., $A$ is based on left-singular and $B$ is on right-singular vectors of $X$.
However, any full-rank junction matrix $J$ has no impact when injecting it together with its inverse, i.e., $AB = A (J J^{-1}) B = (A J) (J^{-1} B) = A' B'$.
Hence, there are infinite solutions for $A, B$.
By fixing the gauge freedom such as canonicalization, the numerical stability~\citep{phan2020stable} for tensor factorization can be improved.

More importantly, such gauge fixing can notably reduce the number of parameters when using a specific junction~\citep{koike2026latentllm}.
For instance, we can choose a junction to convert adjacent tensor cores into block identity form.
In consequence, the number of parameters for 4-mode TT will be further reduced to $2d(R+1)R-2R^2$ for example.

\begin{figure}[t]
    \centering
    \includegraphics[width=\linewidth]{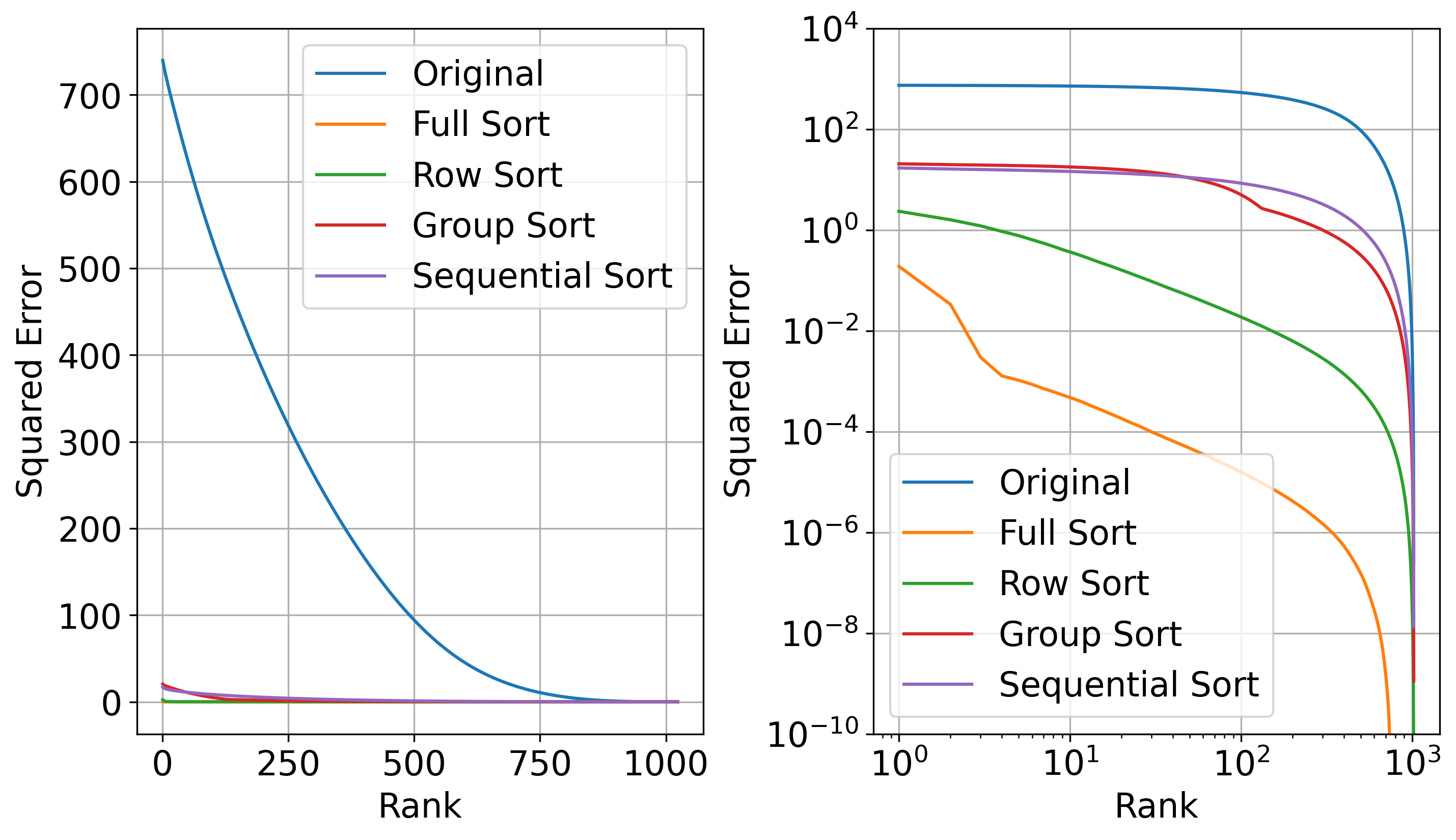}
    \caption{Tensorization error vs.\ rank for v\_proj weights at the first layer of Qwen3-0.6B model.
    Left: linear; right: log-log plots.}
    \label{fig:spect}
\end{figure}

\subsection{Index ordering}

Using einsum, we can adaptively design tensor networks to factorize LLMs: determining einsum equation, tensor core shapes, folding/unfolding operations, numerical solvers, and gauge fixing for a target tensor to decompose (e.g., pretrained weights).
However, there are few papers on tensor network designs, which consider optimizing tensor indexing beyond a simple axis ordering like \texttt{"ijkl->kjli"}. 
In this paper, we show that index ordering plays a critical role in exposing low-rank structure in random tensors.

Let us first consider a toy example, assuming a $2\times 2$ tensor $W_0$ and its re-ordered version $W_\pi$:
\vspace{-1em}
\begin{align}
    W_0 =
    \begin{tikzpicture}[baseline=(m.center),remember picture]
    \node (m) at (0,0) {
    $
    \begin{bmatrix}
        2 & 1 \\
        \tikzmarknode{a}{1} & \tikzmarknode{b}{2}
    \end{bmatrix}
    $
    };
    \draw[->, blue] (-0.6,0.5) -- (-0.6,-0.5) node[midway,left] {$i$};
    \draw[->, blue] (-0.5,0.6) -- (0.5,0.6) node[midway,above] {$j$};
      \draw[<->, red, thick, bend left=-35]
    ([yshift=-2pt]a.south) to node[below, sloped] {\scriptsize swap} ([yshift=-2pt]b.south);
    \end{tikzpicture}
    ,
    \qquad
    W_\pi =
    \begin{tikzpicture}[baseline=(m.center)]
    \node (m) at (0,0) {
    $
    \begin{bmatrix}
        2 & 1 \\
        2 & 1
    \end{bmatrix}
    $
    };
    \draw[->, blue] (-0.6,0.5) -- (-0.6,-0.5) node[midway,left] {$i$};
    \draw[->, blue] (-0.5,0.6) -- (0.5,0.6) node[midway,above] {$j\oplus i$};
    \end{tikzpicture}
    .
    \label{eq:2x2}
\end{align}
The first matrix has singular values of $[3,1]$, while the second one has $[\sqrt{10},0]$ and a reduced-rank of $1$.
Here $W_\pi$ has an index re-ordering of $j\mapsto j\oplus i$ for indices $(i,j)$.
In fact, this index ordering is known as \textbf{entanglement} operations in quantum tensor networks and the above example is exactly same as controlled-NOT (CNOT) gates.
Appendix~\ref{sec:quantum} discusses their connections in more details.

We next show empirical results suggesting that the sorting operation can greatly reduce the required rank.
Fig.~\ref{fig:spect} shows the tensorization loss in squared norm $\|W-\hat{W}\|^2$, where $W$ is the original weights matrix and $\hat{W}$ is a reconstructed version through a 2-mode TT.
Specifically, we analyze the eigen spectrum for pretrained weights of an LLM.
We also evaluate two sorting options, as given in the pseudo code:
\begin{lstlisting}
S = torch.linalg.svdvals(W)
loss = S.square().flip(-1).cumsum(-1)
Ws, _ = torch.sort(W.view(-1)) # full-sort
S = torch.linalg.svdvals(Ws.reshape_as(W))
Wr, _ = torch.sort(W, dim=-1) # row-sort
\end{lstlisting}

As shown in Fig.~\ref{fig:spect}, increasing the rank improves the accuracy, while the improvement is slow when original weights are decomposed by SVD without sorting.
Notably, tensor sorting dramatically reduces reconstruction error, enabling near rank-1 reconstruction.
From log-log plot, we can see that row sorting 
has relatively higher error than full sorting, whereas it is still more than 10 times lower than the one without sorting. 
Therefore, even with imperfect sorting, the index ordering has a great potential to improve the accuracy.

\subsection{Theoretical analysis}

We next provide a theoretical foundation supporting the reason why sorting can induce low-rank structure.

\begin{lemma}
\label{th:uniform}
Let $X_0,\ldots,X_{L-1}$ be i.i.d. random variables uniformly distributed on
$[a,b]$, where $a<b$. 
Let $X_{(k)}$ for $k\in\mathbb{Z}_L$ denote their order statistics as
$
    X_{(0)}\leq X_{(1)}\leq \cdots \leq X_{(L-1)}$,
and reshape them into an $M\times N$ matrix
$X$ by
$
X_{i,j}=X_{(Ni+j)}
$, 
for $i\in\mathbb{Z}_M$,
$j\in\mathbb{Z}_N$,
and $L=MN$. 
Then $X$ is asymptotically approximated, in
probability, by a matrix of rank at most $3$. 
\end{lemma}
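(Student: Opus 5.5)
The plan is to compare the sorted matrix $X$ with the deterministic matrix of uniform quantiles. Let $F^{-1}(u)=a+(b-a)u$ denote the quantile function of the uniform law on $[a,b]$. Define the limiting matrix
\begin{equation*}
\bar X_{i,j} = a+(b-a)\,\frac{Ni+j+1}{L+1},
\end{equation*}
which is the expectation $\mathbb{E}[X_{(Ni+j)}]$, since $\mathbb{E}[U_{(k)}]=(k+1)/(L+1)$ for uniform order statistics. Its rank is easy to bound. Set $c=(b-a)/(L+1)$. Then
\begin{equation*}
\bar X = \Bigl(a+c\Bigr)\mathbf{1}_M\mathbf{1}_N^\top + cN\,u\,\mathbf{1}_N^\top + c\,\mathbf{1}_M v^\top ,
\end{equation*}
with $u=(0,1,\ldots,M-1)^\top$ and $v=(0,1,\ldots,N-1)^\top$. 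This is a sum of three rank-one terms, so $\operatorname{rank}\bar X\le 3$. In fact the first term can be absorbed into either of the other two, giving rank at most $2$, but the bound of $3$ is all the statement needs.

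The main work is to show that $X-\bar X$ is negligible relative to $X$ in probability, as $L\to\infty$ with $M,N$ arbitrary. I would measure this in the normalized Frobenius norm $\|X-\bar X\|_F/\|\bar X\|_F$, or in the max-norm, and proceed in three steps.

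1. Write $X_{(k)}=a+(b-a)U_{(k)}$, where $U_{(k)}$ are uniform order statistics on $[0,1]$.
2. Use the exact variance $\operatorname{Var}(U_{(k)})=\frac{(k+1)(L-k)}{(L+1)^2(L+2)}\le \frac{1}{4(L+2)}$.
3. Sum over all $L$ entries to get $\mathbb{E}\|X-\bar X\|_F^2\le (b-a)^2 L/(4(L+2))\le (b-a)^2/4$.

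On the other side, $\|\bar X\|_F^2$ grows like $L$ times the mean square of $F^{-1}$ over $[0,1]$, which is $\Theta(L)$. The exception is $a+b$ near $0$ in a degenerate way, but even then $\int_0^1 (F^{-1})^2>0$ because $a<b$. By Markov's inequality, $\|X-\bar X\|_F/\|\bar X\|_F=O_p(L^{-1/2})\to 0$.

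For a stronger entrywise statement, I would instead invoke the Glivenko--Cantelli theorem, or the DKW inequality: $\sup_k|U_{(k)}-(k+1)/(L+1)|\le \sup_u|F_L(u)-u|+1/(L+1)=O_p(L^{-1/2})$. This gives $\max_{i,j}|X_{i,j}-\bar X_{i,j}|\to 0$ in probability. It then follows that $\|X-\bar X\|_2/\|X\|_2\to0$, and by Weyl's inequality all singular values of $X$ beyond the third are $o_p$ of the leading one.

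I expect the main obstacle to be fixing what "asymptotically approximated in probability" means, that is, choosing the normalization so that the statement is both true and meaningful. The unnormalized error $\|X-\bar X\|_F$ stays $O_p(1)$ rather than tending to zero, so the claim must be relative to $\|X\|_F\sim\sqrt{L}$, or uniform over entries. I would state the result in the relative Frobenius form, which is what the reconstruction-loss plots in Fig.~\ref{fig:spect} reflect, and add the DKW sup-norm version as a remark. The rank computation and the variance bound are routine.
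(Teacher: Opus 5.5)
Your proposal is correct, and its core matches the paper's proof. Both write $X_{(k)}=a+(b-a)U_{(k)}$, use the Beta mean $(k+1)/(L+1)$ and the variance bound $1/(4(L+2))$, and split the mean matrix into the same three rank-one terms. Where you differ is in how the residual is aggregated. The paper applies Chebyshev to each entry separately and stops at an entrywise $\mathcal{O}_p[L^{-1/2}]$ error, leaving the matrix-level meaning of ``approximated'' implicit. That per-entry bound does not by itself control the matrix: Chebyshev with a union bound over the $L$ entries gives only $\max_{i,j}|\mathcal{E}_{i,j}|=\mathcal{O}_p[1]$, and $L$ errors of size $L^{-1/2}$ produce a Frobenius error of order one. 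Your route supplies the missing step. Summing variances gives $\mathbb{E}\|X-\bar X\|_F^2\le (b-a)^2/4$ (exactly $(b-a)^2L/(6(L+1))$), and against $\|\bar X\|_F^2=\Theta(L)$ this yields a relative Frobenius error of $O_p(L^{-1/2})$. DKW then gives genuinely uniform entrywise control, and Weyl turns either bound into a statement about the trailing singular values. This is the same normalization and argument the paper later uses for Lemma~\ref{th:row}, so your version is the more rigorous reading of Lemma~\ref{th:uniform}. Your observation that the constant term merges with the $cN\,u\,\mathbf{1}_N^\top$ term is also right, since both have right factor $\mathbf{1}_N^\top$; the paper's own decomposition therefore already gives rank at most $2$. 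The only loose sentence is the ``exception'' for $a+b$ near $0$. There is none, since $\int_0^1(a+(b-a)u)^2\,\mathrm{d}u=(a^2+ab+b^2)/3>0$ whenever $a<b$.
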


See proof in Appendix~\ref{sec:theory}, which uses a well-known order statistics~\citep{david2004order} to derive the mean and variance.
As its mean is linear and the variance is bounded, we can find that a sorted matrix from uniformly distributed random variables is at most rank of $3$ asymptotically, up to vanishing stochastic error.
Although this does not guarantee that an arbitrary random matrix admits a low-rank factorization, we discuss more general cases in Appendix~\ref{sec:theory}.

\subsection{EinSort: Index ordered tensor networks}

Those results motivate us to consider sorting for tensor network designs.
We then propose \textbf{EinSort} framework.
Conceptually, we use a reversible  permutation operation $\pi[\cdot]$ inside tensor decomposition.
Let $\mathcal{T}_\theta[\cdot]$ denote the tensor decomposition with hyperparameters $\theta$ which determine einsum equation, shapes, topology, etc.
The sorted tensor decomposition is written as $\mathcal{T}_\theta^\pi :=\pi^{-1}[\mathcal{T}_\theta[\pi[X]]]$.
It generalizes the einsum to find low-rank structure.

Although sorting may reduce the number of parameters for tensor cores with few rank, the memory overhead of storing permutations is not negligible.
In fact, sorting $L$ values requires at most $\lceil \log_2(L!) \rceil$ bits using factoradic/Lehmer code~\citep{lehmer1960teaching}.
For example, full sorting for a tensor of shape $(1024, 1024)$ requires roughly $18.6$ bits per entry, which often exceeds the memory footprint of the original tensors in FP16.
(Note that the row sorting requires $8.6$ bits.)
Therefore, we need to seek optimizing a tradeoff between sorting accuracy and tensor rank reduction.

We introduce a simple sorting method reusing a shared permutation for multiple tensors.
For a 4-mode tensor $X$ with a site dimension of $d$,
we may use a shared permutation across the first mode slice: $X_{:,j,k,l}$ to re-order the last three modes.
It reduces the memory of permutation from $\lceil \log_2(d^4!)\rceil$ to $\lceil \log_2(d^3!) \rceil$ bits, i.e., more than $d$-fold memory reduction.
Therefore, we can easily adjust the total memory.
For example, with $d=32$, we have $\lceil \log_2(d^3!)\rceil / d^4 \simeq 0.4$ bits per weight.
For sorting metric, we consider some options for nonlinear mapping and reduction, including power scaling, and mean/max/min/median/std/prod operations.
An example is given in the pseudo code below.
\begin{lstlisting}
S = X.abs().pow(p) # nonlinear map score
S = S.std(dim=[0], keepdim=True).expand_as(X) # std reduction at the first mode
perm = torch.argsort(S.flatten(1), dim=-1) 
X = X.flatten(1).gather(-1,perm).view_as(S) 
\end{lstlisting}
We search for effective folding, slicing modes, power exponent and reduction options to increase the sorting accuracy.
Besides the sorting metric, we introduce an approach to improve the accuracy by employing non-negative tensorization.
Specifically, we project the original tensor into non-negative values before decomposition.
To recover the negative values, we keep the sign bit of the original tensor, requiring only one additional bit.
It is illustrated in Fig.~\ref{fig:slicing}.
More details and example pseudo codes are found in Appendix~\ref{sec:einsort}.

\begin{figure}[t]
    \centering
    \includegraphics[width=0.85\linewidth]{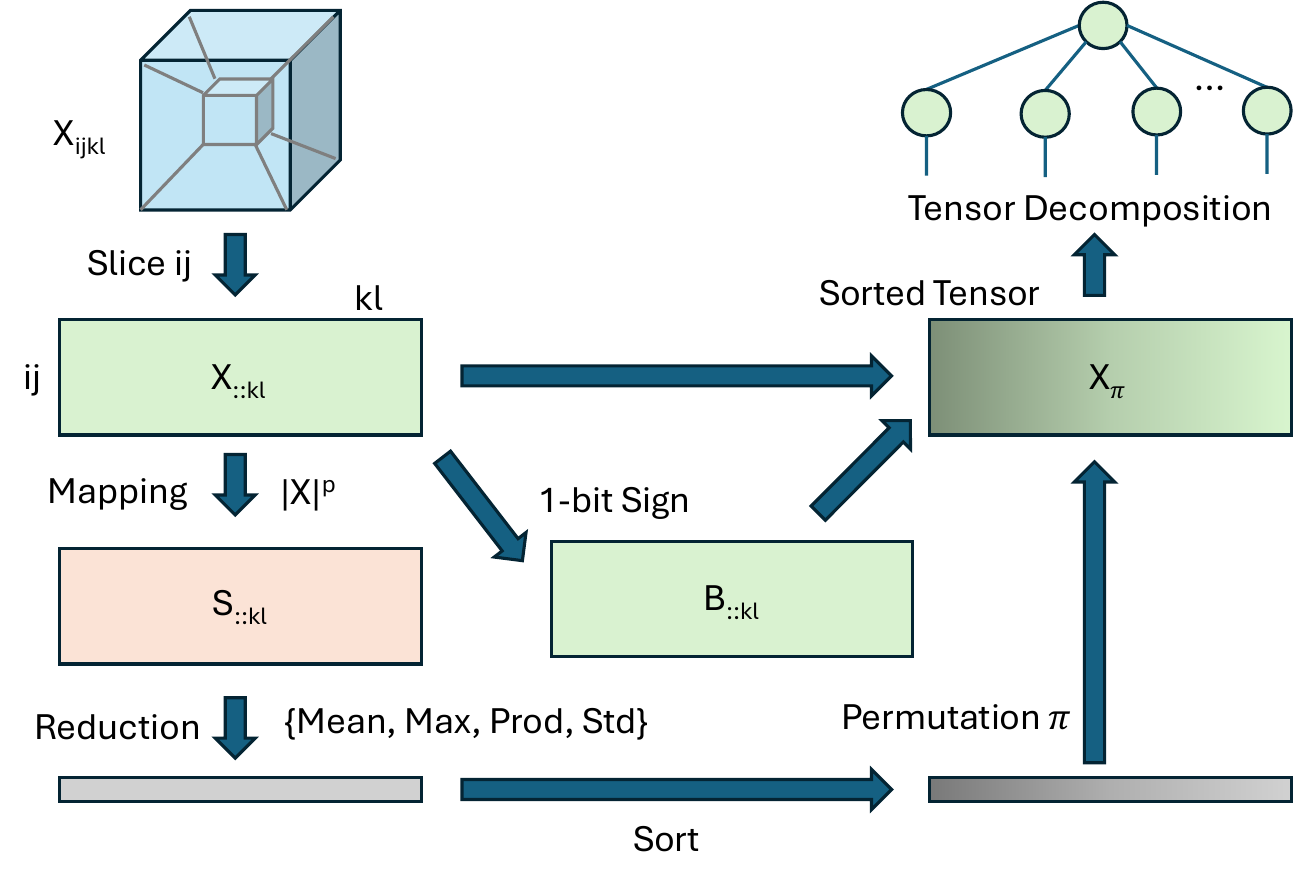}
    \caption{Sliced sorting with nonlinear mapping and reduction.
    Non-negative tensor decomposition keeps one-bit sign information.}
    \label{fig:slicing}
\end{figure}

\begin{figure}[t]
    \centering
    \includegraphics[width=\linewidth]{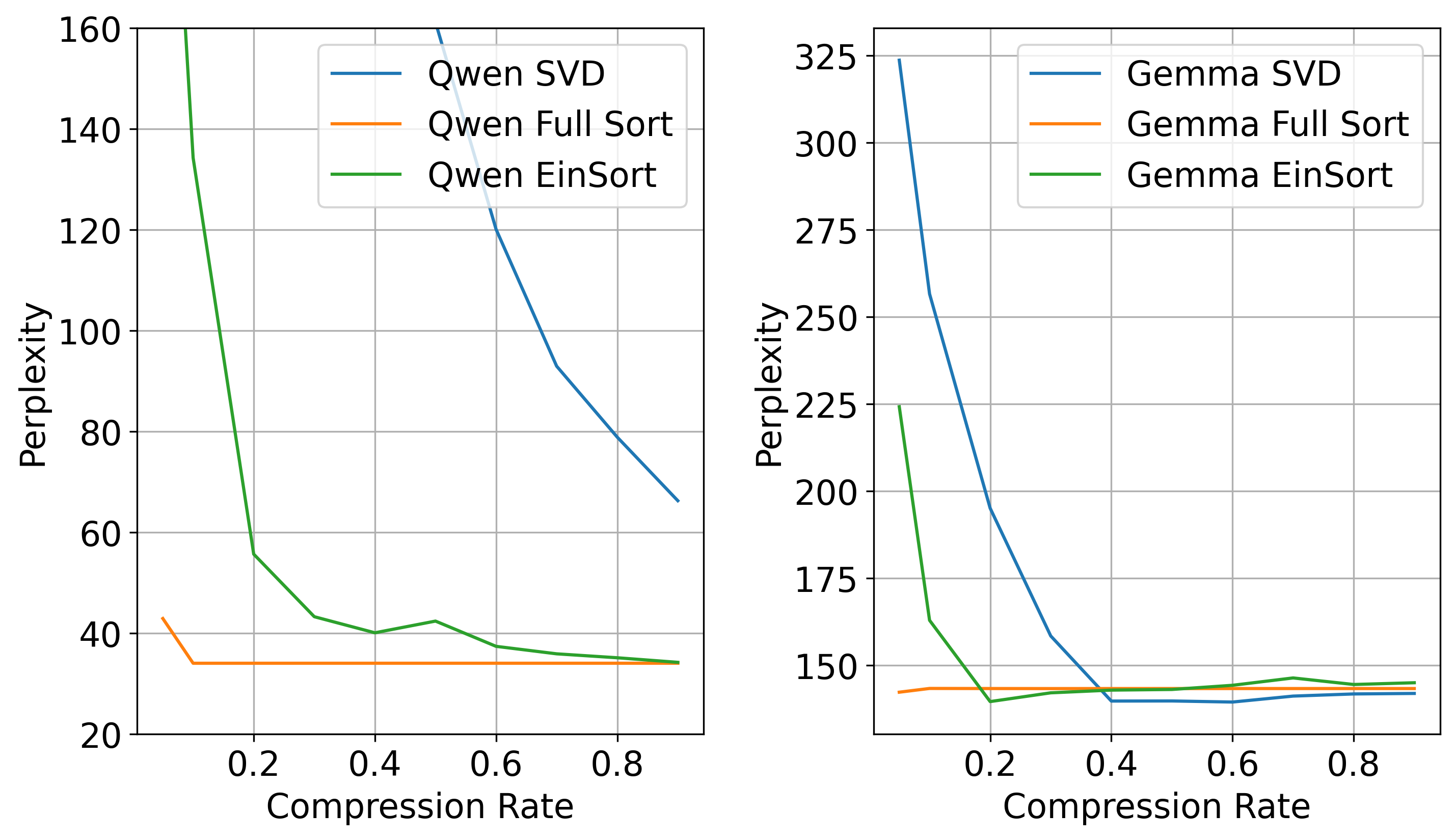}
    \caption{PPL for Qwen3-0.6B and Gemma3-4B models.}
    \label{fig:wt2}
\end{figure}

\begin{figure}[t]
    \centering
    \includegraphics[width=0.95\linewidth]{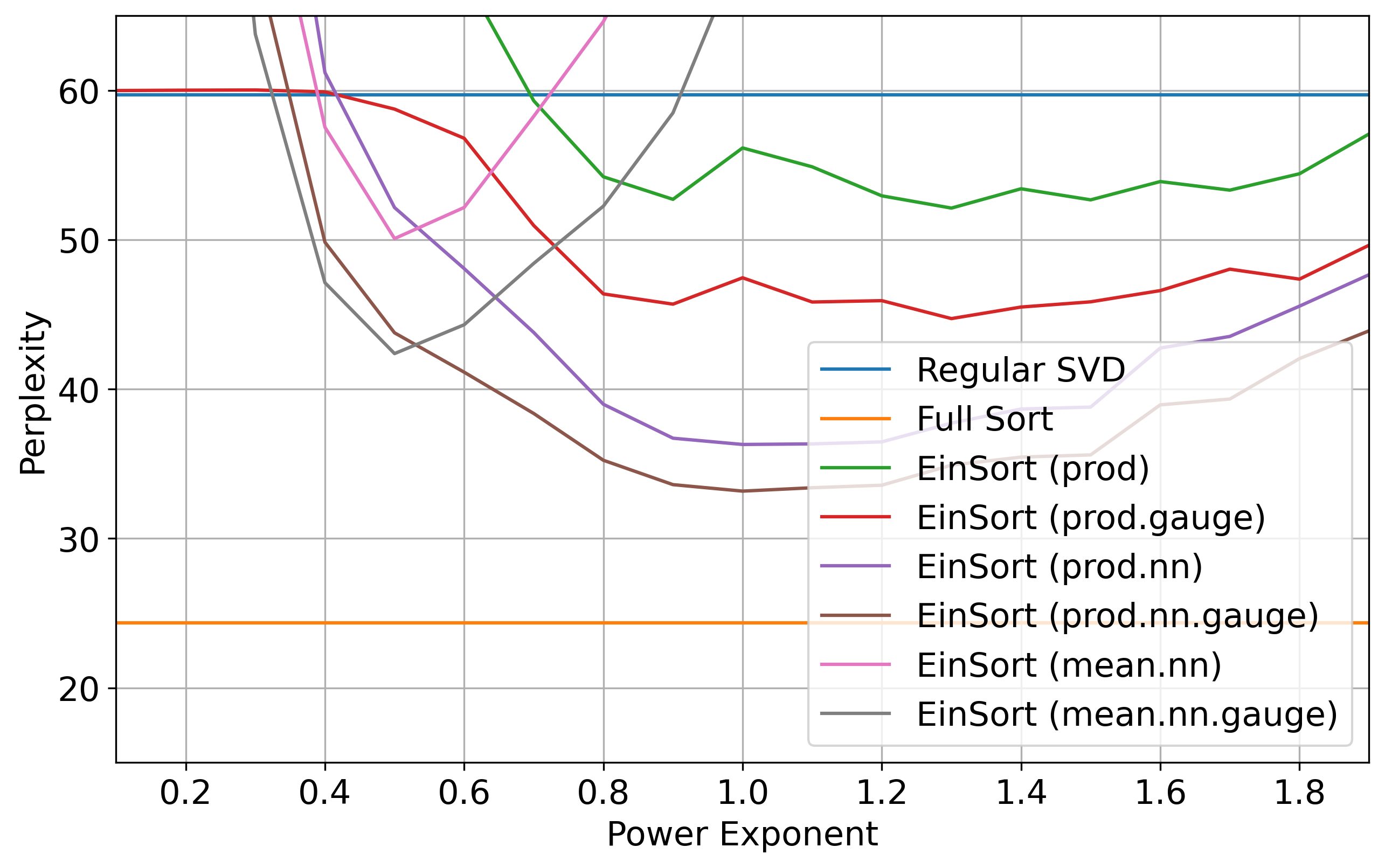}
    \caption{PPL vs.\ exponent for Qwen3-1.7B at 50\% compression.
    Gauge fixing and non-negative tensorization improve the accuracy.
    }
    \label{fig:pow}
\end{figure}

\begin{table}[t]
    \centering
    \caption{Reduction comparisons at the best power exponent.}
    \label{tab:red}
    \renewcommand{\tabcolsep}{3.5pt}
    \begin{tabular}{ccc cccc}
    \toprule
       Reduction  & mean & max & min & median & std & prod \\
       \midrule
       PPL & 42.38 & 40.25 & 52.70 & 44.21
       & 46.79 & \textbf{33.17}
         \\
       pow(p) & 0.5 & 0.5 & 0.7 & 0.5 &
     0.6 & 1.0
       \\
         \bottomrule
    \end{tabular}
\end{table}

\begin{table}[t]
    \centering
    \caption{Tensor network topology comparison (Qwen3-0.6B).}
    \label{tab:tn}
    \begin{tabular}{cc c c c c}
    \toprule
     Compression & SVD & TT & TR & CP & TD \\
      \midrule
     90\% & 47.38   & 27.04 & 36.94 & \textbf{26.88} & 27.04
      \\
      80\% & 53.24 & 29.81 & 36.95 & \textbf{27.99} & 29.81
      \\
      \bottomrule
    \end{tabular}
\end{table}

\begin{table}[t]
    \centering
    \caption{Runtime analysis for Qwen3-0.6B decoding throughput relative to the original LLM without KV cache decomposition.}
    \label{tab:runtime}
    \begin{tabular}{cc c c c c}
    \toprule
     Device & SVD & TT & TR & CP & TD \\
      \midrule
     CPU & 78.7\%   & 62.0\% & 62.7\% & 3.3\% & 32.5\%
      \\
      GPU & 79.6\% & 61.9\% & 55.6\% & 3.7\% & 28.6\%
      \\
      \bottomrule
    \end{tabular}
\end{table}

\section{Experiments}

We evaluate some LLMs, including Qwen3~\citep{yang2025qwen3} and Gemma3~\citep{team2025gemma} on the witkitext-2 (WT2)~\citep{merity2016pointer} benchmark.
See Appendix~\ref{sec:model} for details of LLM models, and Appendices~\ref{sec:dataset}/\ref{sec:library} for datasets and libraries we use.
We focus on KV cache decomposition to reduce memory footprint for decoding.
We partition the WT2 dataset into 64-token segments for prefilling, and then we decompose the KV cache for decoding up to 64 new tokens.
For sorting slice, we use KV head axis for reduction.
For Qwen3 models having 8 heads, the permutation memory can be at least 8-fold lower than the full sorting.
For our case, the required amount of bits is just about 1.4~bit per weights to record the shared permutation.

Fig.~\ref{fig:wt2} shows the perplexity (PPL) for Qwen3-0.6B and Gemma3-4B-it models when KV cache is compressed.
We observe that full sorting achieves nearly un-compressed performance as expected.
The regular SVD without sorting can rapidly degrade, 
whereas EinSort can achieve better PPL.
We also observe that Gemma3 has higher tolerance while the base PPL is worse than Qwen3.

Fig.~\ref{fig:pow} shows the impact of nonlinear power scaling.
We observe that the exponent near $1$ is best for product reduction, implying that an absolute mapping is sufficient without exponent.
However, we see that the mean reduction has a best exponent around $0.5$.
It is also confirmed that the gauge fixing can improve the accuracy because parameter redundancy can be eliminated.
More importantly, non-negative tensorization is effective to further improve the performance at a cost of 1-bit additional memory.
Table~\ref{tab:red} compares different reduction methods at the same setting.
It lists the PPL at the best power exponent for each reduction with gauge fixing and non-negative tensorization.
We observe that product reduction is best across the reduction methods. 
Other reductions had lower best exponents around $0.5$--$0.7$.

Table~\ref{tab:tn} compares different tensor networks with square-root mapping, product reduction, non-negative factorization and gauge fixing.
We fold KV cache into 3 mode tensor except for SVD.
For this setting, CP decomposition is slightly better than other topologies, and TR is the worst one.
Table~\ref{tab:runtime} lists the runtime results for the same setting. 
KV compression inevitably decreases the LLM decoding throughput.
For both Apple M1 CPU and NVIDIA A40 GPU devices, TT/TR had around 40\% speed down, whereas TD had about 70\% loss even though it has nearly the same performance as TT.
More importantly, CP is way slower because it requires iterative Khatri--Rao product for all cores.
Note that the current implementation prioritizes validating the compression principle rather than optimized inference kernels yet.

See Appendix~\ref{sec:bench} for more benchmark results, including different LLMs, GSM8K math reasoning~\cite{cobbe2021training}, TextVQA visual reasoning~\cite{singh2019towards} and LIBERO robot manipulation tasks~\cite{liu2023libero}.

\section{Conclusion}

We proposed a new EinSort framework, which employs index ordering for tensor decomposition to discover implicit low-rank structure for LLM compression.
We revealed that sorting is a powerful tool to reduce the tensor rank.
We then introduced a simple sorting mechanism and
demonstrated a potential advantage on some  models.

\section*{Impact Statement}

This paper's goal is to advance the field of machine
learning. 
There are many potential societal consequences of our work, none of which we feel must be specifically highlighted.

\bibliography{ref}

\begin{thebibliography}{103}
\providecommand{\natexlab}[1]{#1}
\providecommand{\url}[1]{\texttt{#1}}
\expandafter\ifx\csname urlstyle\endcsname\relax
  \providecommand{\doi}[1]{doi: #1}\else
  \providecommand{\doi}{doi: \begingroup \urlstyle{rm}\Url}\fi

\bibitem[Abdin et~al.(2024)Abdin, Aneja, Behl, Bubeck, Eldan, Gunasekar,
  Harrison, Hewett, Javaheripi, Kauffmann, et~al.]{abdin2024phi}
Abdin, M., Aneja, J., Behl, H., Bubeck, S., Eldan, R., Gunasekar, S., Harrison,
  M., Hewett, R.~J., Javaheripi, M., Kauffmann, P., et~al.
\newblock Phi-4 technical report.
\newblock \emph{arXiv preprint arXiv:2412.08905}, 2024.

\bibitem[Abouelenin et~al.(2025)Abouelenin, Ashfaq, Atkinson, Awadalla, Bach,
  Bao, Benhaim, Cai, Chaudhary, Chen, et~al.]{abouelenin2025phi}
Abouelenin, A., Ashfaq, A., Atkinson, A., Awadalla, H., Bach, N., Bao, J.,
  Benhaim, A., Cai, M., Chaudhary, V., Chen, C., et~al.
\newblock Phi-4-mini technical report: Compact yet powerful multimodal language
  models via mixture-of-{LoRAs}.
\newblock \emph{arXiv preprint arXiv:2503.01743}, 2025.

\bibitem[Abronin et~al.(2024)Abronin, Naumov, Mazur, Bystrov, Tsarova,
  Melnikov, Dolgov, Brasher, and Perelshein]{abronin2024tqcompressor}
Abronin, V., Naumov, A., Mazur, D., Bystrov, D., Tsarova, K., Melnikov, A.,
  Dolgov, S., Brasher, R., and Perelshein, M.
\newblock {TQCompressor}: improving tensor decomposition methods in neural
  networks via permutations.
\newblock In \emph{2024 IEEE 7th International Conference on Multimedia
  Information Processing and Retrieval (MIPR)}, pp.\  503--506. IEEE, 2024.

\bibitem[Aneja et~al.(2026)Aneja, Harrison, Joshi, LaBonte, Langford, and
  Salinas]{aneja2026phi}
Aneja, J., Harrison, M., Joshi, N., LaBonte, T., Langford, J., and Salinas, E.
\newblock Phi-4-reasoning-vision-{15B} technical report.
\newblock \emph{arXiv preprint arXiv:2603.03975}, 2026.

\bibitem[Arai \& Ichikawa(2025)Arai and Ichikawa]{arai2025quantization}
Arai, Y. and Ichikawa, Y.
\newblock Quantization error propagation: Revisiting layer-wise post-training
  quantization.
\newblock \emph{arXiv preprint arXiv:2504.09629}, 2025.

\bibitem[Assran et~al.(2025)Assran, Bardes, Fan, Garrido, Howes, Muckley,
  Rizvi, Roberts, Sinha, Zholus, et~al.]{assran2025v}
Assran, M., Bardes, A., Fan, D., Garrido, Q., Howes, R., Muckley, M., Rizvi,
  A., Roberts, C., Sinha, K., Zholus, A., et~al.
\newblock {V-JEPA 2}: Self-supervised video models enable understanding,
  prediction and planning.
\newblock \emph{arXiv preprint arXiv:2506.09985}, 2025.

\bibitem[Bai et~al.(2024{\natexlab{a}})Bai, Chai, Ling, Wang, Lu, Zhang, Shi,
  Yu, Zhu, Zhang, et~al.]{bai2024beyond}
Bai, G., Chai, Z., Ling, C., Wang, S., Lu, J., Zhang, N., Shi, T., Yu, Z., Zhu,
  M., Zhang, Y., et~al.
\newblock Beyond efficiency: A systematic survey of resource-efficient large
  language models.
\newblock \emph{arXiv preprint arXiv:2401.00625}, 2024{\natexlab{a}}.

\bibitem[Bai et~al.(2024{\natexlab{b}})Bai, Li, Ling, Kim, and
  Zhao]{bai2024sparsellm}
Bai, G., Li, Y., Ling, C., Kim, K., and Zhao, L.
\newblock {SparseLLM}: Towards global pruning for pre-trained language models.
\newblock \emph{arXiv preprint arXiv:2402.17946}, 2024{\natexlab{b}}.

\bibitem[Bai et~al.(2025)Bai, Cai, Chen, Chen, Chen, Cheng, Deng, Ding, Gao,
  Ge, et~al.]{bai2025qwen3}
Bai, S., Cai, Y., Chen, R., Chen, K., Chen, X., Cheng, Z., Deng, L., Ding, W.,
  Gao, C., Ge, C., et~al.
\newblock {Qwen3-VL} technical report.
\newblock \emph{arXiv preprint arXiv:2511.21631}, 2025.

\bibitem[Bershatsky et~al.(2024)Bershatsky, Cherniuk, Daulbaev, Mikhalev, and
  Oseledets]{bershatsky2024lotr}
Bershatsky, D., Cherniuk, D., Daulbaev, T., Mikhalev, A., and Oseledets, I.
\newblock {LoTR}: Low tensor rank weight adaptation.
\newblock \emph{arXiv preprint arXiv:2402.01376}, 2024.

\bibitem[Beyer et~al.(2024)Beyer, Steiner, Pinto, Kolesnikov, Wang, Salz,
  Neumann, Alabdulmohsin, Tschannen, Bugliarello, et~al.]{beyer2024paligemma}
Beyer, L., Steiner, A., Pinto, A.~S., Kolesnikov, A., Wang, X., Salz, D.,
  Neumann, M., Alabdulmohsin, I., Tschannen, M., Bugliarello, E., et~al.
\newblock {PaliGemma}: A versatile {3B} {VLM} for transfer.
\newblock \emph{arXiv preprint arXiv:2407.07726}, 2024.

\bibitem[Brayton et~al.(1984)Brayton, Hachtel, McMullen, and
  Sangiovanni-Vincentelli]{brayton1984logic}
Brayton, R.~K., Hachtel, G.~D., McMullen, C., and Sangiovanni-Vincentelli, A.
\newblock \emph{Logic minimization algorithms for {VLSI} synthesis}, volume~2.
\newblock Springer Science \& Business Media, 1984.

\bibitem[Cai et~al.(2025)Cai, Zhang, Gao, Liu, Li, Liu, Lu, Xiong, Dong, Hu,
  and Xiao]{cai2025pyramidkv}
Cai, Z., Zhang, Y., Gao, B., Liu, Y., Li, Y., Liu, T., Lu, K., Xiong, W., Dong,
  Y., Hu, J., and Xiao, W.
\newblock Pyramid{KV}: Dynamic {KV} cache compression based on pyramidal
  information funneling.
\newblock In \emph{Second Conference on Language Modeling}, 2025.
\newblock URL \url{https://openreview.net/forum?id=ayi7qezU87}.

\bibitem[Chang et~al.(2024)Chang, Lin, Lin, Chen, Hu, Wang, Huang, Ceze,
  Abdelfattah, and Wu]{chang2024palu}
Chang, C.-C., Lin, W.-C., Lin, C.-Y., Chen, C.-Y., Hu, Y.-F., Wang, P.-S.,
  Huang, N.-C., Ceze, L., Abdelfattah, M.~S., and Wu, K.-C.
\newblock Palu: Compressing {KV}-cache with low-rank projection.
\newblock \emph{arXiv preprint arXiv:2407.21118}, 2024.

\bibitem[Chavan et~al.(2023)Chavan, Liu, Gupta, Xing, and Shen]{chavan2023one}
Chavan, A., Liu, Z., Gupta, D., Xing, E., and Shen, Z.
\newblock One-for-all: Generalized {LoRA} for parameter-efficient fine-tuning.
\newblock \emph{arXiv preprint arXiv:2306.07967}, 2023.

\bibitem[Chen et~al.(2024{\natexlab{a}})Chen, Liu, Wang, Wang, Brand, Wang, and
  Koike-Akino]{chen2024superlora}
Chen, X., Liu, J., Wang, Y., Wang, P., Brand, M., Wang, G., and Koike-Akino, T.
\newblock {SuperLoRA}: Parameter-efficient unified adaptation for large vision
  models.
\newblock In \emph{Proceedings of the IEEE/CVF Conference on Computer Vision
  and Pattern Recognition}, pp.\  8050--8055, 2024{\natexlab{a}}.

\bibitem[Chen et~al.(2024{\natexlab{b}})Chen, Dangovski, Loh, Dugan, Luo, and
  Solja{\v{c}}i{\'c}]{chen2024quanta}
Chen, Z., Dangovski, R., Loh, C., Dugan, O., Luo, D., and Solja{\v{c}}i{\'c},
  M.
\newblock {QuanTA}: Efficient high-rank fine-tuning of {LLMs} with
  quantum-informed tensor adaptation.
\newblock \emph{Advances in Neural Information Processing Systems},
  37:\penalty0 92210--92245, 2024{\natexlab{b}}.

\bibitem[Cobbe et~al.(2021)Cobbe, Kosaraju, Bavarian, Chen, Jun, Kaiser,
  Plappert, Tworek, Hilton, Nakano, et~al.]{cobbe2021training}
Cobbe, K., Kosaraju, V., Bavarian, M., Chen, M., Jun, H., Kaiser, L., Plappert,
  M., Tworek, J., Hilton, J., Nakano, R., et~al.
\newblock Training verifiers to solve math word problems.
\newblock \emph{URL https://arxiv. org/abs/2110.14168}, 9, 2021.

\bibitem[David \& Nagaraja(2004)David and Nagaraja]{david2004order}
David, H.~A. and Nagaraja, H.~N.
\newblock \emph{Order statistics}.
\newblock John Wiley \& Sons, 2004.

\bibitem[Denil et~al.(2013)Denil, Shakibi, Dinh, Ranzato, and
  De~Freitas]{denil2013predicting}
Denil, M., Shakibi, B., Dinh, L., Ranzato, M., and De~Freitas, N.
\newblock Predicting parameters in deep learning.
\newblock \emph{Advances in neural information processing systems}, 26, 2013.

\bibitem[Denton et~al.(2014)Denton, Zaremba, Bruna, LeCun, and
  Fergus]{denton2014exploiting}
Denton, E.~L., Zaremba, W., Bruna, J., LeCun, Y., and Fergus, R.
\newblock Exploiting linear structure within convolutional networks for
  efficient evaluation.
\newblock \emph{Advances in neural information processing systems}, 27, 2014.

\bibitem[Ding et~al.(2022)Ding, Xiao, Codella, Luo, Wang, and
  Yuan]{ding2022davit}
Ding, M., Xiao, B., Codella, N., Luo, P., Wang, J., and Yuan, L.
\newblock {DaViT}: Dual attention vision transformers.
\newblock In \emph{European conference on computer vision}, pp.\  74--92.
  Springer, 2022.

\bibitem[Dong et~al.(2017)Dong, Chen, and Pan]{dong2017learning}
Dong, X., Chen, S., and Pan, S.
\newblock Learning to prune deep neural networks via layer-wise optimal brain
  surgeon.
\newblock \emph{Advances in neural information processing systems}, 30, 2017.

\bibitem[Dong et~al.(2019)Dong, Yao, Gholami, Mahoney, and
  Keutzer]{dong2019hawq}
Dong, Z., Yao, Z., Gholami, A., Mahoney, M.~W., and Keutzer, K.
\newblock {HAWQ}: Hessian aware quantization of neural networks with
  mixed-precision.
\newblock In \emph{Proceedings of the IEEE/CVF International Conference on
  Computer Vision}, pp.\  293--302, 2019.

\bibitem[Edalati et~al.(2022)Edalati, Tahaei, Kobyzev, Nia, Clark, and
  Rezagholizadeh]{edalati2022krona}
Edalati, A., Tahaei, M., Kobyzev, I., Nia, V.~P., Clark, J.~J., and
  Rezagholizadeh, M.
\newblock {KronA}: Parameter efficient tuning with kronecker adapter.
\newblock \emph{arXiv preprint arXiv:2212.10650}, 2022.

\bibitem[Evenbly(2018)]{evenbly2018gauge}
Evenbly, G.
\newblock Gauge fixing, canonical forms, and optimal truncations in tensor
  networks with closed loops.
\newblock \emph{Physical Review B}, 98\penalty0 (8):\penalty0 085155, 2018.

\bibitem[Frantar \& Alistarh(2023)Frantar and Alistarh]{frantar2023sparsegpt}
Frantar, E. and Alistarh, D.
\newblock {SparseGPT}: Massive language models can be accurately pruned in
  one-shot.
\newblock In \emph{International Conference on Machine Learning}, pp.\
  10323--10337. PMLR, 2023.

\bibitem[Frantar et~al.(2022)Frantar, Ashkboos, Hoefler, and
  Alistarh]{frantar2022gptq}
Frantar, E., Ashkboos, S., Hoefler, T., and Alistarh, D.
\newblock {GPTQ}: Accurate post-training quantization for generative
  pre-trained transformers.
\newblock \emph{arXiv preprint arXiv:2210.17323}, 2022.

\bibitem[Fu et~al.(2023)Fu, Arora, Grogan, Johnson, Eyuboglu, Thomas, Spector,
  Poli, Rudra, and R{\'e}]{fu2023monarch}
Fu, D., Arora, S., Grogan, J., Johnson, I., Eyuboglu, E.~S., Thomas, A.,
  Spector, B., Poli, M., Rudra, A., and R{\'e}, C.
\newblock Monarch mixer: A simple sub-quadratic {GEMM}-based architecture.
\newblock \emph{Advances in Neural Information Processing Systems},
  36:\penalty0 77546--77603, 2023.

\bibitem[{Gemma Team} et~al.(2025){Gemma Team}, Kamath, Ferret, Pathak,
  Vieillard, Merhej, Perrin, Matejovicova, Ram{\'e}, Rivi{\`e}re,
  et~al.]{team2025gemma}
{Gemma Team}, Kamath, A., Ferret, J., Pathak, S., Vieillard, N., Merhej, R.,
  Perrin, S., Matejovicova, T., Ram{\'e}, A., Rivi{\`e}re, M., et~al.
\newblock Gemma 3 technical report.
\newblock \emph{arXiv preprint arXiv:2503.19786}, 2025.

\bibitem[Grover et~al.(2019)Grover, Wang, Zweig, and
  Ermon]{grover2019stochastic}
Grover, A., Wang, E., Zweig, A., and Ermon, S.
\newblock Stochastic optimization of sorting networks via continuous
  relaxations.
\newblock \emph{arXiv preprint arXiv:1903.08850}, 2019.

\bibitem[Haider et~al.(2024)Haider, Perez-Becker, Portet, Madan, Garg, Ashfaq,
  Majercak, Wen, Kim, Yang, et~al.]{haider2024phi}
Haider, E., Perez-Becker, D., Portet, T., Madan, P., Garg, A., Ashfaq, A.,
  Majercak, D., Wen, W., Kim, D., Yang, Z., et~al.
\newblock Phi-3 safety post-training: Aligning language models with a
  ``break-fix'' cycle.
\newblock \emph{arXiv preprint arXiv:2407.13833}, 2024.

\bibitem[Hassibi et~al.(1993)Hassibi, Stork, and Wolff]{hassibi1993optimal}
Hassibi, B., Stork, D., and Wolff, G.
\newblock Optimal brain surgeon: Extensions and performance comparisons.
\newblock \emph{Advances in neural information processing systems}, 6, 1993.

\bibitem[Hayou et~al.(2024)Hayou, Ghosh, and Yu]{hayou2024lora+}
Hayou, S., Ghosh, N., and Yu, B.
\newblock {LoRA}+: Efficient low rank adaptation of large models.
\newblock In \emph{International Conference on Machine Learning}, 2024.

\bibitem[Hsieh et~al.(2023)Hsieh, Li, Yeh, Nakhost, Fujii, Ratner, Krishna,
  Lee, and Pfister]{hsieh2023distilling}
Hsieh, C.-Y., Li, C.-L., Yeh, C.-K., Nakhost, H., Fujii, Y., Ratner, A.,
  Krishna, R., Lee, C.-Y., and Pfister, T.
\newblock Distilling step-by-step! outperforming larger language models with
  less training data and smaller model sizes.
\newblock \emph{arXiv preprint arXiv:2305.02301}, 2023.

\bibitem[Hu et~al.(2021)Hu, Wallis, Allen-Zhu, Li, Wang, Wang, Chen,
  et~al.]{hu2021lora}
Hu, E.~J., Wallis, P., Allen-Zhu, Z., Li, Y., Wang, S., Wang, L., Chen, W.,
  et~al.
\newblock {LoRA}: Low-rank adaptation of large language models.
\newblock In \emph{International Conference on Learning Representations}, 2021.

\bibitem[Huggins et~al.(2019)Huggins, Patil, Mitchell, Whaley, and
  Stoudenmire]{huggins2019towards}
Huggins, W., Patil, P., Mitchell, B., Whaley, K.~B., and Stoudenmire, E.~M.
\newblock Towards quantum machine learning with tensor networks.
\newblock \emph{Quantum Science and technology}, 4\penalty0 (2):\penalty0
  024001, 2019.

\bibitem[Hwang et~al.(2024)Hwang, Park, Lee, Yang, and Maeng]{hwang2024pc}
Hwang, I., Park, H., Lee, Y., Yang, J., and Maeng, S.
\newblock {PC-LoRA}: Low-rank adaptation for progressive model compression with
  knowledge distillation.
\newblock \emph{arXiv preprint arXiv:2406.09117}, 2024.

\bibitem[Intelligence et~al.(2025)Intelligence, Black, Brown, Darpinian,
  Dhabalia, Driess, Esmail, Equi, Finn, Fusai, et~al.]{intelligence2025pi05}
Intelligence, P., Black, K., Brown, N., Darpinian, J., Dhabalia, K., Driess,
  D., Esmail, A., Equi, M., Finn, C., Fusai, N., et~al.
\newblock $\pi_{0.5}$: a vision-language-action model with open-world
  generalization.
\newblock \emph{arXiv preprint arXiv:2504.16054}, 2025.

\bibitem[Jaderberg et~al.(2014)Jaderberg, Vedaldi, and
  Zisserman]{jaderberg2014speeding}
Jaderberg, M., Vedaldi, A., and Zisserman, A.
\newblock Speeding up convolutional neural networks with low rank expansions.
\newblock \emph{arXiv preprint arXiv:1405.3866}, 2014.

\bibitem[Katz et~al.(2024)Katz, Bommarito, Gao, and Arredondo]{katz2024gpt}
Katz, D.~M., Bommarito, M.~J., Gao, S., and Arredondo, P.
\newblock {GPT}-4 passes the bar exam.
\newblock \emph{Philosophical Transactions of the Royal Society A},
  382\penalty0 (2270):\penalty0 20230254, 2024.

\bibitem[Koike-Akino et~al.(2025{\natexlab{a}})Koike-Akino, Liu, and
  Wang]{koike2025mu}
Koike-Akino, T., Liu, J., and Wang, Y.
\newblock $\mu$-moe: Test-time pruning as micro-grained mixture-of-experts.
\newblock \emph{arXiv preprint arXiv:2505.18451}, 2025{\natexlab{a}}.

\bibitem[Koike-Akino et~al.(2025{\natexlab{b}})Koike-Akino, Tonin, Wu, Wu,
  Candogan, and Cevher]{koike2025quantum}
Koike-Akino, T., Tonin, F., Wu, Y., Wu, F.~Z., Candogan, L.~N., and Cevher, V.
\newblock {Quantum-PEFT}: Ultra parameter-efficient fine-tuning.
\newblock \emph{arXiv preprint arXiv:2503.05431}, 2025{\natexlab{b}}.

\bibitem[Koike-Akino et~al.(2026{\natexlab{a}})Koike-Akino, Chen, Liu, Wang,
  Wang, and Brand]{koike2026latentllm}
Koike-Akino, T., Chen, X., Liu, J., Wang, Y., Wang, P.~P., and Brand, M.
\newblock {LatentLLM}: Activation-aware transform to multi-head latent
  attention.
\newblock In \emph{Proceedings of the AAAI Conference on Artificial
  Intelligence}, volume~40, pp.\  22644--22652, 2026{\natexlab{a}}.

\bibitem[Koike-Akino et~al.(2026{\natexlab{b}})Koike-Akino, Liu, and
  Wang]{koike2026ttq}
Koike-Akino, T., Liu, J., and Wang, Y.
\newblock {TTQ}: Activation-aware test-time quantization to accelerate {LLM}
  inference on the fly.
\newblock \emph{arXiv preprint arXiv:2603.19296}, 2026{\natexlab{b}}.

\bibitem[Lebedev et~al.(2014)Lebedev, Ganin, Rakhuba, Oseledets, and
  Lempitsky]{lebedev2014speeding}
Lebedev, V., Ganin, Y., Rakhuba, M., Oseledets, I., and Lempitsky, V.
\newblock Speeding-up convolutional neural networks using fine-tuned
  {CP}-decomposition.
\newblock \emph{arXiv preprint arXiv:1412.6553}, 2014.

\bibitem[LeCun et~al.(1989)LeCun, Denker, and Solla]{lecun1989optimal}
LeCun, Y., Denker, J., and Solla, S.
\newblock Optimal brain damage.
\newblock \emph{Advances in neural information processing systems}, 2, 1989.

\bibitem[Lehmer(1960)]{lehmer1960teaching}
Lehmer, D.~H.
\newblock Teaching combinatorial tricks to a computer.
\newblock In \emph{Proceedings of Symposia in Applied Mathematics}, pp.\
  179--193. American Mathematical Society, 1960.

\bibitem[Li et~al.(1998)Li, Tromp, and Vit{\'a}nyi]{li1998reversible}
Li, M., Tromp, J., and Vit{\'a}nyi, P.
\newblock Reversible simulation of irreversible computation.
\newblock \emph{Physica D: Nonlinear Phenomena}, 120\penalty0 (1-2):\penalty0
  168--176, 1998.

\bibitem[Li et~al.(2026)Li, Lee, Yin, and Panda]{li2026optimal}
Li, Y., Lee, D., Yin, R., and Panda, P.
\newblock Optimal brain decomposition for accurate {LLM} low-rank
  approximation.
\newblock \emph{arXiv preprint arXiv:2604.00821}, 2026.

\bibitem[Lin et~al.(2024{\natexlab{a}})Lin, Tang, Ye, Cui, Zhu, Jin, Zhang,
  Ning, and Yuan]{lin2024moe}
Lin, B., Tang, Z., Ye, Y., Cui, J., Zhu, B., Jin, P., Zhang, J., Ning, M., and
  Yuan, L.
\newblock {MoE-LlaVa}: Mixture of experts for large vision-language models.
\newblock \emph{arXiv preprint arXiv:2401.15947}, 2024{\natexlab{a}}.

\bibitem[Lin et~al.(2023)Lin, Tang, Tang, Yang, Dang, and Han]{lin2023awq}
Lin, J., Tang, J., Tang, H., Yang, S., Dang, X., and Han, S.
\newblock {AWQ}: Activation-aware weight quantization for {LLM} compression and
  acceleration.
\newblock \emph{arXiv preprint arXiv:2306.00978}, 2023.

\bibitem[Lin et~al.(2024{\natexlab{b}})Lin, Tang, Tang, Yang, Chen, Wang, Xiao,
  Dang, Gan, and Han]{lin2024awq}
Lin, J., Tang, J., Tang, H., Yang, S., Chen, W.-M., Wang, W.-C., Xiao, G.,
  Dang, X., Gan, C., and Han, S.
\newblock {AWQ}: Activation-aware weight quantization for on-device {LLM}
  compression and acceleration.
\newblock \emph{Proceedings of Machine Learning and Systems}, 6:\penalty0
  87--100, 2024{\natexlab{b}}.

\bibitem[Liu et~al.(2024{\natexlab{a}})Liu, Feng, Xue, Wang, Wu, Lu, Zhao,
  Deng, Zhang, Ruan, et~al.]{liu2024deepseek}
Liu, A., Feng, B., Xue, B., Wang, B., Wu, B., Lu, C., Zhao, C., Deng, C.,
  Zhang, C., Ruan, C., et~al.
\newblock {DeepSeek}-v3 technical report.
\newblock \emph{arXiv preprint arXiv:2412.19437}, 2024{\natexlab{a}}.

\bibitem[Liu et~al.(2023{\natexlab{a}})Liu, Zhu, Gao, Feng, Liu, Zhu, and
  Stone]{liu2023libero}
Liu, B., Zhu, Y., Gao, C., Feng, Y., Liu, Q., Zhu, Y., and Stone, P.
\newblock {LIBERO}: Benchmarking knowledge transfer for lifelong robot
  learning.
\newblock \emph{Advances in Neural Information Processing Systems},
  36:\penalty0 44776--44791, 2023{\natexlab{a}}.

\bibitem[Liu et~al.(2023{\natexlab{b}})Liu, Koike-Akino, Wang, Brand, Wang, and
  Parsons]{liu2023loda}
Liu, J., Koike-Akino, T., Wang, P., Brand, M., Wang, Y., and Parsons, K.
\newblock {LoDA}: Low-dimensional adaptation of large language models.
\newblock In \emph{NeurIPS’23 Workshop on on Efficient Natural Language and
  Speech Processing}, 2023{\natexlab{b}}.

\bibitem[Liu et~al.(2025)Liu, Koike-Akino, Wang, Mansour, and
  Brand]{liu2025awp}
Liu, J., Koike-Akino, T., Wang, Y., Mansour, H., and Brand, M.
\newblock {AWP}: Activation-aware weight pruning and quantization with
  projected gradient descent.
\newblock \emph{arXiv preprint arXiv:2506.10205}, 2025.

\bibitem[Liu et~al.(2024{\natexlab{b}})Liu, Yuan, Jin, Zhong, Xu, Braverman,
  Chen, and Hu]{pmlr-v235-liu24bz}
Liu, Z., Yuan, J., Jin, H., Zhong, S., Xu, Z., Braverman, V., Chen, B., and Hu,
  X.
\newblock {KIVI}: A tuning-free asymmetric 2bit quantization for {KV} cache.
\newblock In Salakhutdinov, R., Kolter, Z., Heller, K., Weller, A., Oliver, N.,
  Scarlett, J., and Berkenkamp, F. (eds.), \emph{Proceedings of the 41st
  International Conference on Machine Learning}, volume 235 of
  \emph{Proceedings of Machine Learning Research}, pp.\  32332--32344. PMLR,
  21--27 Jul 2024{\natexlab{b}}.
\newblock URL \url{https://proceedings.mlr.press/v235/liu24bz.html}.

\bibitem[Luo et~al.(2024)Luo, Liu, Yu, Ren, and Bai]{luo2024adaptive}
Luo, S., Liu, M., Yu, Y., Ren, S., and Bai, Y.
\newblock An adaptive tensor-train decomposition approach for efficient deep
  neural network compression.
\newblock \emph{arXiv preprint arXiv:2408.01534}, 2024.

\bibitem[McCluskey(1956)]{mccluskey1956minimization}
McCluskey, E.~J.
\newblock Minimization of {Boolean} functions.
\newblock \emph{The Bell System Technical Journal}, 35\penalty0 (6):\penalty0
  1417--1444, 1956.

\bibitem[Mena et~al.(2018)Mena, Belanger, Linderman, and
  Snoek]{mena2018learning}
Mena, G., Belanger, D., Linderman, S., and Snoek, J.
\newblock Learning latent permutations with {Gumbel}-{Sinkhorn} networks.
\newblock \emph{arXiv preprint arXiv:1802.08665}, 2018.

\bibitem[Merity et~al.(2016)Merity, Xiong, Bradbury, and
  Socher]{merity2016pointer}
Merity, S., Xiong, C., Bradbury, J., and Socher, R.
\newblock Pointer sentinel mixture models.
\newblock \emph{arXiv preprint arXiv:1609.07843}, 2016.

\bibitem[Novikov et~al.(2015)Novikov, Podoprikhin, Osokin, and
  Vetrov]{novikov2015tensorizing}
Novikov, A., Podoprikhin, D., Osokin, A., and Vetrov, D.~P.
\newblock Tensorizing neural networks.
\newblock \emph{Advances in neural information processing systems}, 28, 2015.

\bibitem[Or{\'u}s(2014)]{orus2014practical}
Or{\'u}s, R.
\newblock A practical introduction to tensor networks: Matrix product states
  and projected entangled pair states.
\newblock \emph{Annals of physics}, 349:\penalty0 117--158, 2014.

\bibitem[Paszke et~al.(2019)Paszke, Gross, Massa, Lerer, Bradbury, Chanan,
  Killeen, Lin, Gimelshein, Antiga, et~al.]{paszke2019pytorch}
Paszke, A., Gross, S., Massa, F., Lerer, A., Bradbury, J., Chanan, G., Killeen,
  T., Lin, Z., Gimelshein, N., Antiga, L., et~al.
\newblock {PyTorch}: An imperative style, high-performance deep learning
  library.
\newblock \emph{Advances in neural information processing systems}, 32, 2019.

\bibitem[Peebles \& Xie(2023)Peebles and Xie]{peebles2023scalable}
Peebles, W. and Xie, S.
\newblock Scalable diffusion models with transformers.
\newblock In \emph{Proceedings of the IEEE/CVF international conference on
  computer vision}, pp.\  4195--4205, 2023.

\bibitem[Phan et~al.(2020)Phan, Sobolev, Sozykin, Ermilov, Gusak,
  Tichavsk{\`y}, Glukhov, Oseledets, and Cichocki]{phan2020stable}
Phan, A.-H., Sobolev, K., Sozykin, K., Ermilov, D., Gusak, J., Tichavsk{\`y},
  P., Glukhov, V., Oseledets, I., and Cichocki, A.
\newblock Stable low-rank tensor decomposition for compression of convolutional
  neural network.
\newblock In \emph{European Conference on Computer Vision}, pp.\  522--539.
  Springer, 2020.

\bibitem[Prillo \& Eisenschlos(2020)Prillo and Eisenschlos]{prillo2020softsort}
Prillo, S. and Eisenschlos, J.
\newblock {SoftSort}: A continuous relaxation for the argsort operator.
\newblock In \emph{International Conference on Machine Learning}, pp.\
  7793--7802. PMLR, 2020.

\bibitem[Roberts et~al.(2019)Roberts, Milsted, Ganahl, Zalcman, Fontaine, Zou,
  Hidary, Vidal, and Leichenauer]{roberts2019tensornetwork}
Roberts, C., Milsted, A., Ganahl, M., Zalcman, A., Fontaine, B., Zou, Y.,
  Hidary, J., Vidal, G., and Leichenauer, S.
\newblock {TensorNetwork}: A library for physics and machine learning.
\newblock \emph{arXiv preprint arXiv:1905.01330}, 2019.

\bibitem[Saha et~al.(2024)Saha, Sagan, Srivastava, Goldsmith, and
  Pilanci]{saha2024compressing}
Saha, R., Sagan, N., Srivastava, V., Goldsmith, A., and Pilanci, M.
\newblock Compressing large language models using low rank and low precision
  decomposition.
\newblock \emph{Advances in Neural Information Processing Systems},
  37:\penalty0 88981--89018, 2024.

\bibitem[Sainath et~al.(2013)Sainath, Kingsbury, Sindhwani, Arisoy, and
  Ramabhadran]{sainath2013low}
Sainath, T.~N., Kingsbury, B., Sindhwani, V., Arisoy, E., and Ramabhadran, B.
\newblock Low-rank matrix factorization for deep neural network training with
  high-dimensional output targets.
\newblock In \emph{2013 IEEE international conference on acoustics, speech and
  signal processing}, pp.\  6655--6659. IEEE, 2013.

\bibitem[Saxena et~al.(2024)Saxena, Saha, Choudhary, and Roy]{saxena2024eigen}
Saxena, U., Saha, G., Choudhary, S., and Roy, K.
\newblock Eigen attention: Attention in low-rank space for {KV} cache
  compression.
\newblock \emph{arXiv preprint arXiv:2408.05646}, 2024.

\bibitem[Schollw{\"o}ck(2011)]{schollwock2011density}
Schollw{\"o}ck, U.
\newblock The density-matrix renormalization group in the age of matrix product
  states.
\newblock \emph{Annals of physics}, 326\penalty0 (1):\penalty0 96--192, 2011.

\bibitem[Schwartz et~al.(2020)Schwartz, Dodge, Smith, and
  Etzioni]{schwartz2020green}
Schwartz, R., Dodge, J., Smith, N.~A., and Etzioni, O.
\newblock Green {AI}.
\newblock \emph{Communications of the ACM}, 63\penalty0 (12):\penalty0 54--63,
  2020.

\bibitem[Sidiropoulos et~al.(2017)Sidiropoulos, De~Lathauwer, Fu, Huang,
  Papalexakis, and Faloutsos]{sidiropoulos2017tensor}
Sidiropoulos, N.~D., De~Lathauwer, L., Fu, X., Huang, K., Papalexakis, E.~E.,
  and Faloutsos, C.
\newblock Tensor decomposition for signal processing and machine learning.
\newblock \emph{IEEE Transactions on signal processing}, 65\penalty0
  (13):\penalty0 3551--3582, 2017.

\bibitem[Singh et~al.(2019)Singh, Natarjan, Shah, Jiang, Chen, Parikh, and
  Rohrbach]{singh2019towards}
Singh, A., Natarjan, V., Shah, M., Jiang, Y., Chen, X., Parikh, D., and
  Rohrbach, M.
\newblock Towards {VQA} models that can read.
\newblock In \emph{Proceedings of the IEEE Conference on Computer Vision and
  Pattern Recognition}, pp.\  8317--8326, 2019.

\bibitem[Sinha \& Fleuret(2026)Sinha and Fleuret]{sinha2026aa}
Sinha, A.~K. and Fleuret, F.
\newblock {AA-SVD}: Anchored and adaptive {SVD} for large language model
  compression.
\newblock \emph{arXiv preprint arXiv:2604.02119}, 2026.

\bibitem[Sun et~al.(2026)Sun, Zhang, Qi, Ren, Liu, Zhu, Sun, Jin, and
  Chen]{sun2026vla}
Sun, J., Zhang, W., Qi, Z., Ren, S., Liu, Z., Zhu, H., Sun, G., Jin, X., and
  Chen, Z.
\newblock {VLA-JEPA}: Enhancing vision-language-action model with latent world
  model.
\newblock \emph{arXiv preprint arXiv:2602.10098}, 2026.

\bibitem[Sun et~al.(2023)Sun, Liu, Bair, and Kolter]{sun2023simple}
Sun, M., Liu, Z., Bair, A., and Kolter, J.~Z.
\newblock A simple and effective pruning approach for large language models.
\newblock \emph{arXiv preprint arXiv:2306.11695}, 2023.

\bibitem[Takeshita(2007)]{takeshita2007permutation}
Takeshita, O.~Y.
\newblock Permutation polynomial interleavers: An algebraic-geometric
  perspective.
\newblock \emph{IEEE Transactions on Information Theory}, 53\penalty0
  (6):\penalty0 2116--2132, 2007.

\bibitem[Team et~al.(2024)Team, Mesnard, Hardin, Dadashi, Bhupatiraju, Pathak,
  Sifre, Rivi{\`e}re, Kale, Love, et~al.]{team2024gemma}
Team, G., Mesnard, T., Hardin, C., Dadashi, R., Bhupatiraju, S., Pathak, S.,
  Sifre, L., Rivi{\`e}re, M., Kale, M.~S., Love, J., et~al.
\newblock Gemma: Open models based on gemini research and technology.
\newblock \emph{arXiv preprint arXiv:2403.08295}, 2024.

\bibitem[Vidal(2008)]{vidal2008class}
Vidal, G.
\newblock Class of quantum many-body states that can be efficiently simulated.
\newblock \emph{Physical review letters}, 101\penalty0 (11):\penalty0 110501,
  2008.

\bibitem[Wang et~al.(2024{\natexlab{a}})Wang, Wang, Xu, Tang, Zhou, and
  Lu]{wang2024q}
Wang, C., Wang, Z., Xu, X., Tang, Y., Zhou, J., and Lu, J.
\newblock {Q-VLM}: Post-training quantization for large vision-language models.
\newblock \emph{arXiv preprint arXiv:2410.08119}, 2024{\natexlab{a}}.

\bibitem[Wang et~al.(2024{\natexlab{b}})Wang, Zheng, Wan, and
  Zhang]{wang2024svd}
Wang, X., Zheng, Y., Wan, Z., and Zhang, M.
\newblock {SVD-LLM}: Truncation-aware singular value decomposition for large
  language model compression.
\newblock \emph{arXiv preprint arXiv:2403.07378}, 2024{\natexlab{b}}.

\bibitem[Williams \& Aletras(2023)Williams and Aletras]{williams2023impact}
Williams, M. and Aletras, N.
\newblock On the impact of calibration data in post-training quantization and
  pruning.
\newblock \emph{arXiv preprint arXiv:2311.09755}, 2023.

\bibitem[Wolf(2019)]{wolf2019huggingface}
Wolf, T.
\newblock Huggingface's transformers: State-of-the-art natural language
  processing.
\newblock \emph{arXiv preprint arXiv:1910.03771}, 2019.

\bibitem[Xiao et~al.(2024{\natexlab{a}})Xiao, Wu, Xu, Dai, Hu, Lu, Zeng, Liu,
  and Yuan]{xiao2024florence}
Xiao, B., Wu, H., Xu, W., Dai, X., Hu, H., Lu, Y., Zeng, M., Liu, C., and Yuan,
  L.
\newblock Florence-2: Advancing a unified representation for a variety of
  vision tasks.
\newblock In \emph{Proceedings of the IEEE/CVF Conference on Computer Vision
  and Pattern Recognition}, pp.\  4818--4829, 2024{\natexlab{a}}.

\bibitem[Xiao et~al.(2024{\natexlab{b}})Xiao, Tian, Chen, Han, and
  Lewis]{xiao2024efficient}
Xiao, G., Tian, Y., Chen, B., Han, S., and Lewis, M.
\newblock Efficient streaming language models with attention sinks.
\newblock In \emph{The Twelfth International Conference on Learning
  Representations}, 2024{\natexlab{b}}.
\newblock URL \url{https://openreview.net/forum?id=NG7sS51zVF}.

\bibitem[Xu \& McAuley(2023)Xu and McAuley]{xu2023survey}
Xu, C. and McAuley, J.
\newblock A survey on model compression and acceleration for pretrained
  language models.
\newblock In \emph{Proceedings of the AAAI Conference on Artificial
  Intelligence}, volume~37, pp.\  10566--10575, 2023.

\bibitem[Yan et~al.(2025)Yan, Li, Zhang, Qin, Kong, Zhang, and
  Yang]{yan2025recalkvlowrankkvcache}
Yan, X., Li, Z., Zhang, T., Qin, H., Kong, L., Zhang, Y., and Yang, X.
\newblock Recalkv: Low-rank kv cache compression via head reordering and
  offline calibration, 2025.
\newblock URL \url{https://arxiv.org/abs/2505.24357}.

\bibitem[Yang et~al.(2025)Yang, Li, Yang, Zhang, Hui, Zheng, Yu, Gao, Huang,
  Lv, et~al.]{yang2025qwen3}
Yang, A., Li, A., Yang, B., Zhang, B., Hui, B., Zheng, B., Yu, B., Gao, C.,
  Huang, C., Lv, C., et~al.
\newblock Qwen3 technical report.
\newblock \emph{arXiv preprint arXiv:2505.09388}, 2025.

\bibitem[Yeh et~al.(2024)Yeh, Hsieh, Gao, Yang, Oh, and
  Gong]{yeh2024navigating}
Yeh, S.-Y., Hsieh, Y.-G., Gao, Z., Yang, B. B.~W., Oh, G., and Gong, Y.
\newblock Navigating text-to-image customization: From ly{CORIS} fine-tuning to
  model evaluation.
\newblock In \emph{International Conference on Learning Representations}, 2024.

\bibitem[Yuan et~al.(2023)Yuan, Shang, Song, Wu, Yan, and Sun]{yuan2023asvd}
Yuan, Z., Shang, Y., Song, Y., Wu, Q., Yan, Y., and Sun, G.
\newblock {ASVD}: Activation-aware singular value decomposition for compressing
  large language models.
\newblock \emph{arXiv preprint arXiv:2312.05821}, 2023.

\bibitem[Yuan et~al.(2024)Yuan, Shang, Zhou, Dong, Zhou, Xue, Wu, Li, Gu, Lee,
  et~al.]{yuan2024llm}
Yuan, Z., Shang, Y., Zhou, Y., Dong, Z., Zhou, Z., Xue, C., Wu, B., Li, Z., Gu,
  Q., Lee, Y.~J., et~al.
\newblock {LLM} inference unveiled: Survey and roofline model insights.
\newblock \emph{arXiv preprint arXiv:2402.16363}, 2024.

\bibitem[Zaheer et~al.(2017)Zaheer, Kottur, Ravanbakhsh, Poczos, Salakhutdinov,
  and Smola]{zaheer2017deep}
Zaheer, M., Kottur, S., Ravanbakhsh, S., Poczos, B., Salakhutdinov, R.~R., and
  Smola, A.~J.
\newblock Deep sets.
\newblock \emph{Advances in neural information processing systems}, 30, 2017.

\bibitem[Zandieh et~al.(2024)Zandieh, Daliri, and Han]{zandieh2024qjl}
Zandieh, A., Daliri, M., and Han, I.
\newblock {QJL}: 1-bit quantized {JL} transform for {KV} cache quantization
  with zero overhead, 2024.
\newblock URL \url{https://openreview.net/forum?id=xHPVGmLXjd}.

\bibitem[Zandieh et~al.(2026)Zandieh, Daliri, Hadian, and
  Mirrokni]{zandieh2026turboquant}
Zandieh, A., Daliri, M., Hadian, M., and Mirrokni, V.
\newblock Turboquant: Online vector quantization with near-optimal distortion
  rate.
\newblock In \emph{The Fourteenth International Conference on Learning
  Representations}, 2026.
\newblock URL \url{https://openreview.net/forum?id=tO3ASKZlok}.

\bibitem[Zhai et~al.(2023)Zhai, Mustafa, Kolesnikov, and
  Beyer]{zhai2023sigmoid}
Zhai, X., Mustafa, B., Kolesnikov, A., and Beyer, L.
\newblock Sigmoid loss for language image pre-training.
\newblock In \emph{Proceedings of the IEEE/CVF international conference on
  computer vision}, pp.\  11975--11986, 2023.

\bibitem[Zhang et~al.(2025)Zhang, Wang, Liu, Wang, Cheng, Zhang, and yelong
  shen]{zhang2025lorc}
Zhang, R., Wang, K., Liu, L., Wang, S., Cheng, H., Zhang, C., and yelong shen.
\newblock Lo{RC}: Low-rank compression for {LLM}s {KV} cache with a progressive
  compression strategy, 2025.
\newblock URL \url{https://openreview.net/forum?id=NI8AUSAc4i}.

\bibitem[Zhang et~al.(2023)Zhang, Sheng, Zhou, Chen, Zheng, Cai, Song, Tian,
  R\'{e}, Barrett, Wang, and Chen]{NEURIPS2023_6ceefa7b}
Zhang, Z., Sheng, Y., Zhou, T., Chen, T., Zheng, L., Cai, R., Song, Z., Tian,
  Y., R\'{e}, C., Barrett, C., Wang, Z.~A., and Chen, B.
\newblock H2o: Heavy-hitter oracle for efficient generative inference of large
  language models.
\newblock In Oh, A., Naumann, T., Globerson, A., Saenko, K., Hardt, M., and
  Levine, S. (eds.), \emph{Advances in Neural Information Processing Systems},
  volume~36, pp.\  34661--34710. Curran Associates, Inc., 2023.
\newblock URL
  \url{https://proceedings.neurips.cc/paper_files/paper/2023/file/6ceefa7b15572587b78ecfcebb2827f8-Paper-Conference.pdf}.

\bibitem[Zheng et~al.(2025)Zheng, Li, Wang, Liu, Kang, Feng, Zheng, Zou, Chen,
  Zeng, et~al.]{zheng2025x}
Zheng, J., Li, J., Wang, Z., Liu, D., Kang, X., Feng, Y., Zheng, Y., Zou, J.,
  Chen, Y., Zeng, J., et~al.
\newblock {X-VLA}: Soft-prompted transformer as scalable cross-embodiment
  vision-language-action model.
\newblock \emph{arXiv preprint arXiv:2510.10274}, 2025.

\bibitem[Zhu et~al.(2024)Zhu, Li, Liu, Ma, and Wang]{zhu2024survey}
Zhu, X., Li, J., Liu, Y., Ma, C., and Wang, W.
\newblock A survey on model compression for large language models.
\newblock \emph{Transactions of the Association for Computational Linguistics},
  12:\penalty0 1556--1577, 2024.

\bibitem[Zi et~al.(2023)Zi, Qi, Wang, Wang, Wong, and Zhang]{zi2023delta}
Zi, B., Qi, X., Wang, L., Wang, J., Wong, K.-F., and Zhang, L.
\newblock Delta-{LoRa}: Fine-tuning high-rank parameters with the delta of
  low-rank matrices.
\newblock \emph{arXiv preprint arXiv:2309.02411}, 2023.

\end{thebibliography}
\bibliographystyle{icml}

\newpage
\appendix
\onecolumn

\section{Related work}
\label{sec:related}

\subsection{Model compression}
The field of model compression for LLMs has aimed at mitigating the substantial computation and memory requirements~\cite{zhu2024survey, yuan2024llm}. 
Such methods primarily fall into four categories: weight quantization~\cite{lin2024awq, frantar2022gptq, wang2024q}, network pruning~\cite{lecun1989optimal, hassibi1993optimal, frantar2023sparsegpt, bai2024sparsellm},
knowledge distillation~\cite{hsieh2023distilling, hwang2024pc}, and rank reduction~\cite{yuan2023asvd, liu2024deepseek, hwang2024pc, saxena2024eigen, saha2024compressing}. 

\subsection{Low-rank compression}

Low-rank compression for neural networks builds on the empirical observation that trained weight matrices and convolutional kernels are often highly redundant and admit accurate low-rank approximations. 
Early work by \citet{denil2013predicting} showed that a large fraction of network parameters can be predicted from a small subset, motivating structured factorization approaches. 
\citet{sainath2013low} and \citet{jaderberg2014speeding} popularized the use of SVD and related factorization techniques, achieving substantial speedups with limited accuracy loss. 
This line was extended by \citet{denton2014exploiting}, who applied low-rank approximations to convolutional filters, and \citet{lebedev2014speeding, sidiropoulos2017tensor}, who introduced tensor decomposition methods such as Tucker and CP decompositions,
though care is required to avoid instability or accuracy degradation~\cite{phan2020stable}.

\subsection{KV cache reduction}
KV-cache memory is a major bottleneck in long-context LLM inference, growing linearly with sequence length, batch size, number of layers, and KV hidden dimension. Existing approaches to reduce KV-cache can be broadly categorized into token eviction, quantization, and representation compression.

\textbf{Token eviction} methods reduce memory by selectively discarding cached tokens. StreamingLLM~\citep{xiao2024efficient} preserves initial attention-sink tokens together with a sliding window of recent tokens for stable streaming decoding, while H2O~\citep{NEURIPS2023_6ceefa7b} retains a balance of recent tokens and heavy-hitter tokens that have significant contribution to attention scores. PyramidKV~\citep{cai2025pyramidkv} further improves eviction-based compression by allocating nonuniform cache budgets across layers, retaining more KV entries in lower layers and fewer in higher layers. Although effective, eviction-based approaches may discard information needed for long-range reasoning.

\textbf{Quantization} methods compress KV tensors without removing tokens. KIVI~\citep{pmlr-v235-liu24bz} proposes tuning-free asymmetric 2-bit KV-cache quantization, using per-channel quantization for keys and per-token quantization for values. QJL~\citep{zandieh2024qjl} applies a Johnson-Lindenstrauss transform followed by sign-bit quantization, eliminating the need to store scale and zero-point quantization metadata while providing an unbiased, low-distortion estimator for query-key inner products. TurboQuant~\cite{zandieh2026turboquant} extends this direction with online vector quantization, combining an MSE-oriented quantizer with QJL-based residual correction to improve inner-product estimation.

\textbf{Low-rank compression }reduces KV-cache memory by exploiting redundancy in hidden dimensions or KV projections. LoRC~\citep{zhang2025lorc} applies low-rank approximation to KV weight matrices with layer-wise sensitivity and progressive compression. ReCalKV~\citep{yan2025recalkvlowrankkvcache} introduces a post-training low-rank KV-cache compression method with tailored strategies for keys and values: Head-wise similarity-aware reordering for keys, which clusters structurally similar heads and applies grouped SVD to the key projection matrix, and Offline Calibration and Matrix Fusion for values, which calibrates the low-rank value projection
matrix using a small calibration dataset.

\subsection{Low-rank adaptation}

LoRA~\citep{hu2021lora} updates the pretrained weight matrix through the addition of a product of two low-rank matrices
with widespread adoption~\citep{zi2023delta,chavan2023one,hayou2024lora+}. 
Many variants were introduced, e.g., 
CP decomposition with LoTR~\citep{bershatsky2024lotr} and Tucker decomposition with SuperLoRA~\citep{chen2024superlora}, and nonlinear low-rank mapping with LoDA~\citep{liu2023loda}, Hadamard \citep{yeh2024navigating} and Kronecker product \citep{edalati2022krona}, and quantum tensor networks~\cite{chen2024quanta, koike2025quantum}.

\subsection{Activation-aware compression}

\paragraph{Decomposition} ASVD~\citep{yuan2023asvd} improves the low-rank decomposition by dealing with activation statistics.
It was applied to SVD-LLM~\citep{wang2024svd} and Palu~\citep{chang2024palu}.
AA-SVD~\citep{sinha2026aa} incorporates error propagation factor motivated by QEP~\cite{arai2025quantization}.
And further OBD-LLM~\cite{li2026optimal} uses gradient information as an approximated Hessian to improve ASVD.

\paragraph{Quantization} 
HAWQ~\cite{dong2019hawq} uses layer-wise quantization based on optimal brain pruning~\cite{lecun1989optimal}.
Then, GPTQ~\cite{frantar2022gptq} extends it using zero-shot calibration, while activation-aware quantization (AWQ)~\cite{lin2023awq} uses activation-dependent scaling.
AWP~\cite{liu2025awp} uses projected gradient descent for activation aware quantization and pruning based on compressed sensing framework.
QEP~\cite{arai2025quantization} mitigates error propagation across layers.

\paragraph{Pruning} 
Activation-aware pruning methods~\cite{williams2023impact} include
SparseGPT~\cite{frantar2023sparsegpt}, which uses layer-wise optimal brain surgeon~\cite{dong2017learning, hassibi1993optimal, lecun1989optimal}. 
SparseLLM~\cite{bai2024sparsellm} extends with joint module compression.
Wanda~\cite{sun2023simple} greatly simplifies the pruning mechanism, and
$\mu$-MoE~\cite{koike2025mu} applied it for online pruning

\subsection{Permutation/sorting networks}

Permutation or sorting neural networks are architectures designed to process sets or sequences for ordering elements.
Deep Sets by \citet{zaheer2017deep} establishes that any permutation-invariant function over a set can be decomposed into a sum of elementwise embeddings followed by a global function, providing a theoretical foundation for set-based learning.
In parallel, differentiable sorting and ranking networks have emerged to learn orderings explicitly, such as NeuralSort \citep{grover2019stochastic} and SoftSort \citep{prillo2020softsort}, which provide continuous relaxations of sorting operators to enable gradient-based optimization. 
A closely related approach is Gumbel--Sinkhorn network~\cite{mena2018learning}, which leverages entropic optimal transport and Gumbel noise to produce differentiable approximations of permutation matrices via Sinkhorn normalization.

\subsection{Tensor network}
Tensor network~\citep{roberts2019tensornetwork} provides a way to represent/manipulate multi-dimensional arrays of data by factorizing into a network of lower-dimensional tensors. 
Many tensor decomposition methods are used for tensor networks, including matrix product state (MPS) and tree tensor network (TTN)~\citep{huggins2019towards}, based on tensor train (TT) decomposition and Hierarchical Tucker decomposition (HT), respectively. 
More sophisticated ones used in QML include multi-scale entanglement renormalization ansatz (MERA)~\citep{vidal2008class} and projected entangled-pair states (PEPS)~\citep{orus2014practical}.
Tensorization provides efficient parameterization of DNN architecture~\citep{novikov2015tensorizing}.

\section{Einstein summation}
\label{sec:einsum}

The einsum is a powerful tool to represent diverse set of tensor networks.
Several examples including TT/TR/CP/TD/HT/PEPS as well as LoHa/LoKr~\citep{yeh2024navigating} are listed in the pseudo code below:
\begin{lstlisting}
#TT: A(di,dp), B(dp,dj,dq), C(dq,dk,dr), D(dr,dl)
X = torch.einsum("ip,pjq,qkr,rl->ijkl", A, B, C, D)

#TR: A(dm,di,dp), B(dp,dj,dq), C(dq,dk,dr), D(dr,dl,dm)
X = torch.einsum("mip,pjq,qkr,rlm->ijkl", A, B, C, D)

#CP: A(dr), B(dr,di), C(dr,dj), D(dr,dk)
X = torch.einsum("r,ri,rj,rk->ijk", A, B, C, D)

#TD: A(dp,dq,dr), B(dp,di), C(dq,dj), D(dr,dk)
X = torch.einsum("pqr,pi,qj,rk->ijk", A, B, C, D)

#HT: A(di,dp), B(dj,dq), C(dk,dr), D(dl,ds), E(dp,dq,dt), F(dr,ds,du), G(dt,du)
X = torch.einsum("ip,jq,kr,ls,pqt,rsu,tu->ijkl", A, B, C, D, E, F, G)

#PEPS: A(di,dr,dp), B(dj,dp,ds,dq), C(dk,dq,dt), D(dl,dr,du), E(dm,du,ds,dv), F(dl,dv,dt)
X = torch.einsum("irp,jpsq,kqt,lru,musv,lvt->ijklmn", A, B, C, D, E, F)

#LoHa: A(di,dp), B(dp,dj), C(di,dq), D(dq,dj)
X = torch.einsum("ip,pj,iq,qj->ij", A, B, C, D)

#LoKr: A(di,dj), B(dp,dr), C(dr,dq)
X = torch.einsum("ij,pr,rq->ipjq", A, B, C)

X = einops.rearrange("i p j q -> (i p) (j q)", X) # unfolding 4-mode to 2-mode
\end{lstlisting}
Here we use \texttt{einops}\footnote{\url{https://einops.rocks/}} for folding/unfolding. 
It gives an opportunity to optimize topologies for tensor networks by just designing the einsum equation (a string like \texttt{"ip,pj->ij"}) with properly shaped tensor cores.
Even a Monarch mixer~\cite{fu2023monarch} can be represented by a series of einsum operations without specifying permutation.
The Monarch mixer uses dilated mixing with gradually expanded respective field:
\begin{align}
    W = P_{m+1} \mathrm{diag}[W_m] \cdots \mathrm{diag}[W_2] P_2 \mathrm{diag}[W_1] P_1 \mathrm{diag}[W_0] P_0,
\end{align}
where $P_k$'s are permutations using Cooley--Tukey like butterfly architecture, and $\mathrm{diag}[W_k]$ are block diagonal weights.
It generalizes many structured matrices like Fourier transform, Walsh--Hadamard transform, etc., while achieving low complexity with high parameter efficiency.
It can be realized by the einsum chain, in a surprisingly elegant fashion as follows:
\begin{lstlisting}
# 3-stage dilated Monarch mixer 
# Assume input activation X (..., d, d, d), Mixer W (3, d, d, d, d)
X = torch.einsum("...ij,...i->...j", W[0], X) # the first local mixing
X = torch.einsum("...iaj,...ia->...ja", W[1], X) # the second dilated mixing
X = torch.einsum("iabj,...iab->...jab", W[2], X) # the last dilated mixing
\end{lstlisting}

As given in the above example, using einsum, we can construct any tensor networks by specifying einsum equation and shapes.
The pseudo code for a tensor network module using arbitrary einsum equation and shapes can be listed as follows:
\begin{lstlisting}
import opt_einsum as oe

class TensorNet(torch.nn.Module):
    def __init__(self, shapes, equation, device=None):
        super().__init__()
        self.shapes = shapes
        self.equation = equation
        self.device = device
        self.cores = torch.nn.ParameterList(
            [
                torch.nn.Parameter(torch.empty(*shape, device=self.device))
                for shape in self.shapes
            ]
        )

        # contraction order optimization 
        self.expression = oe.contract_expression(self.equation, *self.shapes)

        self.reset_parameters() # initialize cores

    def contract(self, *args, **kwargs):
        # reconstruc a tensor from cores with optimized contraction expression
        return self.expression(*self.cores, *args, **kwargs)

    def fit_als(self, target, **kwargs):
        # fit cores to a target tensor with ALS
        with torch.inference_mode():
            cores = als(self.equation, self.cores, target, **kwargs)
            for k, core in enumerate(cores):
                self.cores[k].copy_(core)

    def fit_grad(self, target, **kwargs):
        # gradient optimization
        lr = kwargs.get("lr", 1e-3)
        opt = torch.optim.AdamW(self.cores, lr=lr)
        for k in range(kwargs.get("steps", 1000)):
            opt.zero_grad()
            output = self.contract()
            loss = torch.nn.functional.mse_loss(output, target)
            loss.backward()
            opt.step()

\end{lstlisting}
Here, we use \texttt{opt\_einsum}\footnote{\url{https://optimized-einsum.readthedocs.io/en/stable/}} to optimize the contraction order.
We use \texttt{tensorly}\footnote{\url{https://tensorly.org/stable/index.html}} to initialize cores, and we can fine-tune them with gradient optimization.

In addition to the parameter reduction and flexible expressivity of tensor networks, the computational cost can be also reduced by designing the contraction order.
For example with the 4-mode TT for a site dimension of $d=32$ with a rank of $R=20$, a na\"{i}ve contraction from left to right requires $3.4\times 10^{10}$ FLOPs, whereas an optimized contraction order reduces it to $4.4\times 10^{7}$, nearly 3 orders of magnitude when using \href{https://optimized-einsum.readthedocs.io/en/stable/index.html}{\texttt{opt\_einsum}}.
The pseudo code to demonstrate it is as follows:
\begin{lstlisting}
# tensor reconstruction can be faster with contraction optimization
shape = [(32,20),(20,32,20),(20,32,20),(20,32)]
_, info = opt_einsum.contract_path("ip,pjq,qkr,rl->ijkl", *shape, shapes=True, optimize="optimal")
\end{lstlisting}
Furthermore, when the weight is decomposed in such a tensor network, we often do not need to explicitly reconstruct the weight before multiplying with the input activation.
For example, if the input activation is batched as a shape of $(64, 1024)$, the $(1024, 1024)$ weight reconstruction and the weight-input multiplication require $2.7\times 10^{12}$ FLOPs in total.
Whereas, it can be significantly reduced to $6.9\times 10^{6}$ FLOPs by optimizing the contraction order, achieving more than 5 orders of magnitude speedup, which is even faster than the weight materialization alone.
The pseudo code to show its benefit is as follows:
\begin{lstlisting}
# weight-activation multiplication can be even faster with contraction optimization
# Weight: W (1024, 1024) -> (32, 32, 32, 32); Input X (64, 1024) -> (64, 32, 32)
shape = [(32,20),(20,32,20),(20,32,20),(20,32),(64,32,32)] 
_, info = opt_einsum.contract_path("ip,pjq,qkr,rl,...ij->..kl", *shape, shapes=True, optimize="optimal")
\end{lstlisting}

\section{Rank reduction with tensor sorting}
\label{sec:sorting}

We show more empirical results suggesting that the sorting operation can greatly reduce the required rank.
Fig.~\ref{fig:spect-all} shows the tensorization loss in squared norm $\|W-\hat{W}\|^2$, where $W$ is the original weights matrix and $\hat{W}$ is a reconstructed version through a 2-mode TT (i.e., SVD).
Specifically, we analyze the eigen spectrum for pretrained weights of some LLM models, including Qwen3-0.6B, Qwen3-1.7B, Gemma3-4B, and Phi3.5-mini.
We also evaluate several sorting options, as given in the pseudo code:
\begin{lstlisting}
# Original eigen spectrum
S = torch.linalg.svdvals(W)
loss = S.square().flip(-1).cumsum(-1) 

# Full sorting
Wf, _ = torch.sort(W.view(-1)) 
Wf = Wf.reshape_as(W)

# Row-wise sorting
Wr, _ = torch.sort(W, dim=-1) 

# Row-wise group sorting
_, perm = torch.sort(W, dim=-1) # row-wise sort
perm, _ = torch.sort(perm.view(-1, groupsize), dim=-1) # revert permutation within group
Wg = torch.gather(W, dim=-1, index=perm.view_as(W))

# Row-wise sequential axis sorting
primes = sympy.factorint(W.shape[1]) # prime factorization
shape = [-1] + [p for p, e in primes.items() for _ in range(e)] # like [dout, 2,2,...,2]
Ws = W.view(*shape)
for axis in range(1, len(shape)):
    Ws, _ = torch.sort(Ws, dim=axis)
Ws = Ws.view_as(W)
\end{lstlisting}
The group sort means that we sort the row-wise elements but we do not care the ordering within each group of size $g$.
For example, we sort $8$ elements, but we do not care the ordering inside the first smallest $4$ elements or the last smallest $4$ elements.
For this case, the required bits will be reduced from $\lceil \log_2(n!) \rceil / n$ to
$\lceil \log_2(n!) /(\log_2(g!))^{n/g}\rceil/n$ bits per element.
The sequential sorting does round-robin sorting across axis.
For $m$-mode tensors with site dimension $d$, the required memory is $\lceil \log_2(d!) \rceil m/d$ bits.

As shown in Fig.~\ref{fig:spect}, increasing the rank improves the accuracy, while the improvement is slow when original weights are decomposed by SVD without sorting.
Notably, tensor sorting dramatically reduces reconstruction error, enabling near rank-1 reconstruction.
From log-log plots, we can see that row-wise sorting, group sorting (with a groupsize of $g=64$) and sequential axis sorting
have relatively higher errors than full sorting, whereas they are still more than 10 times lower than the one without sorting. 
Note that the full sorting for Qwen3-0.6B requires  $\lceil\log_2(2^{20}!)\rceil/2^{20}\simeq 18.6$ bits per weight, the row sorting requires $\lceil\log_2(2^{10}!)\rceil/2^{10}\simeq 8.6$ bits, the group sorting requires $\lceil\log_2(2^{10}!/(g!)^{2^{10}/g})\rceil/2^{10}\simeq 3.9$ bits, and
the sequential sorting needs
$5$ bits per weight.
Therefore, even with such an imperfect sorting to reduce the permutation memory, the index ordering has a great potential to improve the accuracy.
Also we observe that the trend is similar to all LLM models we consider.

\begin{figure}[t]
    \centering
    \begin{subfigure}{0.47\linewidth}
        \includegraphics[width=\linewidth]{figs/eigen_Qwen-Qwen3-0-6B_duo.png}
        \caption{Qwen3-0.6B (layers.0.self\_attn.v\_proj)}
    \end{subfigure}
    \hfill
    \begin{subfigure}{0.47\linewidth}
        \includegraphics[width=\linewidth]{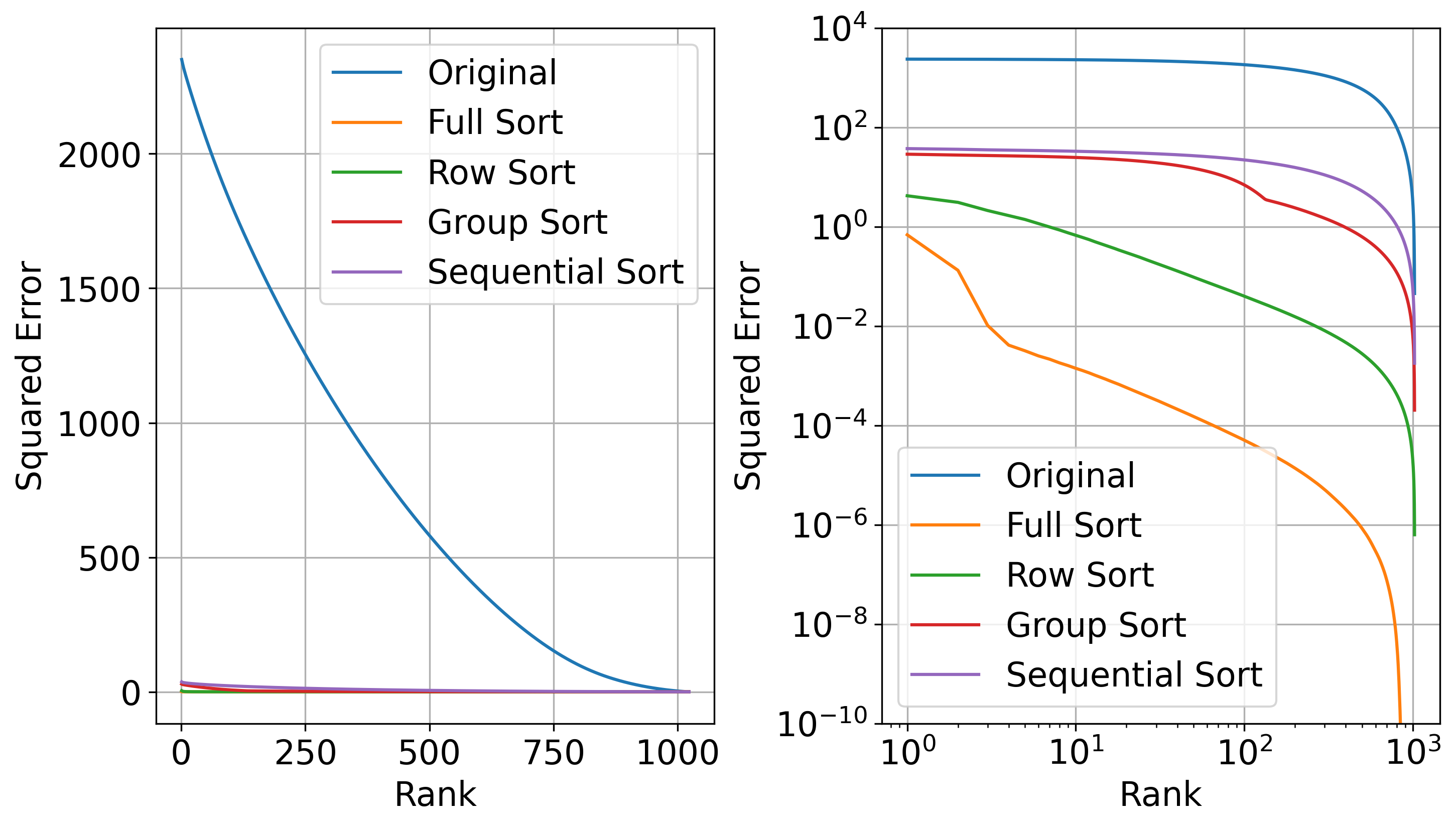}
        \caption{Qwen3-1.7B (layers.0.self\_attn.v\_proj)}
    \end{subfigure}
    \\
    \begin{subfigure}{0.47\linewidth}
        \includegraphics[width=\linewidth]{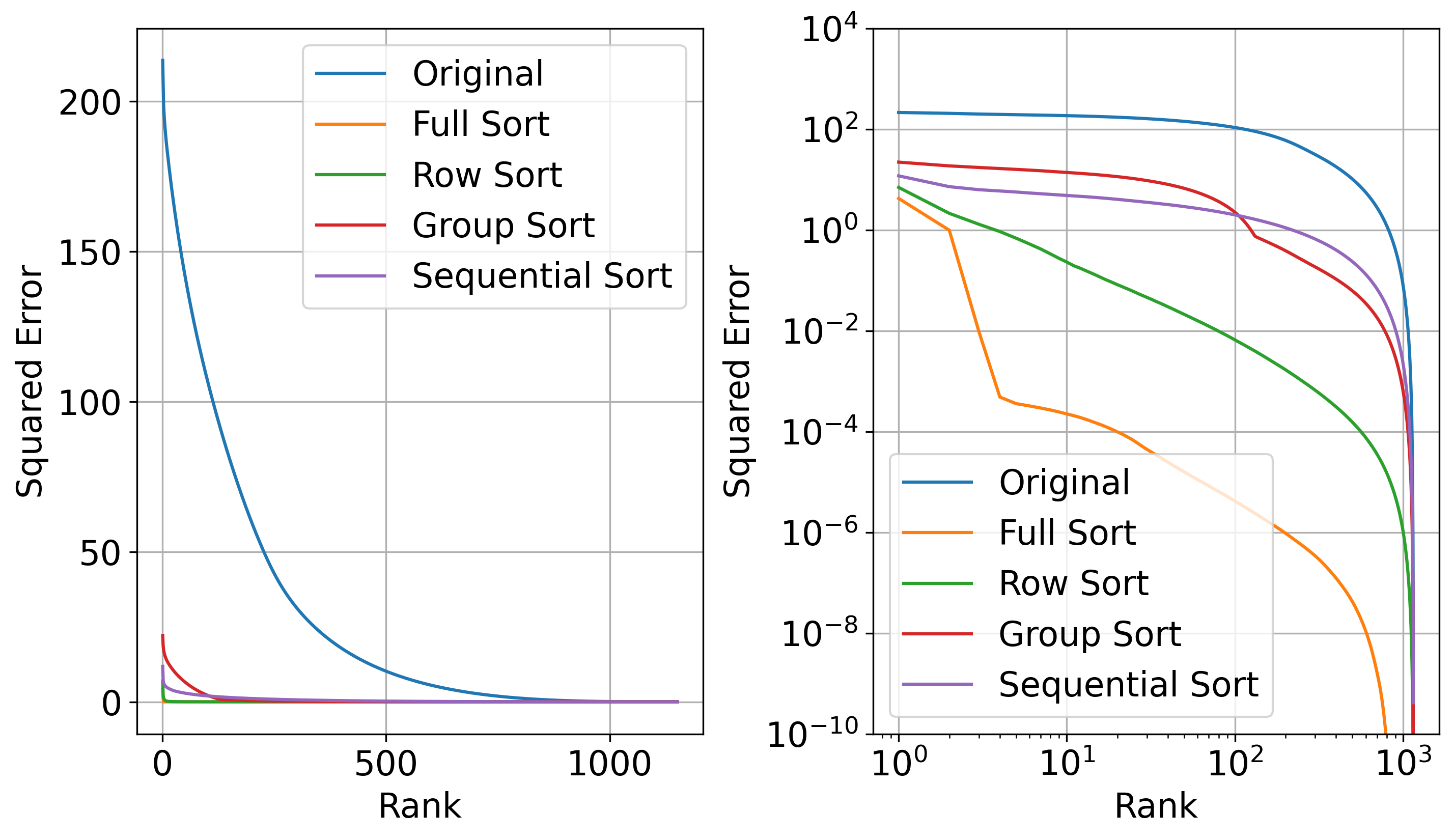}
        \caption{Gemma3-4B (layers.0.self\_attn.v\_proj)}
    \end{subfigure}
    \hfill
    \begin{subfigure}{0.47\linewidth}
        \includegraphics[width=\linewidth]{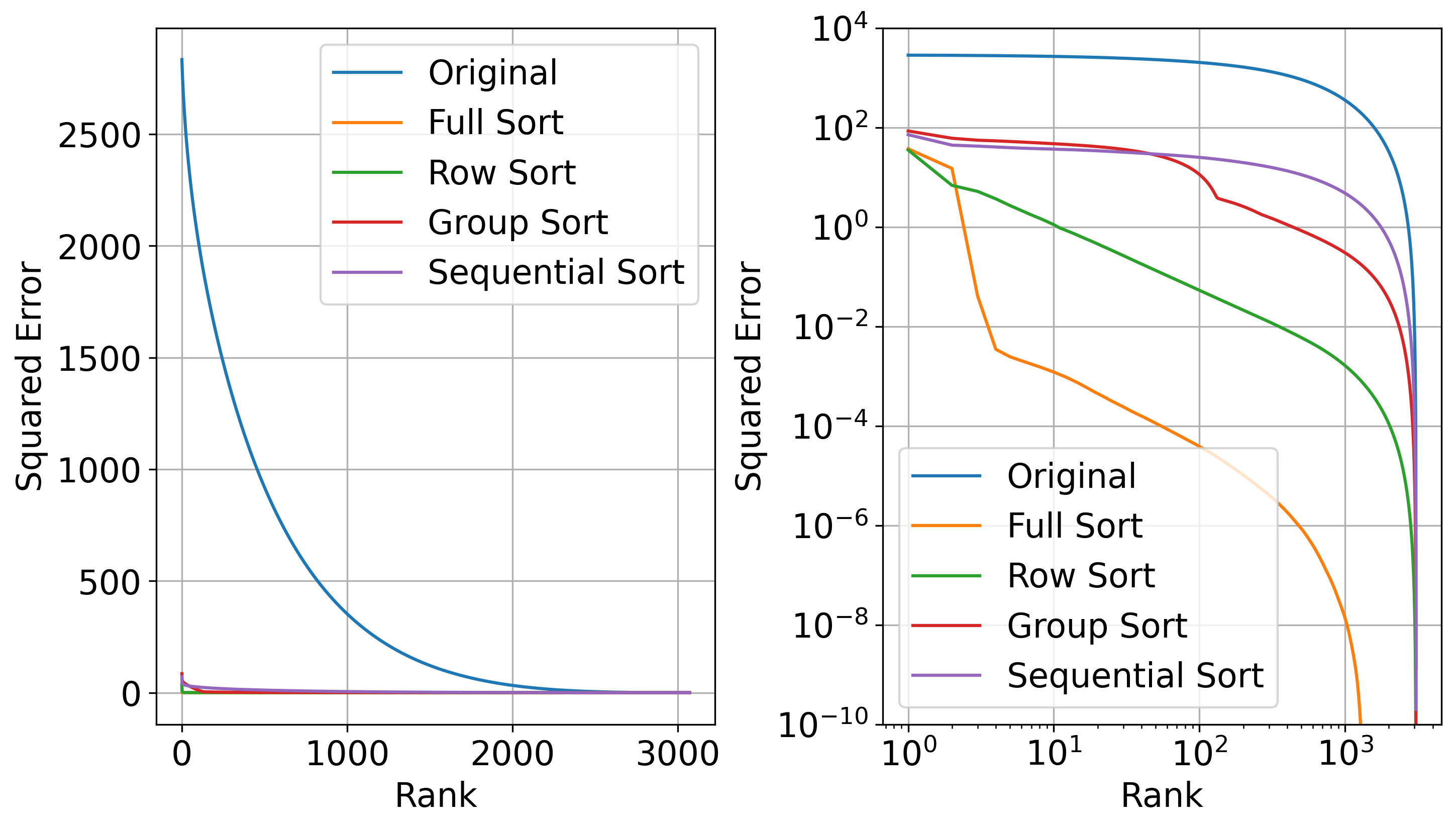}
        \caption{Phi3.5-mini (layers.0.self\_attn.o\_proj)}
    \end{subfigure}

    \caption{Tensorization error vs.\ rank for v\_proj weights at the first layer of Qwen3-0.6B model.
    Sorting can significantly reduce the required rank.
    Left: linear; right: log-log plots.}
    \label{fig:spect-all}
\end{figure}

\section{Theoretical analysis}
\label{sec:theory}

\subsection{Proof of Lemma~\ref{th:uniform}}

\begin{proof}
For $k\in\{0,\ldots,L-1\}$, the order statistic $X_{(k)}$ has the same
distribution as
\begin{align}
a+(b-a)Y_{(k)},
\end{align}
where $Y_{(k)}$ is the $(k+1)$-st order statistic of $L$ i.i.d. uniform
random variables on $[0,1]$. 
It is well known~\citep{david2004order} that $Y_{(k)}$ follows the Beta distribution:
\begin{align}
Y_{(k)}\sim \operatorname{Beta}(k+1,L-k).    
\end{align}
Hence, we have
\begin{align}
\mathbb{E}[X_{(k)}]
&=
a + (b-a)\mathbb{E}[Y_{(k)}]
=
a+(b-a)\frac{k+1}{L+1},
\end{align}
and
\begin{align}
\operatorname{Var}[X_{(k)}]
=
(b-a)^2
\frac{(k+1)(L-k)}{(L+1)^2(L+2)}.
    \end{align}
Since
\begin{align}
(k+1)(L-k)\leq \frac{(L+1)^2}{4},    
\end{align}
we have
\begin{align}
\operatorname{Var}[X_{(k)}]
\leq
\frac{(b-a)^2}{4(L+2)}
=
\mathcal{O}[L^{-1}].
\end{align}
Therefore, by Chebyshev's inequality,
\begin{align}
X_{(k)}
=
a+(b-a)\frac{k+1}{L+1}
+
\mathcal{O}_p[L^{-1/2}].    
\end{align}

Now setting $k=Ni+j$, we can write the matrix entry at $(i,j)$ as
\begin{align}
X_{i,j}
&=
a+(b-a)\frac{(Ni+j)+1}{L+1}
+
\mathcal{O}_p[L^{-1/2}]
\\
&=
N\frac{b-a}{L+1}i
+
\frac{b-a}{L+1}j
+
\Big(a+\frac{b-a}{L+1}\Big)
+
\mathcal{O}_p[L^{-1/2}].
\end{align}
Thus, in matrix form, we can write
\begin{align}
X
&=
N\frac{b-a}{L+1}\ell_M1_N^\top
+
\frac{b-a}{L+1}1_M\ell_N^\top
+
\Big(a+\frac{b-a}{L+1}\Big) 
1_M1_N^\top
+
\mathcal{E},
\end{align}
where every entry of $\mathcal{E}$ is $\mathcal{O}_p[L^{-1/2}]$.

The first three terms are rank-one matrices. Hence their sum has rank at
most $3$. Therefore $X$ is asymptotically approximated by a rank-at-most-$3$
matrix, with entrywise stochastic error $\mathcal{O}_p[L^{-1/2}]$.
\end{proof}

Note that the bias factor in the third term is gone when $a=-b/L$ as $a+(b-a)/(L+1)=0$.

\subsection{Random projection to make uniformity}

The preceding lemma can be extended heuristically beyond the uniform case.
Indeed, many random vectors become approximately Gaussian after an orthogonal
mixing transform due to the central limit effect. Since a Gaussian random
variable can be converted into a uniform random variable through its cumulative
distribution function, the sorted-matrix structure above approximately applies
to a broader class of distributions after an appropriate unitary projection.

More specifically, let
\begin{align}
x=[x_0,\ldots,x_{L-1}]^\top
\end{align}
be a random vector with independent entries having finite variance, not
necessarily uniformly distributed. Let $U\in\mathbb{R}^{L\times L}$ be a fixed
unitary (orthogonal) matrix whose entries are sufficiently delocalized, such
as the normalized Walsh--Hadamard matrix or discrete Fourier transform (DFT)
matrix, and define
\begin{align}
y = Ux.
\end{align}
Each component
\begin{align}
y_i = \sum_{k=0}^{L-1} U_{ik}x_k
\end{align}
is then a weighted sum of many independent random variables. Under standard
Lindeberg-type conditions, the central limit theorem implies that, as
$L\to\infty$, each $y_i$ converges in distribution toward a Gaussian random
variable:
\begin{align}
y_i \xrightarrow{d} \mathcal{N}(\mu_i,\sigma_i^2).    
\end{align}
After normalization,
\begin{align}
z_i = \varPhi\!\left(\frac{y_i-\mu_i}{\sigma_i}\right),    
\end{align}
where $\varPhi$ denotes the standard Gaussian cumulative distribution function,
we obtain approximately uniform random variables
\begin{align}
z_i \approx \mathcal{U}(0,1).    
\end{align}

Consequently, after a sufficiently mixing unitary transform and marginal
Gaussianization, a broad class of random matrices can be reduced
approximately to the uniform setting analyzed in
Lemma~\ref{th:uniform}. Therefore, the asymptotic low-rank structure of sorted
and folded matrices is expected to hold far beyond the exact uniform model.

This argument is heuristic because the transformed variables are generally not
exactly independent, and finite-dimensional convergence from the central limit
theorem does not directly imply joint uniformity. Nevertheless, for highly
mixing transforms such as Walsh--Hadamard matrices, the approximation becomes
accurate in high dimensions and is widely exploited in randomized numerical
linear algebra and signal processing.

\subsection{Row sorting}

Even without assuming uniform distribution or random projection, we can derive the following theory for unknown random parameters for row-wise sorting.

\begin{lemma}[Asymptotic rank-one structure of row-wise sorted matrices]
\label{th:row}
Let $X_{i,j}$ for $i \in\mathbb{Z}_M$ and $j\in\mathbb{Z}_N$ be \textit{i.i.d.} random variables with continuous distribution function $F$. Assume: $F$ has density $f$; there exists $c>0$ such that $f(x)\ge c$ on the support of $F$; and the quantile function $Q(u)=F^{-1}(u)$ for $u\in(0,1)$ satisfies as
\begin{align}
\int_0^1 Q(u)^2\,\mathrm{d}u<\infty.
\end{align}

For each row $i$, let
$X_{i,(1)}\le \cdots \le X_{i,(N)}$
denote the order statistics of $(X_{i,0},\dots,X_{i,N-1})$, and define the row-wise sorted matrix
$
S\in\mathbb R^{M\times N}
$ with
$
S_{i,j}=X_{i,(j+1)}$.    
Define the deterministic vector $q=(q_0,\dots,q_{N-1})^\top$ with   
$
q_j = Q((j+1)/(N+1))$.

Then, the row-wise sorted matrix $S$ is asymptotically rank one as $1_M q^\top$ for $N\gg 1$, i.e.,
\begin{align}
\frac{\|S-1_M q^\top\|_\mathrm{F}}
{\| 1_M q^\top\|_\mathrm{F}}
=
O_p(N^{-1/2}).
\end{align}
\end{lemma}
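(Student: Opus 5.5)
The plan is to reduce to the uniform case handled in the proof of Lemma~\ref{th:uniform} via the quantile transform, and then control numerator and denominator separately. Let $U_{i,j}$ be i.i.d.\ uniform on $[0,1]$. Since $F$ is continuous, we can realize $X_{i,j}=Q(U_{i,j})$ in distribution. Because $Q$ is nondecreasing, sorting commutes with $Q$, so $S_{i,j}=Q(U_{i,(j+1)})$, where $U_{i,(j+1)}$ is the $(j+1)$-st order statistic of row $i$. By the Beta fact already used in the proof of Lemma~\ref{th:uniform},
\begin{align}
U_{i,(j+1)}\sim\operatorname{Beta}(j+1,N-j),\qquad
\mathbb{E}\big[U_{i,(j+1)}\big]=\frac{j+1}{N+1},\qquad
\operatorname{Var}\big[U_{i,(j+1)}\big]\le \frac{1}{4(N+2)}.
\end{align}

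\textbf{Numerator.} The density assumption $f\ge c$ makes $Q$ Lipschitz with constant $1/c$. Formally, $Q'(u)=1/f(Q(u))\le 1/c$ wherever $Q$ is differentiable. I will read the hypothesis as saying the support is an interval, so that $Q$ has no jumps; this should be stated explicitly. Then
\begin{align}
|S_{i,j}-q_j|\le \frac{1}{c}\Big|U_{i,(j+1)}-\frac{j+1}{N+1}\Big|,
\end{align}
and taking expectations and summing over all $MN$ entries gives
\begin{align}
\mathbb{E}\|S-1_Mq^\top\|_\mathrm{F}^2\le \frac{MN}{4c^2(N+2)}\le \frac{M}{4c^2}.
\end{align}
By Markov's inequality, $\|S-1_Mq^\top\|_\mathrm{F}=O_p(M^{1/2})$, uniformly in $N$.

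\textbf{Denominator.} We have $\|1_Mq^\top\|_\mathrm{F}^2=M\sum_j q_j^2$. A Lipschitz $Q$ on $(0,1)$ is bounded, because the support has length at most $1/c$. Hence $Q^2$ is bounded and a.e.\ continuous, and it is Riemann integrable. The Riemann sum therefore converges:
\begin{align}
\frac{1}{N}\sum_{j=0}^{N-1}Q\Big(\frac{j+1}{N+1}\Big)^2\to\int_0^1Q(u)^2\,\mathrm{d}u=:\kappa.
\end{align}
We have $\kappa>0$ because $Q$ is nondecreasing and nonconstant, since $F$ is continuous. So for large $N$, $\sum_j q_j^2\ge \kappa N/2$, and $\|1_Mq^\top\|_\mathrm{F}\ge (M\kappa N/2)^{1/2}$. (The assumption $\int Q^2<\infty$ is what one would use instead, via monotone tail control, if boundedness of $Q$ were unavailable.)

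\textbf{Combining.} Dividing the two bounds gives the ratio $O_p(M^{1/2})/(M\kappa N/2)^{1/2}=O_p(N^{-1/2})$, as claimed.

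\textbf{Main obstacle.} I expect the delicate step to be the Lipschitz property of $Q$. It genuinely needs the support to be connected; otherwise $Q$ jumps, and an order statistic near a jump point incurs an $O(1)$ error with non-negligible probability. I would first try to state this as the interpretation of ``$f\ge c$ on the support.'' Failing that, I would bound the contribution of the finitely many jumps separately and show it is $O_p(N^{-1/2})$ relative to the denominator, since only $O_p(\sqrt N)$ indices per row lie near each jump.
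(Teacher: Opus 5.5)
Your main argument is correct and follows the paper's proof essentially step for step. You couple to uniform order statistics via the quantile transform, use the Beta variance bound $1/(4(N+2))$, and use the Lipschitz constant $1/c$ to get $\mathbb{E}\|S-1_Mq^\top\|_\mathrm{F}^2=O(M)$; a Riemann-sum limit then gives $\|1_Mq^\top\|_\mathrm{F}=\Theta(\sqrt{MN})$. You are more careful than the paper in three places: you state the interval-support reading explicitly (the paper assumes it silently when it passes from $Q'(u)=1/f(Q(u))$ to a global Lipschitz bound), you use boundedness of $Q$ to justify the Riemann sum, and you check $\kappa>0$.

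One correction concerns the fallback in your last paragraph. It cannot succeed, because with disconnected support the claimed rate is false, not merely harder to prove. Take $f=1$ on $[0,\tfrac12]\cup[1,\tfrac32]$. This satisfies every hypothesis as literally stated, and $Q$ jumps by $\tfrac12$ at $u=\tfrac12$.

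In row $i$, let $K_i\sim\mathrm{Bin}(N,\tfrac12)$ count the entries below $\tfrac12$. There are roughly $|K_i-(N+1)/2|=\Theta_p(\sqrt N)$ indices where $S_{i,j}$ and $q_j$ fall on opposite sides of the gap, and at each of them $|S_{i,j}-q_j|\ge\tfrac12$. Hence $\|S-1_Mq^\top\|_\mathrm{F}^2$ is of order $M\sqrt N$ in probability, against $\|1_Mq^\top\|_\mathrm{F}^2=\Theta(MN)$, so the relative error is of order $N^{-1/4}$, not $N^{-1/2}$.

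Your count of $O_p(\sqrt N)$ affected indices per row is right, but each of them carries an $O(1)$ error. Connected support is therefore a necessary hypothesis, not just a convenient interpretation, and it should be added to the statement rather than offered as one of two options.
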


\begin{proof}
Define
\begin{align}
U_{i,j}=F(X_{i,j}).    
\end{align}
By the probability integral transform, the variables $U_{i,j}$ are \textit{i.i.d.} uniform on $[0,1]$. 
Since $Q=F^{-1}$,
\begin{align}
X_{i,(j+1)} = Q(U_{i,(j+1)}),    
\end{align}
where $U_{i,(j+1)}$ is the $(j+1)$-st order statistic of $N$ \textit{i.i.d.} uniform random variables.

The uniform order statistics satisfy
\begin{align}
\mathbb E[U_{i,(j+1)}]
&=
\frac{j+1}{N+1},    
\\
\operatorname{Var}(U_{i,(j+1)})
&=
\frac{(j+1)(N-j)}
{(N+1)^2(N+2)}
\leq
\frac{1}{4(N+2)}
=
O(N^{-1}).
\end{align}

Since $f(x)\ge c>0$, the quantile function $Q(u)$ is Lipschitz. 
Indeed,
\begin{align}
Q'(u)=\frac{1}{f(Q(u))},    
\end{align}
hence
\begin{align}
|Q'(u)|\le \frac1c.    
\end{align}

Therefore, for some constant $C>0$,
\begin{align}
|Q(u)-Q(v)|\le C|u-v|.    
\end{align}

Using this Lipschitz property,
\begin{align}
\begin{aligned}
\mathbb E\left[
\left(
X_{i,(j+1)}-q_j
\right)^2
\right]
&=
\mathbb E\left[
\left(
Q(U_{i,(j+1)})
-
Q\!\left(\frac{j+1}{N+1}\right)
\right)^2
\right]
\\
&\le
C^2
\operatorname{Var}(U_{i,(j+1)})
\\
&=
O(N^{-1}).
\end{aligned}    
\end{align}

Summing over $j$,
\[
\sum_{j=0}^{N-1}
\mathbb E\left[
\left(
X_{i,(j+1)}-q_j
\right)^2
\right]
=
O(1).
\]

Now define the error matrix
\begin{align}
\mathcal{E}_{i,j}=X_{i,(j+1)}-q_j.    
\end{align}
Then
\begin{align}
\mathbb E\|\mathcal{E}\|_\mathrm{F}^2
&=
\sum_{i=0}^{M-1}
\sum_{j=0}^{N-1}
\mathbb E[\mathcal{E}_{i,j}^2]
\\
&=
M\cdot O(1)
\\
&=
O(M).
\end{align}

Hence,
\begin{align}
\|\mathcal{E}\|_\mathrm{F} = O_p(\sqrt{M}).    
\end{align}

Next,
\begin{align}
\|1_M q^T\|_\mathrm{F}^2
=
M\|q\|_2^2
=
M\sum_{j=0}^{N-1} q_j^2.    
\end{align}

Since $q_j=Q((j+1)/(N+1))$,
\begin{align}
\frac1N\sum_{j=0}^{N-1} q_j^2
\to
\int_0^1 Q(u)^2\,\mathrm{d}u.    
\end{align}

Therefore,
\begin{align}
\sum_{j=0}^{N-1} q_j^2
=
\Theta(N),    
\end{align}
which implies
\begin{align}
\|1_M q^\top\|_\mathrm{F}
=
\Theta(\sqrt{MN}).    
\end{align}

Finally,
\begin{align}
\frac{\|\mathcal{E}\|_\mathrm{F}}
{\|1_M q^\top\|_\mathrm{F}}
=
\frac{O_p(\sqrt M)}
{\Theta(\sqrt{MN})}
=
O_p(N^{-1/2}).    
\end{align}

Since
\begin{align}
S= 1_M q^\top+\mathcal{E},    
\end{align}
the matrix $S$ converges to the rank-one matrix $1_M q^\top$ in relative Frobenius norm.
\end{proof}

\subsection{Independent permutation}

The row sorting may give a rank-one structure as in Lemma~\ref{th:row}.
However, any independent permutation per tensor axis has no benefit.

\begin{proposition}[Invariance of singular values and rank under independent permutations]
Let $X \in \mathbb{R}^{M \times N}$ be a matrix, and let
$P_{\mathrm r} \in \mathbb{R}^{M \times M}$ and
$P_{\mathrm c} \in \mathbb{R}^{N \times N}$
be arbitrary row and column permutation matrices, respectively.
Then,
\begin{align}
\sigma(X) = \sigma(P_{\mathrm r} X P_{\mathrm c}),
\end{align}
where $\sigma(\cdot)$ denotes the set of singular values.
Consequently,
\begin{align}
\operatorname{rank}(X)
=
\operatorname{rank}(P_{\mathrm r} X P_{\mathrm c}).
\end{align}
\end{proposition}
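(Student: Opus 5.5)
The plan is to use the fact that permutation matrices are orthogonal. Every permutation matrix $P$ has exactly one entry equal to $1$ in each row and column, so $P^\top P = I$. Hence $P_{\mathrm r}$ and $P_{\mathrm c}$ both lie in the orthogonal groups $O(M)$ and $O(N)$, respectively. The claim then reduces to the standard invariance of singular values under orthogonal equivalence.

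Write a full SVD $X = U \Sigma V^\top$, with $U\in O(M)$, $V\in O(N)$, and $\Sigma$ rectangular diagonal with nonnegative entries. Then
\begin{align}
P_{\mathrm r} X P_{\mathrm c} = (P_{\mathrm r} U)\, \Sigma\, (P_{\mathrm c}^\top V)^\top .
\end{align}
Both $P_{\mathrm r}U$ and $P_{\mathrm c}^\top V$ are products of orthogonal matrices and are therefore orthogonal. The right-hand side is thus a valid SVD of $P_{\mathrm r} X P_{\mathrm c}$ with the same $\Sigma$. By uniqueness of the singular values as a multiset, this gives $\sigma(X)=\sigma(P_{\mathrm r}XP_{\mathrm c})$.

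As a cross-check that avoids appealing to uniqueness of the SVD, I would also note a second route. The Gram matrix satisfies
\begin{align}
(P_{\mathrm r}XP_{\mathrm c})^\top(P_{\mathrm r}XP_{\mathrm c}) = P_{\mathrm c}^\top X^\top X P_{\mathrm c},
\end{align}
which is similar to $X^\top X$ because $P_{\mathrm c}^\top = P_{\mathrm c}^{-1}$. Similar matrices have the same eigenvalues, and the singular values are their square roots.

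The rank statement follows immediately, since the rank equals the number of nonzero singular values. Alternatively, multiplying by the invertible matrices $P_{\mathrm r}$ and $P_{\mathrm c}$ preserves the dimensions of the column space and the row space.

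There is no genuine obstacle here. The only point needing care is that $\sigma(\cdot)$ should be read as a multiset, so that multiplicities are preserved. It is also worth remarking on why this does not conflict with Lemma~\ref{th:row} and the toy example in \eqref{eq:2x2}. Row-wise sorting applies a \emph{different} column permutation to each row, and the map $j\mapsto j\oplus i$ depends on $i$. Neither is of the form $P_{\mathrm r}XP_{\mathrm c}$, so the invariance does not apply to them.
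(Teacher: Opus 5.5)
Your proposal is correct and follows essentially the same route as the paper. Both use the orthogonality of permutation matrices to show that $(P_{\mathrm r}U)\,\Sigma\,(V^\top P_{\mathrm c})$ is again an SVD of $P_{\mathrm r}XP_{\mathrm c}$ with the same $\Sigma$, and both read off the rank as the number of nonzero singular values; your Gram-matrix similarity check and the multiset remark are sound additions that the paper does not include.
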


\begin{proof}
Let the SVD of $X$ be
\begin{align}
X = U \Sigma V,
\end{align}
where $U \in \mathbb{R}^{M \times M}$ and
$V \in \mathbb{R}^{N \times N}$ are orthogonal matrices, and
$\Sigma \in \mathbb{R}^{M \times N}$ is diagonal rectangular with the
singular values of $X$ on its diagonal.

Since permutation matrices are orthogonal, we have
\begin{align}
P_{\mathrm r}^\top P_{\mathrm r} = I_M,
\qquad
P_{\mathrm c} P_{\mathrm c}^\top = I_N.
\end{align}
Therefore, $P_{\mathrm r}U$ and $VP_{\mathrm c}$ are also orthogonal, because
\begin{align}
(P_{\mathrm r}U)^\top(P_{\mathrm r}U)
=
U^\top P_{\mathrm r}^\top P_{\mathrm r} U
=
U^\top U
=
I_M,
\\
(VP_{\mathrm c})(VP_{\mathrm c}^\top
=
V P_{\mathrm c} P_{\mathrm c}^\top V^\top
=
V V^\top
=
I_N.
\end{align}

Now,
\begin{align}
P_{\mathrm r} X P_{\mathrm c}
&=
P_{\mathrm r} U \Sigma V P_{\mathrm c} \\
&=
(P_{\mathrm r}U)\Sigma(VP_{\mathrm c}) .
\end{align}
This is an SVD of $P_{\mathrm r} X P_{\mathrm c}$ with the same diagonal
matrix $\Sigma$. Hence, $P_{\mathrm r} X P_{\mathrm c}$ and $X$ have the same
singular values.

Since the rank of a matrix is equal to the number of nonzero singular values,
the rank is also invariant under independent row and column permutations.
\end{proof}

Comparing Lemma~\ref{th:row}, this trivial theory suggests that we should use axis-dependent permutation like CNOT entanglement, i.e., the permutation $\pi$ for the column index $j$ should be dependent on the row index $i$.
From this perspective, TQCompress~\cite{abronin2024tqcompressor} should have no benefit in the sense of singular values and tensor ranks as it uses row and column independent permutations.

\subsection{Reduction impact}

The sliced sorting with reduction can reduce the memory requirement.
However, in return it can degrade the sorting accuracy.
Under some assumptions, we can provide the potential impact when increasing  the reduction size.
\begin{proposition}[Scaling law for mean and product reduction on uniform random variables]
    \label{th:reduction}
    Let $X_{i,j}$ be uniform \textit{i.i.d.} random variables uniformly distributed on $[-1,1]$ for $i\in\mathbb{Z}_M$ and $j\in\mathbb{Z}_N$.
    Let the mean and product reductions after power scaling across column $j$ be defined as $Y_{i} = \sum_{j=1}^M |X_{i,j}|^p /M$ and $Z_{i} = \prod_{j=1}^M |X_{i,j}|^p$, respectively, where $p>0$ is power exponent.
    Then, the reduced random variables $Y_i$ and $Z_i$ conditioned on $X_{i,j}$ have moments:
    \begin{align}
        \mathbb{E}[Y_i \mid X_{i,j}] &=
        \frac{X_{i,j}}{M} + 
        \frac{M-1}{M(p+1)},
        \\
        \mathrm{Var}[Y_i \mid X_{i,j}]
        &=
        \frac{M-1}{M^2}
        \Big(
        \frac{1}{2p+1}
        -
        \frac{1}{(p+1)^2}
        \Big),
        \\
        \mathbb{E}[Z_i\mid X_{i,j}]
        &=
        \frac{X_{i,j}}{(p+1)^{M-1}}
        ,\\
        \mathrm{Var}[Z_i\mid X_{i,j}]
        &=
        X_{i,j}^2
        \Big(
        \frac{1}{(2p+1)^{M-1}}
        -
        \frac{1}{(p+1)^{2(M-1)}}
        \Big)
        .
    \end{align}
    Hence, the error margin to detect $X_{i,j}$, i.e., target factor in mean over standard deviation, will decay as follows:
    \begin{align}
    \frac{
        \mathbb{E}[Y_i\mid X_{i,j}] - (M-1)/M(p+1)
    }{\mathrm{Var}[Y_i\mid X_{i,j}]^{1/2}}
    &=
    \frac{X_{i,j}}{\sqrt{M-1}}
    \Big(
        \frac{1}{2p+1}
        -
        \frac{1}{(p+1)^2}
    \Big)^{-1/2}
    \mathop{\longrightarrow}_{M\gg 1}
    X_{i,j}\mathcal{O}[1/M^{1/2}],
    \\
    \frac{
        \mathbb{E}[Z_i\mid X_{i,j}]
    }{\mathrm{Var}[Z_i\mid X_{i,j}]^{1/2}}
    &=
    \Big(
        \frac{(p+1)^{2(M-1)}}{(2p+1)^{M-1}}
        -
        1
    \Big)^{-1/2}
    \mathop{\longrightarrow}_{M\gg 1}
    \mathcal{O}[((2p+1)/(p+1)^2)^{M/2}].
    \end{align}
    Hence, the mean reduction decays the error margin slower than the product reduction when increasing the reduction size $M$.
\end{proposition}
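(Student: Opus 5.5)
The plan is a direct moment computation that relies only on independence and the moments of a power of a uniform variable. Throughout, I read the conditioning value ``$X_{i,j}$'' on the right-hand sides as the power-mapped target $|X_{i,j}|^p$, which is what enters the reductions. The statement writes $X_{i,j}$ there, so this is my interpretation of the intended formulas. I also take the reduction to run over the $M$ entries of the slice as written, so the number of ``other'' entries is $M-1$ (the statement mixes the index ranges $\mathbb{Z}_N$ and $1,\dots,M$).

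The first step is the single-entry moments. If $X$ is uniform on $[-1,1]$, then $|X|$ is uniform on $[0,1]$. Hence
\begin{align}
\mathbb{E}|X|^{p}=\frac{1}{p+1}, \qquad \mathbb{E}|X|^{2p}=\frac{1}{2p+1}, \qquad \operatorname{Var}|X|^p=\frac{1}{2p+1}-\frac{1}{(p+1)^2}.
\end{align}
These three quantities are all that is needed.

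The second step handles the mean reduction. Fix the target entry and write $Y_i=\frac{1}{M}|X_{i,j}|^p+\frac{1}{M}\sum_{j'\neq j}|X_{i,j'}|^p$. Conditioning leaves the first term deterministic, and the remaining $M-1$ terms are i.i.d. and independent of it. Linearity of expectation gives the stated conditional mean. Additivity of variance for independent summands gives the conditional variance $\frac{M-1}{M^2}\bigl(\frac{1}{2p+1}-\frac{1}{(p+1)^2}\bigr)$.

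The third step handles the product reduction in the same way. Write $Z_i=|X_{i,j}|^p\prod_{j'\neq j}|X_{i,j'}|^p$. Independence gives the conditional mean $|X_{i,j}|^p(p+1)^{-(M-1)}$ and the conditional second moment $|X_{i,j}|^{2p}(2p+1)^{-(M-1)}$. Subtracting the squared mean yields the stated conditional variance.

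The fourth step is the error-margin ratios, which are pure algebra. For $Y_i$, subtracting the offset $(M-1)/(M(p+1))$ leaves $|X_{i,j}|^p/M$. Dividing by $\sqrt{M-1}\,M^{-1}\bigl(\frac{1}{2p+1}-\frac{1}{(p+1)^2}\bigr)^{1/2}$ gives the stated expression, which is $\Theta(M^{-1/2})$. For $Z_i$, the factor $|X_{i,j}|^p$ cancels, leaving $\bigl((p+1)^{2(M-1)}/(2p+1)^{M-1}-1\bigr)^{-1/2}$. The key observation is that $(p+1)^2=p^2+2p+1>2p+1$ for every $p>0$. So $r:=(2p+1)/(p+1)^2\in(0,1)$, the bracket grows like $r^{-(M-1)}$, and the ratio behaves like $r^{(M-1)/2}=\mathcal{O}[r^{M/2}]$. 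Geometric decay beats the polynomial rate $M^{-1/2}$, which gives the concluding comparison.

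I expect no real technical obstacle. The only delicate point is fixing the intended reading of the conditioning variable and the index range, as noted above. Once $|X_{i,j}|^p$ is used consistently, every formula follows from independence and the moments $1/(p+1)$ and $1/(2p+1)$. The final comparison needs only the strict inequality $(p+1)^2>2p+1$, which guarantees that the product-reduction margin shrinks geometrically in $M$.
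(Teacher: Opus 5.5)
Your proposal is correct and takes essentially the same route as the paper. Like the paper, you read the conditioning variable as the power-mapped value $|U|^p$, use $\mathbb{E}[|U|^{kp}]=1/(kp+1)$, and get the conditional moments from independence of the remaining $M-1$ entries. Your computation yields the formulas exactly as stated. The paper's written proof has two slips: it writes $\frac{M-1}{M}$ instead of $\frac{M-1}{M^2}$ in $\mathrm{Var}[Y\mid X_1]$, and $(p+1)^{-(M-1)}$ instead of $(p+1)^{-2(M-1)}$ for the squared mean of $Z$. It also omits the margin-ratio algebra and the inequality $(p+1)^2>2p+1$, both of which you supply.
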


\begin{proof}
    Let $X$ is the $p$-th absolute power of uniform random variable $U\sim \mathcal{U}(-1,1)$, we have the CDF
    \begin{align}
        F_X(x) = \mathbb{P}(|U|^p<x)
        = x^{1/p}, \qquad 0\leq x\leq 1.
    \end{align}
    Its PDF is then
    \begin{align}
        f_X(x) = \frac{1}{p} x^{1/p-1},
    \end{align}
    which is Beta distribution $\mathrm{Beta}(1/p,0)$.
    The moments are hence given as
    \begin{align}
        \mathbb{E}[X^k] &=
        \mathbb{E}[|U|^{kp}]
        =
        \frac{1}{pk+1}.
    \end{align}

    Let $Y$ be the mean reduction of $M$ \textit{i.i.d.} Beta-distributed variables $X_1,\ldots, X_M$ conditioned on $X_1$,  defined as
    \begin{align}
        Y\mid X_1 = \frac{1}{M}(X_1 + \sum_{i=2}^M X_i).
    \end{align}
    Its mean is thus given as
    \begin{align}
        \mathbb{E}[Y\mid X_1] &=
        \frac{X_1}{M} + \frac{M-1}{M} \mathbb{E}[|U|^p]
        \\
        &=
        \frac{X_1}{M} + \frac{M-1}{M(p+1)}.
    \end{align}
    And, its variance is given as
    \begin{align}
        \mathrm{Var}[Y|X_1] &=
        \frac{M-1}{M}\mathrm{Var}[|U|^p]
        \\
        &=
        \frac{M-1}{M}
        \Big(
        \frac{1}{2p+1} -
        \Big(
        \frac{1}{p+1}\Big)^2
        \Big).
    \end{align}

    Let $Z$ be the product reduction of $X_1,\ldots, X_M$ conditioned on $X_1$, defined as
    \begin{align}
        Z \mid X_1 &=
        X_1 \prod_{i=2}^M X_i.
    \end{align}
    Then, its mean is given as
    \begin{align}
        \mathbb{E}[Z\mid X_1] &=
        X_1 \big( \mathbb{E}[|U|^p] \big)^{M-1}
        \\
        &=
        X_1 \Big(
        \frac{1}{p+1}
        \Big)^{M-1}
        .
    \end{align}
    Also the second moment is
    \begin{align}
        \mathbb{E}[Z^2\mid X_1] &=
        X^2 \big( 
        \mathbb{E}[|U|^{2p}]
        \big)^{M-1}
        \\
        &=
        X^2 \Big(
        \frac{1}{2p+1}
        \Big)^{M-1}
    \end{align}
    Hence, the variance is given as
    \begin{align}
        \mathrm{Var}[Z\mid X_1] &=
        \mathrm{E}[Z^2\mid X_1]
        -
        \big(\mathbb{E}[Z\mid X_1]\big)^2
        \\
        &=
        X_1^2 \Big( 
        \Big(
        \frac{1}{2p+1}
        \Big)^{M-1}
        -
        \Big(
        \frac{1}{p+1}
        \Big)^{M-1}
        \Big).
    \end{align}
\end{proof}

For Gaussian random variables, it appears the similar scaling law.
Note that the mean reduction has a target-dependent margin increasing with $X_{i,j}$, and thus the impact of the stochastic error is more severe at smaller value of $X_{i,j}$.
Whereas, the product reduction has a constant margin across $X_{i,j}$, which may lead more reliable sliced sorting.
For example, when we have $p=1$ and $M=8$, the error margin for the value $X_{i,j}=0.1$ will be about $0.13$ for the mean reduction, while it will be $0.41$ for the product reduction.

\section{EinSort}
\label{sec:einsort}

Those results motivate us to consider sorting for tensor network designs.
We then propose \textbf{EinSort} framework based on the Einstein sorted sum.
Conceptually, we use a reversible  permutation operation $\pi[\cdot]$ inside tensor decomposition.
Let $\mathcal{T}_\theta[\cdot]$ denote the tensor decomposition with hyperparameters $\theta$ which determine einsum equation, shapes, topology, etc.
The sorted tensor decomposition is written as $\mathcal{T}_\theta^\pi :=\pi^{-1}[\mathcal{T}_\theta[\pi[X]]]$.
It generalizes the einsum to find low-rank structure.

Although sorting may reduce the number of parameters for tensor cores with few rank, the memory overhead of storing permutations is not negligible.
Specifically, sorting $L$ values requires at most $\lceil \log_2(L!) \rceil$ bits using factoradic/Lehmer code~\citep{lehmer1960teaching}.
For example, full sorting for a tensor of shape $(1024, 1024)$ requires roughly $19$ bits per entry, which often exceeds the memory footprint of the original tensors in FP16.
Therefore, we need to seek optimizing a tradeoff between sorting accuracy and tensor rank reduction.

We consider several simple sorting methods in this paper, while many other options could be explored.
One of simplest approaches is to reuse a shared permutation for multiple tensors.
To do so, we first need to determine what kind of sorting metrics is relevant for multi-tensor permutation.
Considering a 4-mode tensor $X$ of shape $(d, d, d, d)$,
we may use a shared permutation across the first mode slice: $X_{:,j,k,l}$ to re-order the last three modes for $\{j,k,l\}$.
Then, we can reduce the memory of permutation from $\lceil \log_2(d^4!)\rceil$ to $\lceil \log_2(d^3!) \rceil$ bits, i.e., more than $d$-fold memory reduction.
Therefore, adjusting the slicing dimension, we can easily reduce the total memory footprint, towards even lower than 1-bit per weight.
For example, with $d=32$, we have $\lceil \log_2(d^3!)\rceil / d^4 \simeq 0.4$ bits per weight.
For sorting metric, we consider some options for nonlinear mapping and reduction, including power scaling, and mean/max/std/var/median/prod operations.
An example using power scaling and std reduction is listed in the pseudo code as below.
\begin{lstlisting}
# X (1024, 1024)
X = X.reshape([2] * 20) # folding to 20-mode tensor of shape (2,2,...,2)
S = X.abs().pow(p) # scoring with p-th power mapping
axis = [0,1,2] # slicing modes (the first 3 modes for simplicity) 
S = S.std(dim=axis, keepdim=True).expand_as(X) # std reduction
perm = torch.argsort(S.view(2,2,2,-1), dim=-1) # shared permutation
X = X.view(2,2,2,-1).gather(-1, perm).view([2] * 20) # re-order
\end{lstlisting}

It has a flexibility to adjust the choices of slicing modes, power exponent, and reduction operation. 
We specifically consider six reduction operations:
\begin{itemize}
    \item Mean reduction: \texttt{torch.mean()}
    \item Max reduction: \texttt{torch.max()}
    \item Min reduction: \texttt{torch.min()}
    \item Median reduction: \texttt{torch.median()}
    \item Standard deviation reduction \texttt{torch.std()}
    \item Product reduction: \texttt{torch.prod()}
\end{itemize}
In addition, we consider several nonlinear mapping before reduction:
\begin{itemize}
    \item Linear: no mapping
    \item Power: \texttt{x.pow(p)}
    \item Exponential: \texttt{x.mul(p).clamp(-10,10).exp()}
    \item Additive logarithm: \texttt{x.add(p).clamp(4.4e-5,2.2e4).log()}
    \item Multiplicative logarithm: \texttt{x.mul(p).clamp(4.4e-5,2.2e4).log()}
\end{itemize}

We search for effective folding strategy and mode to increase the sorting accuracy, while keeping lower memory cost.
Appendix~\ref{sec:reduction} compares different mapping and reductions.

In addition to the sorting metric, we also introduce an approach to improve the accuracy by employing non-negative tensor decomposition.
Specifically, we also project the original tensor into non-negative values before tensor decomposition.
To recover the negative values, we keep the sign information of the original tensor, and hence it requires only one additional bit.
It is illustrated in Fig.~\ref{fig:slicing}.

The pseudo code with non-negative tensorization is given as follows.
\begin{lstlisting}
# X (1024, 1024)
X = X.reshape([2] * 20) # folding to 20-mode tensor of shape (2,2,...,2)
S = X.abs().pow(p) # scoring with p-th power mapping

if nn_flag: # non-negative tensorization flag
    X, B = S, X.sign() # X is now non-negative part, and B keeps its sign information

axis = [0,1,2] # slicing modes (the first 3 modes for simplicity) 
S = S.std(dim=axis, keepdim=True).expand_as(X) # std reduction
perm = torch.argsort(S.view(2,2,2,-1), dim=-1) # shared permutation
X = X.view(2,2,2,-1).gather(-1, perm).view([2] * 20) # re-order
\end{lstlisting}
The sorted tensor now has non-negative elements for tensorization.
After tensor reconstruction, the sign information is used to recover the original elements.
Specifically, the reconstruction process is written as follows:
\begin{lstlisting}
# tensor decomposition for sorted X(2,2,...,2)
factors = tensor_decomp(X, gauge=True) # gauge fixing to save memory

# tensor reconstruction: X is approx sorted non-negative tensor
X = tensor_reconst(factors, gauge=True)

# inverse permutation
perm_inv = torch.argsort(perm, dim=-1)
X = X.view(2,2,2,-1).gather(-1, perm_inv).view([2] * 20) 

if nn_flag: 
    X = X.clamp(0).pow(1/p) # approx. inverse nonlinear mapping
    X = X * B # inverse sign

# unfolding back to original tensor shape
X = X.reshape(1024, 1024) 
\end{lstlisting}

Because the permutation is reversible, no information is discarded prior to tensor decomposition.
Note that certain reversible permutations used in EinSort can be interpreted as classical analogues of entangling operations in tensor-network-based quantum circuits as discussed in Appendix~\ref{sec:quantum}.

\section{Nonlinear mapping and reduction}
\label{sec:reduction}

Besides power scaling in Fig.~\ref{fig:pow}, we compare with different nonlinear mapping in Fig.~\ref{fig:nonlinear}.
We observe that none of them achieves good performance except for some reductions of exponential mapping.
Nevertheless, power scaling offers more accurate sorting than exponential mapping.
Table~\ref{tab:nonlinear} lists the PPL score at the best parameter with/without gauge fixing.
For most cases, the product reduction was best except for exponential scaling, which prefers the max reduction. 
Nevertheless, the max and mean reductions with power scaling offer the third and fourth best performance of $40.25$ and $42.38$, respectively.
Appendix~\ref{sec:theory} gives an insight for the difference of reduction operations, where we showed that product reduction is insensitive to the target value, while the mean reduction can be worse in low-magnitude regimes.
All cases, the gauge fixing improves the accuracy.

Fig.~\ref{fig:phi-pow} shows the case for Phi3-mini model at 50\% and 80\% compression rates.
The trend is different from Qwen3 case in Fig.~\ref{fig:pow}.
Specifically, best power exponent is around 0.4 when non-negative tensorization is used, whereas larger than $2$ may be best otherwise.
Also, all reductions are comparable, achieving the best score at the similar exponent.
Nonetheless, the gauge fixing and non-negative tensorizing are effective to compete with full sorting performance.
For Phi3 case, the required memory for shared permutation is $0.35$~bits per weight.

\begin{figure*}[t]
\centering
\begin{subfigure}{0.31\linewidth}
\includegraphics[width=\linewidth]{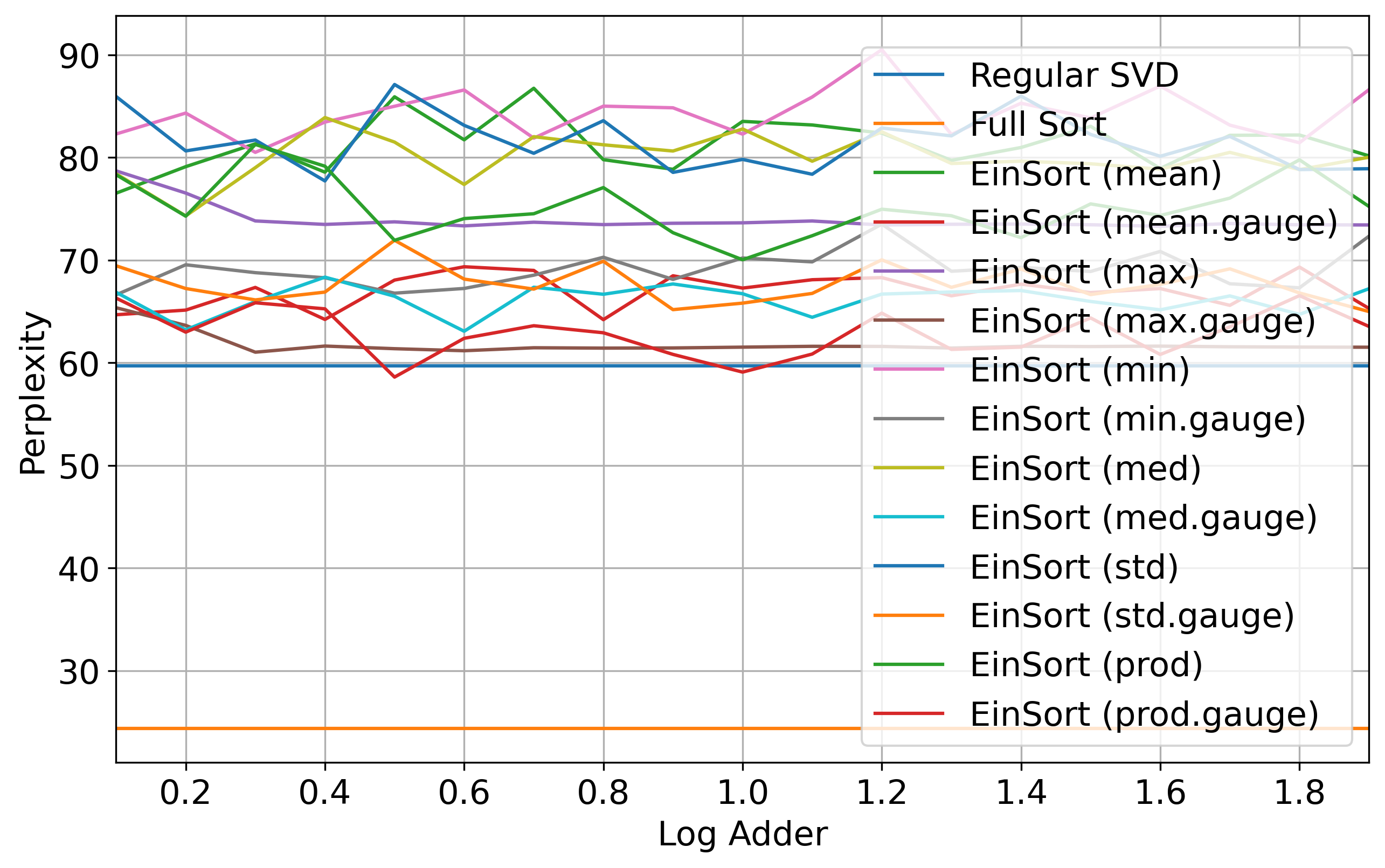}
\caption{Additive Logarithm}    
\end{subfigure}
\begin{subfigure}{0.31\linewidth}
\includegraphics[width=\linewidth]{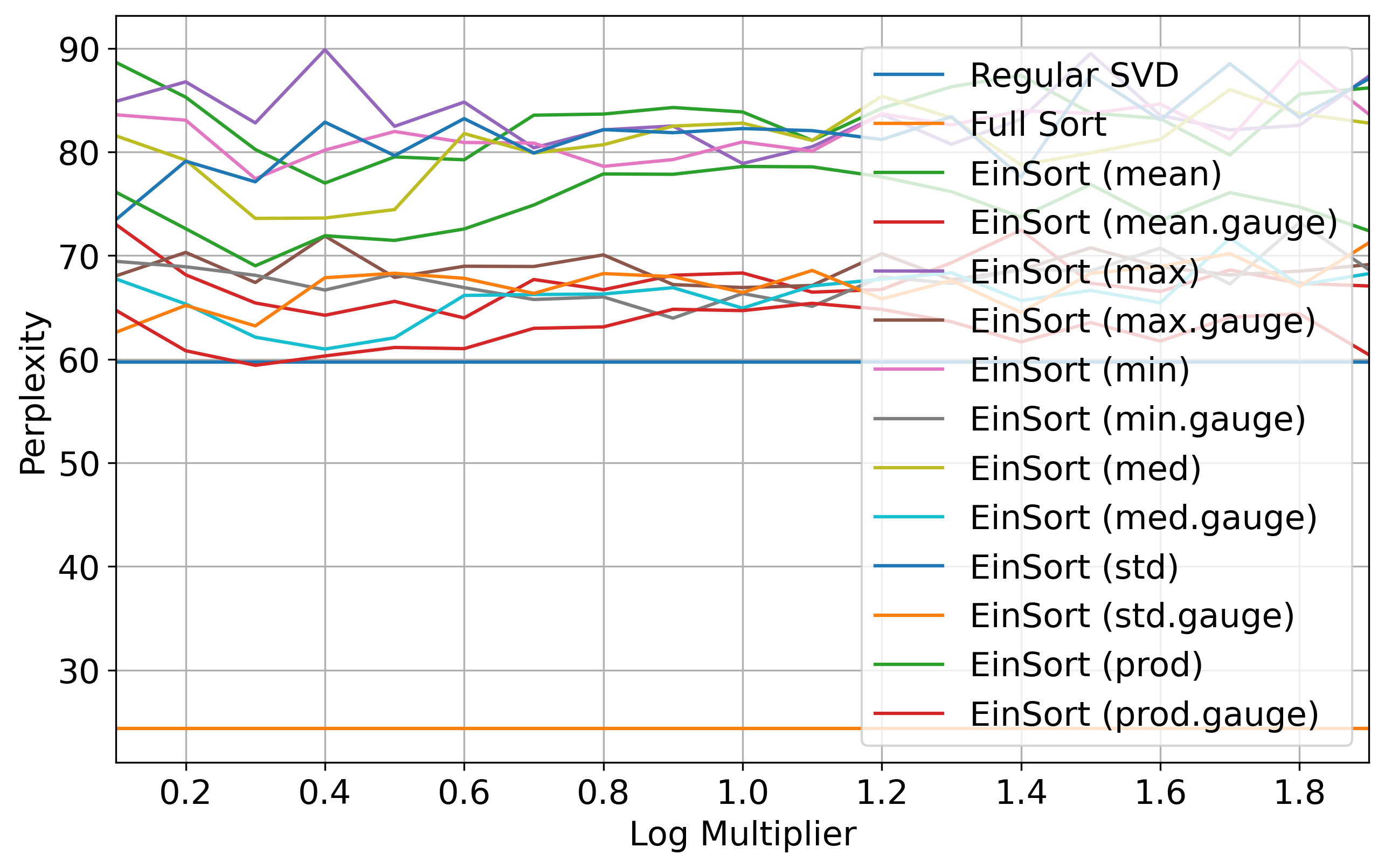}
\caption{Multiplicative Logarithm}    
\end{subfigure}
\begin{subfigure}{0.31\linewidth}
\includegraphics[width=\linewidth]{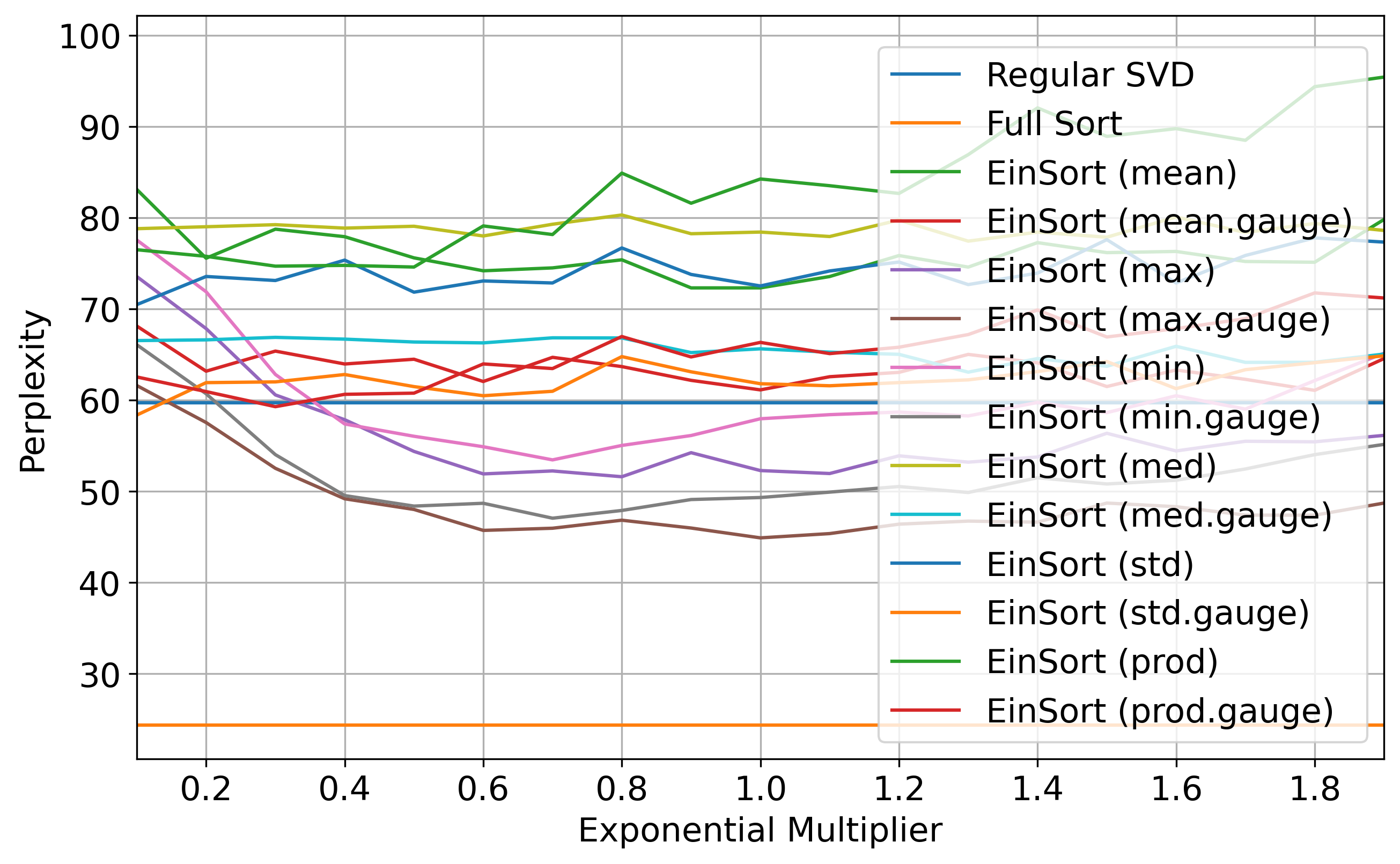}
\caption{Exponentiation}    
\end{subfigure}
\caption{Different nonlinear mapping for Qwen3-1.7B at 50\% compression: additive logarithm; multiplicative logarithm; and exponentiation.}
\label{fig:nonlinear}
\end{figure*}

\begin{table}[t]
    \centering
    \caption{PPL with different nonlinear mapping and reduction for Qwen3-1.7B at 50\% compression. 
    The four best scores are highlighted.
    The regular SVD without sorting has $60.0$ PPL.}
    \label{tab:nonlinear}
    \begin{tabular}{lcc cccc}
    \toprule
       Reduction  & mean & max & min & median & std & prod \\
       \midrule
       linear w/o gauge fixing &
       77.13 & \underline{73.55} & 78.12 & 79.07 & 73.82 & \textbf{59.56}
       \\
       linear w/ gauge fixing &
       62.79 & 61.49 & 66.04 & 66.34 & \underline{61.19} & \textbf{50.84}
       \\
       pow w/o gauge fixing & 
       77.46 & 73.54 & 78.08 & 79.05 & \underline{72.46} & \textbf{59.43}
       \\
       pow w/ gauge fixing & 
       63.60 & 61.44 & 66.03 & 66.30 & \underline{57.42} & \textbf{50.56}
       \\
       abs.pow w/o gauge fixing & 
       \underline{63.75} & 75.25 & 81.11 & 65.47 & 69.91 & \textbf{52.12} 
       \\
       abs.pow w/ gauge fixing & 
       \underline{55.26} & 63.25 & 67.52 & 56.22 & 60.78 & \textbf{44.72}
       \\
       pow.nn w/o gauge fixing & 
       \underline{50.08} & 46.56 & 68.21 & 55.66 & 51.37 & \cellcolor{lgreen}\textbf{36.29}
       \\
       pow.nn w/ gauge fixing & \cellcolor{lgreen}42.38 & \cellcolor{lgreen}\underline{40.25} & 52.70 & 44.21 & 46.79 & \cellcolor{lgreen}\textbf{33.17}         
       \\
       exp w/o gauge fixing & 72.31 & \textbf{51.60} & \underline{53.45} & 77.44 & 70.48 & 74.60
         \\
       exp w/ gauge fixing & 61.07 & \textbf{44.90} & \underline{47.06} & 58.37 & 63.07 & 59.29
       \\
       add.log w/o gauge fixing & 76.52 & \underline{73.30} & 80.50 & 77.72 & 74.31 & \textbf{70.04}
         \\
       add.log w/ gauge fixing & 64.20 & \underline{61.03} & 66.62 & 64.99 & 63.07 & \textbf{58.61}
       \\
       mul.log w/o gauge fixing & 76.99 & 78.88 & 77.42 & \underline{73.48} & 73.58 & \textbf{69.01}
         \\
       mul.log w/ gauge fixing & 63.98 & 66.91 & 63.96 & 62.60 & \underline{60.97} & \textbf{59.41}
       \\
         \bottomrule
    \end{tabular}
\end{table}

\begin{figure}
    \centering
    \begin{subfigure}{0.45\linewidth}
    \includegraphics[width=\linewidth]{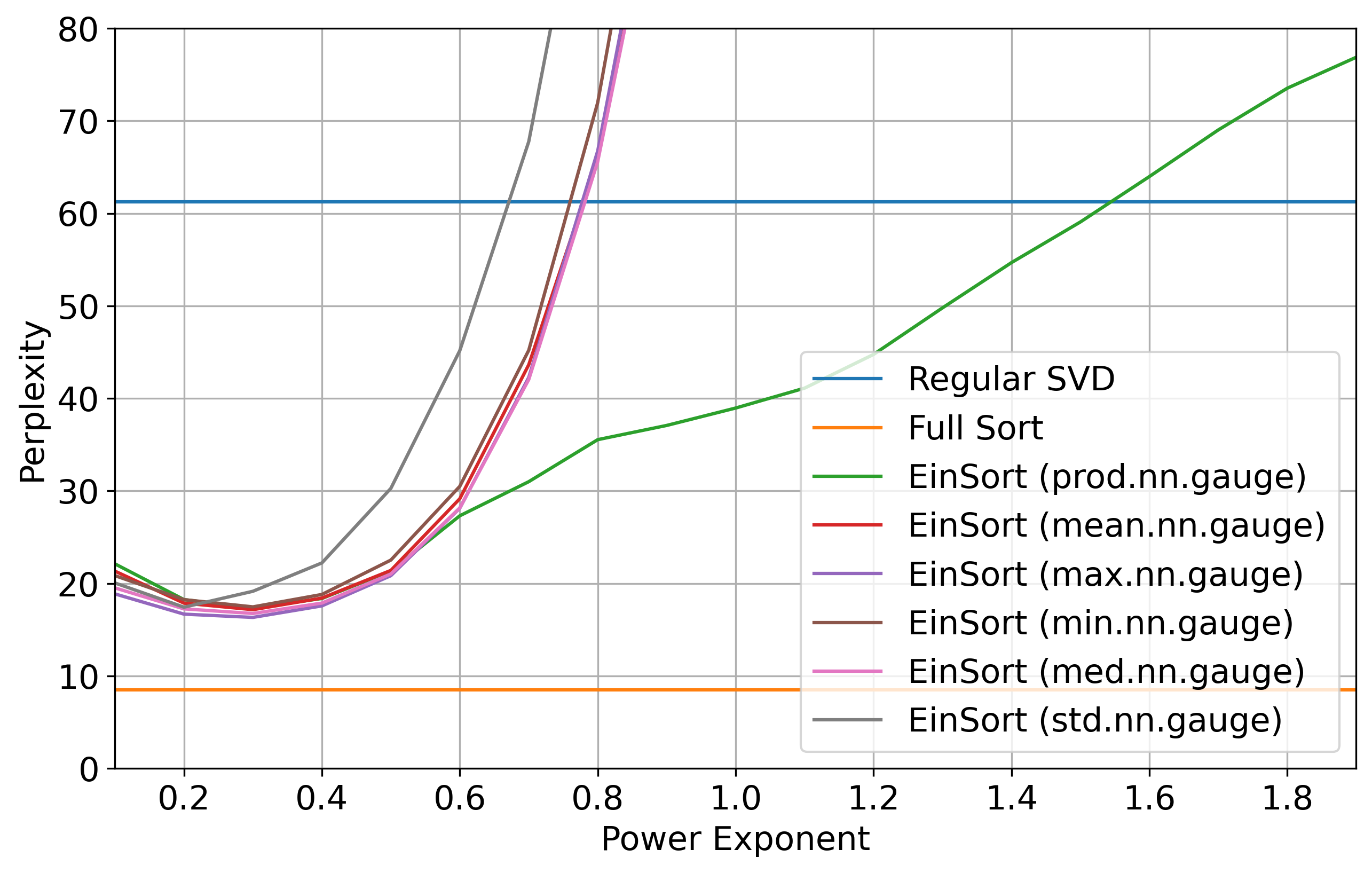}
    \caption{50\% compression}    
    \end{subfigure}
    \begin{subfigure}{0.45\linewidth}
        \includegraphics[width=\linewidth]{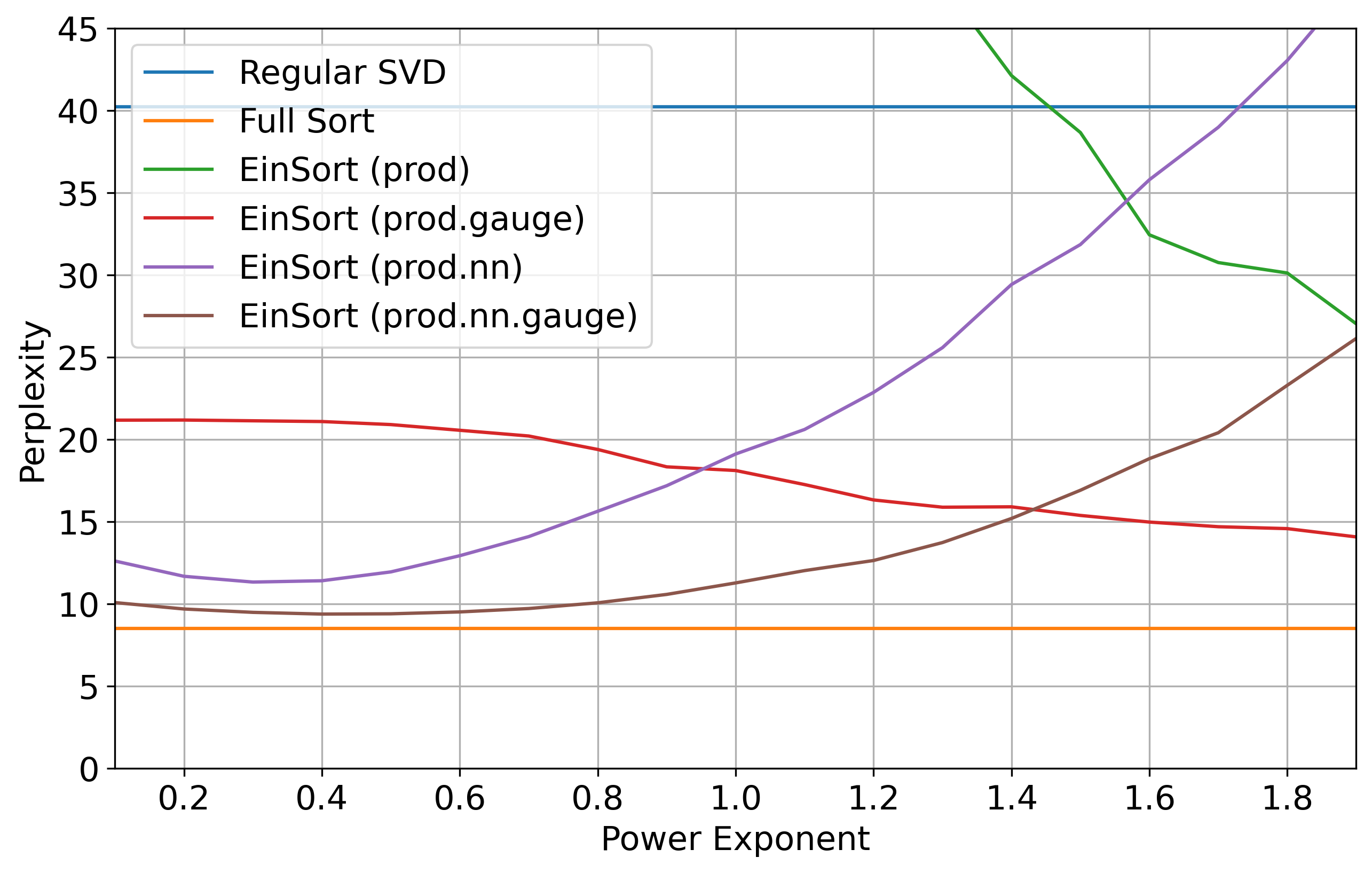}
        \caption{80\% compression}
    \end{subfigure}    
    \caption{Different reduction and nonlinear mapping for Phi3-mini at 50\% and 80\% compression.}
    \label{fig:phi-pow}
\end{figure}

\section{Gauge fixing}
\label{sec:gauge}

Here, we describe the gauge fixing we use.
Consider a 4-mode TT for a tensor $X$ decomposed by four cores $A$, $B$, $C$, and $D$ as depicted in Fig.~\ref{fig:gauge}.
As there are gauge freedom, there are no unique tensor cores to represent $X$.
Specifically, we can inject any full-rank junction matrices together with its inverse between the tensor cuts: $A$--$B$, $B$--$C$, and $C$--$D$.

One of typical methods for canonicalization is based on SVD or QR decomposition to make all cores orthogonal except for one orthogonality center.
For example, we employ an $r$-truncated SVD for a product of $AB$ as
\begin{align}
 \mathsf{svd}_{r}[AB]=U_A S_B V_B   ,
\end{align}
where $U_A$ and $V_B$ are left and right singular vectors, $S_B$ is diagonal singularvalues matrix, and $r$ is the bond rank.
We then assign as new tensor cores for $A$ and $B$ as follows:
\begin{align}
    A & \leftarrow U_A,
    \\
    B & \leftarrow S_B V_B.
\end{align}
Then, we can repeat for a new product $BC$ for $\mathsf{svd}_{r}[BC]=U_B S_C V_C$ to split as $U_B\rightarrow B$ and $S_B V_B \rightarrow C$.
Similarly, the product $CD=U_C S_D V_D$ to split $U_C\rightarrow C$ and $ V_D\rightarrow D$.
The diagonal tensor $S_D$ can be merged either into $C$ or $D$, or left as a new tensor which plays as the orthogonality center.
Doing so, all cores except the orthogonality center will be orthogonal matrices.
This gauge fixing based on SVD canonicalization is illustrated in Fig.~\ref{fig:canonical}(a).

This canonicalization can fix the gauge freedom, and improve the numerical stability to solve the tensor decomposition.
Besides the numerical stability,
\citet{koike2025quantum} discussed the potential to reduce the memory size for orthogonal matrices by parameterizing as the Stiefel manifold, either based on exponential map, Cayley transform, Householder reflections, Givens rotation, Taylor series, or Neumann series.
The Stiefel manifold parameterization can reduce the required number of parameters from $dr$ to $dr-r(r+1)/2$ for a tensor of shape $d\times r$ with $d\geq r$.

However, the regular canonicalization with Stiefel parameterization requires nonlinear mapping to construct orthogonal matrices.
\citet{koike2026latentllm} proposed a simpler method to fix the gauge freedom by using block identity form.
For example, decompose $A$ into LU factorization form as follows:
\begin{align}
    A &= P_A L_A U_A,
\end{align}
where $P_A\in\mathbb{R}^{d\times d}$ is an optional pivoting permutation, $L_A\in\mathbb{R}^{d\times r}$ is lower-triangular matrix and $U_A\in\mathbb{R}^{r\times r}$ is upper-triangular matrix.
As permutation can be represented by index transform with $\lceil \log_2(d!)\rceil$ bits for factradic coding, it does not need $d\times d$ full matrix tensor.
We split the lower triangular into upper square matrix and lower dense matrix:
\begin{align}
    L_A &=
    \begin{bmatrix}
        \bar{L}_A \\
        \underline{L}_A
    \end{bmatrix},
    \qquad
    \bar{L}_A \in \mathbb{R}^{r\times r},
    \qquad
    \underline{L}_A \in \mathbb{R}^{(d-r)\times r}.
\end{align}
Then $L_A$ can be block identity as follows:
\begin{align}
    L_A &=
    \underbrace{
    \begin{bmatrix}
        I_r \\
        \underline{L}_A \bar{L}_A^{-1}
    \end{bmatrix}
    }_{L'_A\in\mathbb{R}^{d\times r}}
    \bar{L}_A.
\end{align}
Note that the permutation $P_A$ is not necessary, while $\bar{L}_A$ can be singular without using a proper permutation.
Now, we map the permuted block-identity as a new $A$ and merge the rest to $B$ as follows:
\begin{align}
    A &\leftarrow P_A L'_A,
    \\
    B &\leftarrow \bar{L}_A U_A B.
\end{align}
Note that the block identity $L'_A$ does not need to keep the upper identity part but only the lower part of $\underline{L}_A\bar{L}^{-1}_A$, and hence the number of parameters is reduced from $dr$ to $dr - r^2$.
Repeating the LU decomposition, the cores $A$, $B$, and $C$ can become block identity. 
In consequence, the total number of parameters for TT will be reduced from $2dr(1+r)$ to $2dr(1+r) - 3r^2$.
Fig.~\ref{fig:canonical}(b) shows LU-based gauge fixing.

The pseudo code is written as follows:
\begin{lstlisting}
def gauge_fixing(A, B): # left core A and right core B with bond rank r
    d, r = A.shape # assuming d > r
    assert B.shape[0] == r # assuming B is in (r, ...)

    # LU decomposition: P @ L @ U = A
    P, L, U = torch.linalg.lu(A) 

    # permutation index from matrix
    perm = P.argmax(-1) 
    assert torch.allclose(A, (L @ U)[perm]) # A = (L @ U)[perm]

    # partition
    L0 = L[:r] # upper part, which is unitriangular
    L1 = L[r:] # lower part

    # block identity's lower part: A = L1 @ L0.inverse()
    A = torch.linalg.solve_triangular(L0, L1, upper=False, left=False, unitriangular=True)

    # New A tensor can be constructed from reduced-parameter A and perm if needed
    # A = torch.cat((torch.eye(r), A))[perm] 

    # New B tensor merging: (L0 @ U) @ B
    B = torch.einsum("ij,j...->i..."), (L0 @ U), B)
    return A, perm, B
\end{lstlisting}

Alternative to LU decomposition, QR decomposition can be used.
However, current implimentation of \texttt{torch.linalg.qr} can be numerically unstable if the first $r$ columns are not independent, which requires a pivotting like the above LU decomposition.

\begin{figure}
    \centering
    \includegraphics[width=0.4\linewidth]{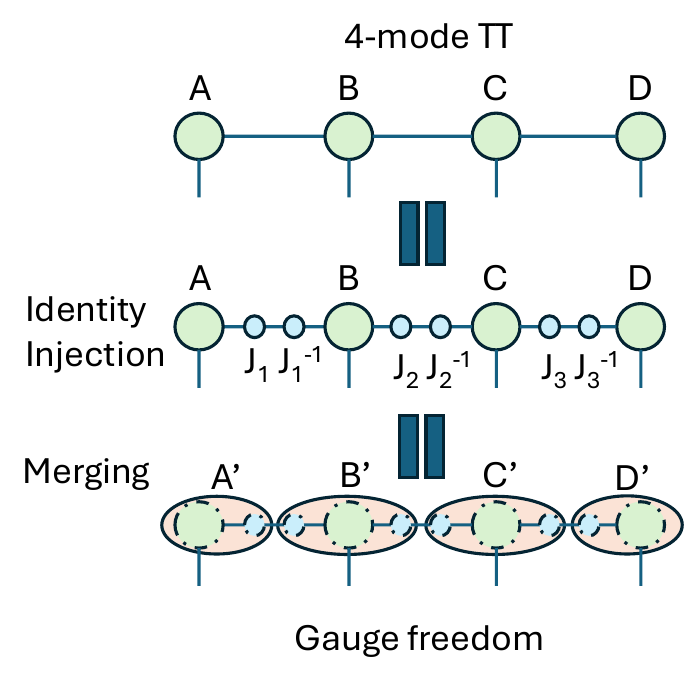}
    \caption{Gauge freedom: Tensor cores can be arbitrary up to any full-rank identity rotation injection at each cuts.}
    \label{fig:gauge}
\end{figure}

\begin{figure}
    \centering
    \begin{subfigure}{0.3\linewidth}
    \includegraphics[width=\linewidth]{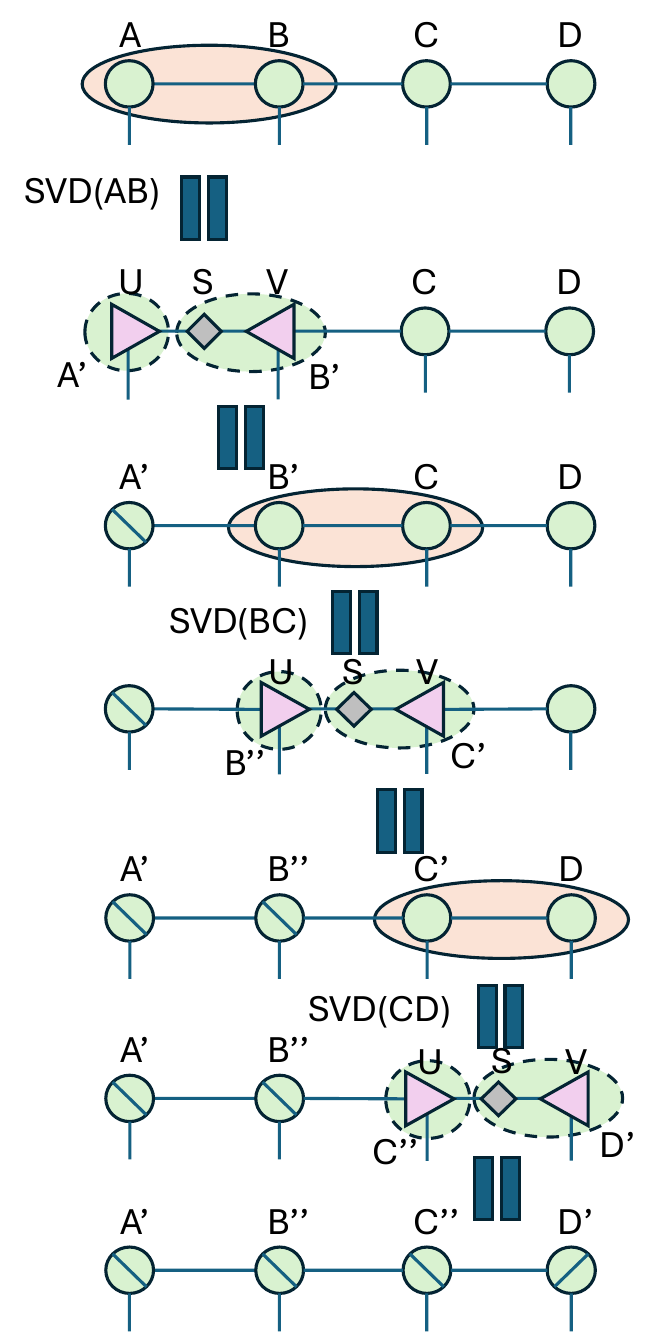}
    \caption{SVD Canonicalization}        
    \end{subfigure}
    \begin{subfigure}{0.3\linewidth}
    \hfill
    \includegraphics[width=\linewidth]{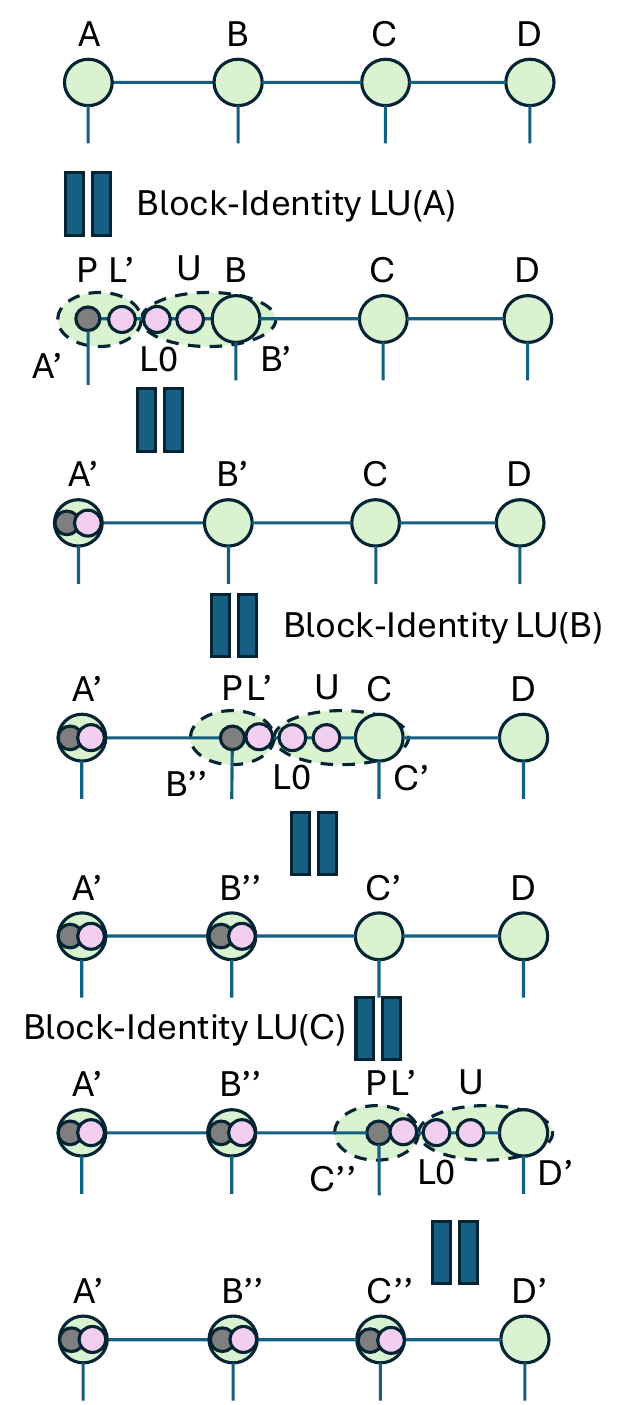}
    \caption{LU Canonicalization}        
    \end{subfigure}
    \caption{Gauge fixing with SVD and LU canonicalization. 
    SVD: Tensor cores can be left- or right-unitary around the orthogonality center. Orthogonal tensors can be represented by lower number of parameters with Stiefel manifold~\citep{koike2025quantum}.
    LU: Tensor cores can be block identity.
    Block identity tensors can be represented with lower number of parameters~\citep{koike2026latentllm}.}
    \label{fig:canonical}
\end{figure}

\section{Activation-aware tensorization}
\label{seq:ext}

We note that EinSort can be integrated with activation-aware decomposition like ASVD~\cite{yuan2023asvd}, AA-SVD~\cite{sinha2026aa} and OBD-SVD~\cite{li2026optimal}.
The core idea of those methods is to use input and/or output preconditioning. 
ASVD uses the following objective function:
\begin{align}
    \hat{W} &= 
    \arg\min_{W'}
    \| W X - W' X \|^2
    +
    \lambda \|W - W'\|^2,
\end{align}
where $X$ is an input activation from calibration data and $\lambda$ is a regularizer to consider activation-unaware loss.
The solution is given as
\begin{align}
\hat{W} &=\mathsf{svd}_r[W C^{1/2}]C^{-1/2},
\end{align}
where $C^{1/2}$ is a preconditioner, defines as $C=XX^\top + \lambda I$.
AA-SVD extends it to deal with error propagation as follows:
\begin{align}
    \hat{W} &= 
    \arg\min_{W'}
    \alpha \| W X - W' X' \|^2
    +
    \beta \| W X - W' X \|^2
    +
    \lambda \|W - W'\|^2,
\end{align}
where $X'$ is a modified input activation after decomposition of preceding layers, $\alpha$ and $\beta$ are regularizers.
The solution is given as
\begin{align}
    \hat{W} &=
    \mathsf{svd}_r[
    W C_1 C_2^{-1/2}
    ]
    C_2^{-1/2},
    \\
    C_1 &= \alpha XX'^\top 
    + \beta XX^\top + \lambda I,
    \\
    C_2 &= \alpha X'X'^\top 
    + \beta XX^\top + \lambda I.
\end{align}
The OBD-SVD extends ASVD to use output preconditioning to minimize:
\begin{align}
    \hat{W} &=\arg\min_{W'}
    \|G^{1/2} (W-W') C^{1/2} \|^2,
\end{align}
where $G$ represents the gradient information to approximate Hessian.
The solution is given in the form:
\begin{align}
    \hat{W} &= 
    G^{-1/2}
    \mathsf{svd}_r[G^{1/2}W C^{1/2}] C^{-1/2}.
\end{align}
All cases including ASVD, AA-ASVD, and SVD-SVD can be expressed in the form with particular preconditioner $G$, $C_A$, and $C_B$:
\begin{align}
    \hat{W} &=
    G^{-1/2}
    \mathsf{svd}_r[G^{1/2}W C_1 C_2^{-1/2}] C_2^{-1/2},
\end{align}

Naturally, sorted tensor decomposition can do the same way to deal with the activation and gradient statistics as follows: 
\begin{align}
    \hat{W} &=
    G^{-1/2}
    \mathcal{T}_\theta^\pi[G^{1/2}W C_1 C_2^{-1/2}] C_2^{-1/2}.
\end{align}

\section{Quantization-aware tensorization}
\label{sec:quant}

The tensor cores can be also quantized or pruned to further save the memory for reduced-rank tensor networks.
As einsum is differentiable, imposing quantization and pruning under gradient optimization is straightforward, e.g., based on straight-through estimation.
Specifically, the tensor cores are quantized (or pruned) as $\hat{W}=\mathcal{Q}[W]$ where $\mathcal{Q}[\cdot]$ denotes the quantization operation (and/or pruning), which itself may be non differentiable.
The quantized tensors are then pass through to the einsum contraction by superposing the non-quantized cores: $W' = \hat{W} + (W - W.\mathrm{detach})$ where $[\cdot].\mathrm{detach}$ denotes cutting the autograd path.
The pseudo code is listed below.
\begin{lstlisting}
def quantize_contract(self, target, *args, **kwargs):
    # quantize cores
    cores = self.quantize(self.cores, **kwargs)
    
    # straight-through estimator
    for k in range(len(cores)):
        cores[k] += self.cores[k] - self.cores[k].detach()
    
    # quantization-aware contraction to use gradient optimization
    return self.expression(*cores, *args, **kwargs)
\end{lstlisting}

\section{Test-time adaptation}
\label{sec:adapt}

Tensor ranks can be optimized across layers and modules~\citep{luo2024adaptive}.
However, most papers consider static rank selections, which do not adaptively change over the inference time.
Inspired by test-time quantization (TTQ)~\citep{koike2026ttq}, we can consider test-time rank adaptation.
TTQ provides a theoretical justification suggesting that test-time weight approximation adaptive to every input prompts can improve the accuracy, compared to offline weight static approximation.

Consider a simple toy example: the weights $W$ are decomposed into 2-rank factors as $\hat{W}= a_1 b_1^\top + a_2 b_2^\top$ for vectors $a_1, a_2, b_1, b_2$, and the input token $X$ is orthogonal to both $b_1$ and $b_2$. 
Then, $\hat{W}X=0$, which implies that those two-rank factors are useless and we should increase the rank for such tokens. 
Whereas, if $X$ is orthogonal to $b_2$ but $b_1$ like $X=b_1 c^\top$, then we have $\hat{W}X=|b_1|^2 a_1 c^\top$, suggesting that the second factor is redundant to be omitted.
This toy example gives a key insight that the weight decomposition should be adaptive to the input tokens at inference time to be more efficient and effective.

We may use a simple scoring network to determine the bond ranks at test time.
Specifically, a small neural network feeding online activation input $X$, and decide the ranks for tensor networks.
The network can be trained through synthetic data or calibration data held out from test samples.
This enables a test-time rank adaptation for tensor networks.

Another alternative is to feed an updated preconditioner at test time, and update the tensor cores in an online fashion.
For example, $W$ of shape $(d^2, d^2)$ uses 4-mode TT of shape $(d,d,d,d)$ with a moderately high rank of $R$, and the online token correlation $C=XX^\top+\lambda I_{d^2}$ is decomposed as another tensor network with 2-mode matrix product operator (MPO) of shape $(d, d)$ with bond rank $R'$ through density matrix renormalization group (DMRG) algorithm~\cite{schollwock2011density}.
Then, we can locally and sequentially update the tensor cores of the 4-mode TT by contracting with MPO to find the dominant eigenspaces with limited bond ranks lower than $R$.
Fig.~\ref{fig:testtime} illustrates the test-time adaptation framework to update the TT depending on the online preconditioner $C$.

\begin{figure}[t]
    \centering
    \includegraphics[width=0.7\linewidth]{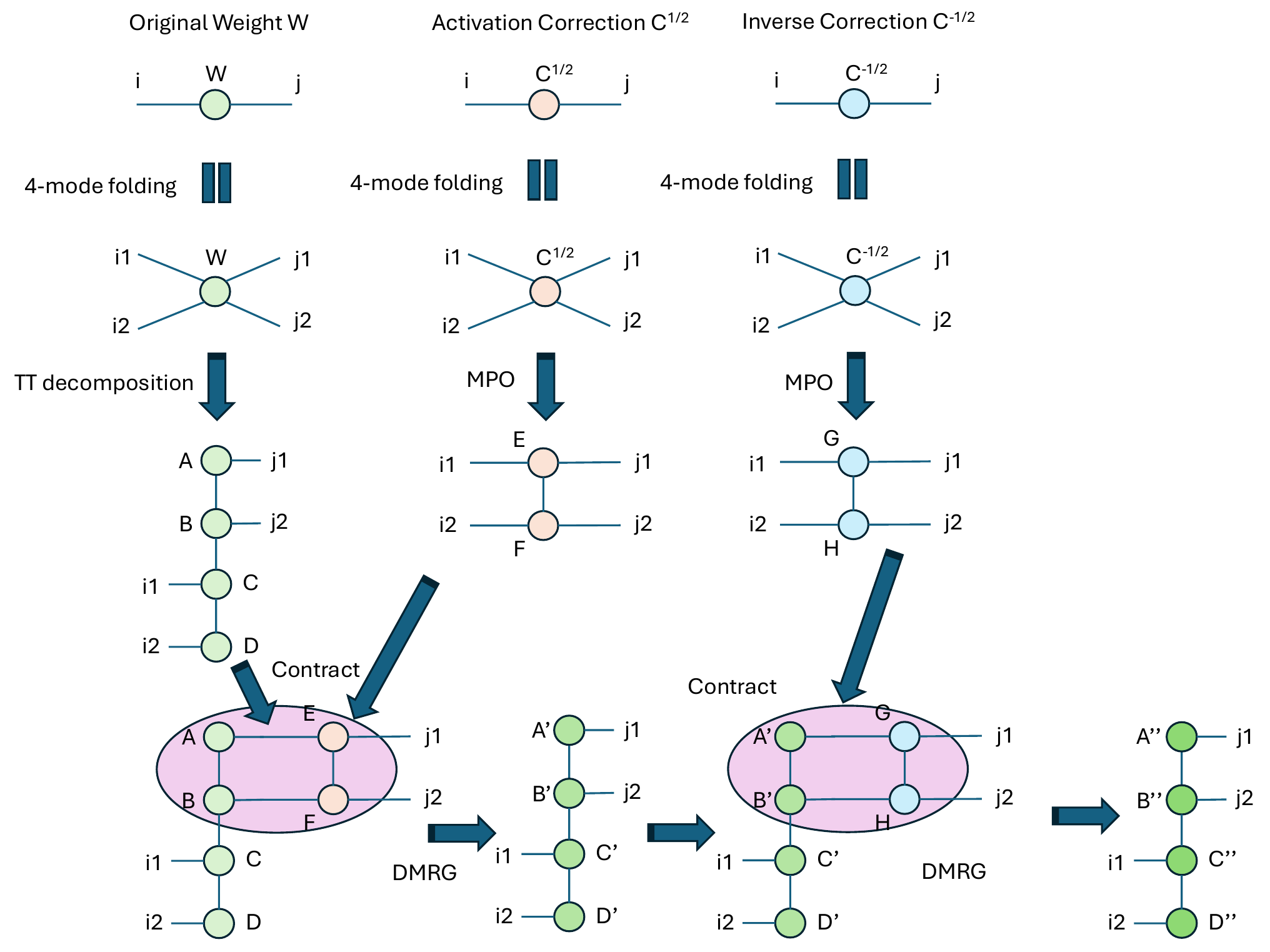}
    \caption{Test-time tensor train adaptation based on online activation preconditioner $C$.}
    \label{fig:testtime}
\end{figure}

\section{Activation compression}
\label{sec:act}

Besides compressing LLM weights or KV cache, we can also compress intermediate activation information, which are required for gradient updates.
For example, consider a linear module taking a forward path: $Y=WX$.
Then the weight update from gradient backpropagation requires:
\begin{align}
    W &\leftarrow
    W - \eta \underbrace{\frac{\partial \mathcal{L}}{\partial Y}}_{G} X^\top,
\end{align}
where $\eta$ is a learning rate, $\mathcal{L}$ is a loss function and $G$ is gradient up to the output $Y$.
This clearly shows the potential memory issue for LLM gradient updates, which require all intermediate activations $X$ across layers.
EinSort can reduce the memory for any tensors including such intermediate activations.

\section{Connection with quantum gates}
\label{sec:quantum}

Tensor index permutations in EinSort are closely related to entangling operations in quantum tensor networks. 
In particular, reversible Boolean permutations can be interpreted as classical analogues of quantum logic gates including controlled-NOT (CNOT) and Toffoli gates. 
For binary indices as in example of (\ref{eq:2x2}), the transformation $(i,j)\mapsto (i,i\oplus j)$ corresponds exactly to the action of a CNOT gate, where one index acts as a control bit and the other is conditionally flipped. 
Indeed, CNOT corresponds to a permutation matrix:
\begin{align}
    \mathsf{CNOT} &=
    \begin{bmatrix}
        1 & 0 & 0 & 0 \\
        0 & 1 & 0 & 0 \\
        0 & 0 & 0 & 1 \\
        0 & 0 & 1 & 0
    \end{bmatrix}.
\end{align}

As an another example, consider a $4\times 4$ matrix $W_0$ to apply row-wise sorting:
\begin{align}
    W_0 =
    \begin{tikzpicture}[baseline=(m.center),remember picture]
    \node (m) at (0,0) {
    $
    \begin{bmatrix}
        4.0 & 1.0 & 2.0 & 3.0 \\
        2.1 & 3.1 & 4.1 & 1.1 \\
        1.2 & 3.2 & 4.2 & 2.2 \\
        3.3 & 2.3 & 1.3 & 4.3 
    \end{bmatrix}
    $
    };
    \draw[->, blue] (-1.7,0.9) -- (-1.7,-0.9) node[midway,left] {$i$};
    \draw[->, blue] (-1.5,1.0) -- (1.5,1.0) node[midway,above] {$j$};
    \end{tikzpicture}
    ,
    \qquad
    \pi &=
    \begin{tikzpicture}[baseline=(m.center),remember picture]
    \node (m) at (0,0) {
    $
    \begin{bmatrix}
        0 & 3 & 2 & 1\\
        2 & 1 & 0 & 3 \\
        2 & 1 & 3 & 0 \\
        3 & 0 & 1 & 2\\
    \end{bmatrix}
    $
    };
    \draw[->, blue] (-1.1,0.9) -- (-1.1,-0.9) node[midway,left] {$i$};
    \draw[->, blue] (-0.9,1.0) -- (0.9,1.0) node[midway,above] {$j_1\oplus j_0\oplus i_0, j_0\oplus i_1$};
    \end{tikzpicture}
    ,
    \qquad
    W_\pi =
    \begin{bmatrix}
        4.0 & 3.0 & 2.0 & 1.0 \\
        4.1 & 3.1 & 2.1 & 1.1 \\
        4.2 & 3.2 & 2.2 & 1.2 \\
        4.3 & 3.3 & 2.3 & 1.3 
    \end{bmatrix}
    ,
    \label{eq:4x4}
\end{align}
where the row index $i\in\mathbb{Z}_4$ can be represented with two binary indices $i_1\in\mathbb{Z}_2$ and $i_0\in\mathbb{Z}_2$ as $i=2i_1 + i_0$.
Similarly the column index $j\in\mathbb{Z}_4$ has 2 binary indices as $j_1\in\mathbb{Z}_2$ and $j_0\in\mathbb{Z}_2$ such that $j=2j_1 + j_0$.
Then the row-wise permutation $\pi$ can be expressed as: 
$j_1\leftarrow j_1+j_0+i_0\pmod 2$, $j_0\leftarrow j_0+i_1\pmod 2$.
The column index $j$ is now dependent on row index $i$, which is the important factor to reduce the rank as discussed in Appendix~\ref{sec:theory}.
This is entangling operator realized by CNOT chain as depicted in Fig.~\ref{fig:cnot}.
The singular values for the unsorted matrix $W_0$ is about $[10.6, 4.1, 1.6, 0.6]$ and that for $W_\pi$ is about $[11.5, 0.2, 0.0, 0.0]$, which is nearly a rank of 2.
And, hence the sorted matrix can be well approximated by lower-rank tensor decomposition.
Note that the CNOT chain for permutation is reversible, and we can use conjugate CNOT chain to contract with an activation $X$ as in this figure.

\begin{figure}[t]
    \centering
    \includegraphics[width=0.9\linewidth]{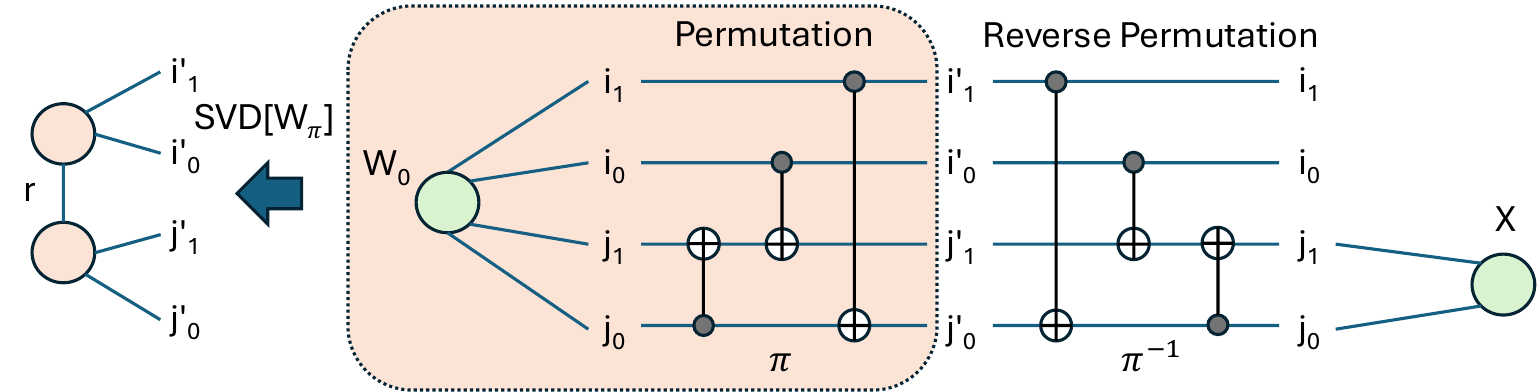}
    \caption{CNOT chain to realize the row-wise sorting $\pi$ for $W_0$ in (\ref{eq:4x4}).
    The sorted matrix $W_\pi$ can then be decomposed in low-rank tensors.
    The input $X$ is contracted with the decomposed $W_\pi$ through reverse permutation $\pi^{-1}$ with conjugate CNOT chain.}
    \label{fig:cnot}
\end{figure}

However, CNOT plus NOT gates can only realize even affine permutation.
More generally, arbitrary reversible permutation can be realized by CNOT plus Toffoli gates, e.g., $(i,j,k)\mapsto (i,j,k\oplus ij)$, where two control bits jointly determine the target transformation. 
From the tensor-network perspective, these reversible index transformations may convert highly entangled tensors into representations with lower effective bond dimensions, thereby exposing latent low-rank structure prior to tensor decomposition. 
This connection suggests broader links between tensor compression, reversible computation, and quantum-inspired representations, where index permutations play a role to basis transformations that simplify entanglement patterns.

Motivated by the interpretation of permutations as compositions of Toffoli-like reversible gates, we can further reduce the memory footprint required to represent sorting permutations. 
Specifically, after obtaining a target permutation vector, we convert the permutation into a sequence of Toffoli gate chains, which induces an equivalent Boolean function representation. 
To reduce the Toffoli gates, we can 
Using ancilla bits can further reduce the reversible circuits via Pebble game theory~\cite{li1998reversible}.
The resulting Boolean function can then be compressed into a compact sum-of-products form using standard logic synthesis techniques such as the Quine–McCluskey algorithm~\citep{mccluskey1956minimization} or Espresso~\citep{brayton1984logic}. 
Using PyEDA\footnote{\url{https://github.com/cjdrake/pyeda}}, we empirically confirmed that the number of permutation gates can be substantially reduced, particularly when introducing don't-care sets corresponding to low-score permutations that can be ignored with minimal impact on sorting quality. For example, the total number of gates was reduced from $1976$ to $744$ when allowing approximately $50\%$ permutation cancellation through don't-care optimization.
We also use RevKit\footnote{\url{https://github.com/msoeken/revkit}} to reduce the reversible gates for permutations.

\section{Algebraic polynomial permutation}
\label{sec:poly}

CNOT gates work binary axis, while it can be extended to arbitrary dimension by modular additions like: $(i,j) \mapsto (i,j+i\mod d_j)$ for the second dimension of $d_j$.
A related method to parameterize permutation is to use algebraic polynomial permutation.
For example, quadratic polynomial permutation (QPP)~\cite{takeshita2007permutation} uses the following conversion:
\begin{align}
    \pi(i) & = a + b\, i + c\,i^2 \mod n,
\end{align}
where $i\in\mathbb{Z}_n$ is an index of size $n$.
The bijective conditions are $\mathrm{gcd}(b,n)=1$ and $\mathrm{rad}(n)\mid c$ where $\mathrm{gcd}$ denotes the greatest common multiple and $\mathrm{rad}$ denotes the radical.
We may optimize the coefficients of polynomials to improve the tensorization accuracy.
However, polynomial over a finite ring does not cover all permutations unless $n$ is a prime number.
For arbitrary $n$, algebraic polynomial over a finite field such as Galois field can realize any reversible permutation instead.
We may use Lagrange interpolation over Galois field as $\pi(x)=\sum_i y_i \prod _{j\neq i} (x-x_j)/(x_i-x_j)$ to pass through major sorted index conversion $x_i\mapsto y_i$.
To design polynomial permutation on Galios field, we use \texttt{galios.laglange\_poly}\footnote{\url{https://github.com/mhostetter/galois}}.

\section{Learned permutation operations}
\label{sec:learn}

In this paper, we focus on practical permutations based on sliced sorting with nonlinear mapping and reduction variety.
Even though sorting has a strong theoretical justification as discussed in Appendix~\ref{sec:theory}, it is just one of heuristics we discovered.
For more general cases, the permutations and nonlinear mappings can be learned to optimize them.
For example, permutation operations can be optimized through gradient updates, e.g., using NeuralSort~\citep{grover2019stochastic}, SortSoft~\cite{prillo2020softsort}, or Gumbel--Sinkhorn~\citep{mena2018learning}.
Our objective could be formulated as follows:
\begin{align}
    \alpha \|W-\mathcal{T}_\theta^\pi[W]\|^2 
    +
    \lambda |[\theta,\pi]|,
\end{align}
where $|[\theta,\pi]|$ denotes the total memory cost for tensorization hyperparameters $\theta$ and permutation $\pi$.
Note that the first term can be extended to deal with input/output preconditioners as discussed in Appendix~\ref{sec:act}.

We can also include nonlinear mapping in $\theta$ not only for sorting but also tensorization.
Specifically, we apply nonlinear mapping to $W$ like exponentiation, and after decomposition, we use inverse nonlinear mapping to reconstruct.
More specifically, we can write as
\begin{align}
    \mathcal{T}_\theta^\pi[W]
    &=
    \phi^{-1}[\pi^{-1}[\mathcal{T}_\theta [\pi[\phi[W]]]],
\end{align}
where $\phi[\cdot]$ denotes reversible nonlinear mappings such as exponentiation, and $\phi[\cdot]^{-1}$ is its inverse.
Note that some function like absolute operation $|\cdot|$ can be reversible if we retain the sign information along with the absolute value, which we call non-negative tensorization.
When some hyperparameters are not differentiable, we could employ reinforcement learning to design permutations and nonlinear mappings as well as tensor ranks, tensor shapes, and topologies.

\section{Folding and ordering}
\label{sec:fold}

Consider a mode-4 tensor $X$ of shape $[d_1, d_2, d_3, d_4]$, e.g., the number of layers $d_1$, binary axis indicating key or value for $d_2$, number of heads for $d_3$, and the head dimension for $d_4$.
There are a few steps to use different tensor ordering, folding, and unfolding.
We can decompose it into many different ways.
For example, we may use:
\begin{itemize}
    \item 4-mode TT for $X$ as is;
    \item 4-mode TT for axis-swapped version of $X$, like \texttt{X.permute(3,1,0,2)};
    \item Use the second axis as a batch dimension, and apply 3-mode TT for the rest 3 axis;
    \item Unfold into $[d_1 \times d_2, d_3, d_4]$, and apply 3-mode TT: \texttt{X.reshape(-1, d3, d4)}; 
    \item Fold $X$ into 6-mode tensor like \texttt{X.reshape(a,b,c,d,e,f)}, and apply 5-mode TT for the last 5 axis.
    
\end{itemize}
As the tensor decomposition depends on the mode ordering, axis swapping and its folding shape can be adjusted so that the reconstruction is minimized.
Similarly, before decomposition, we use sorting operations and we have flexibility to choose which axis to sort.
Note that sorting axis and decomposition axis need not be identical or perfectly overlapped.
In addition, we have additional degrees of freedom to choose the axis for slice reduction.
Fig.~\ref{fig:fold} shows the EinSort framework using different folding/unfolding and axis ordering for sorting, reduction, and decomposition.
We also use different nonlinear mappings for sorting $\phi$ and decomposition $\varphi$, in this figure.

In addition, the sorting axis can be sequential as discussed in Appendix~\ref{sec:sorting}, e.g., axis $[3,5]$ to sort first, and then $[2,4]$ to sort later, rather than sorting all $[2,3,4,5]$ axis.
If the size of axis is all $d$, the whole sorting for the last 4 modes require $\lceil \log_2(d^4!)\rceil/d^4$ bits, whereas the sequential sorting requires $2\lceil \log_2(d^2!) \rceil/d^2$ bits.
For example with $d=32$, it will reduce from $18.6$ to $17.1$ bits.
When doing 4 times in each axis sequentially, then it will be reduced to $14.7$ bits.

\begin{figure}[t]
    \centering
    \includegraphics[width=\linewidth]{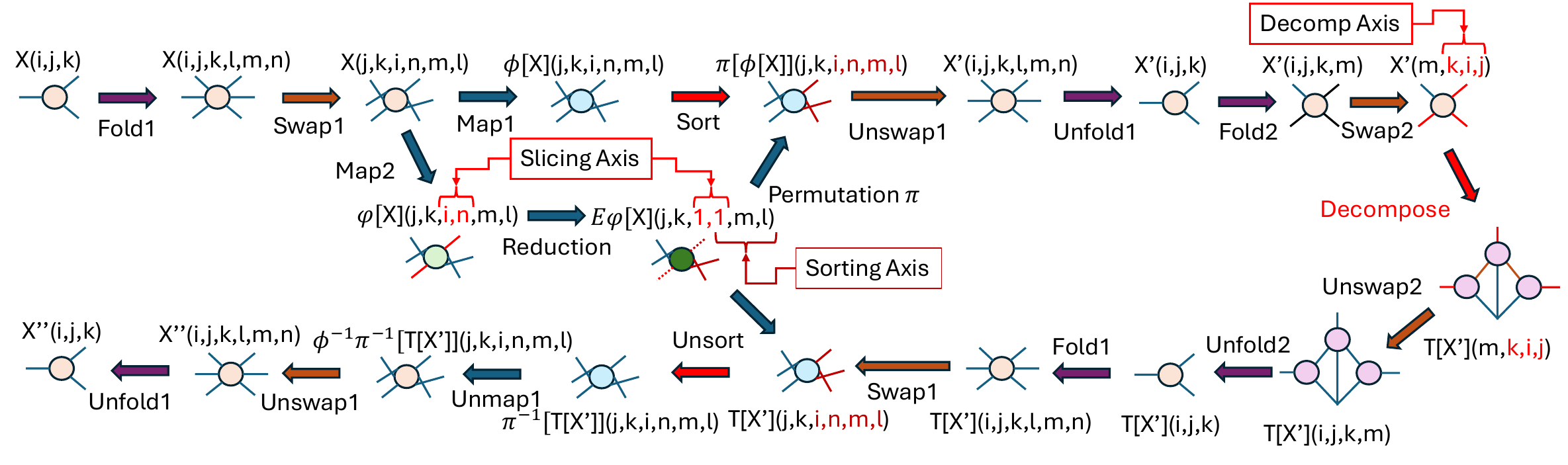}
    \caption{EinSort pipeline with generalized folding/unfolding, ordering, slicing, sorting, and mapping.}
    \label{fig:fold}
\end{figure}

\section{Benefits and limitations}
\label{sec:limit}

EinSort has some benefits:
\begin{itemize}
    \item There is a theoretical foundation suggesting that sorting operation has a potential benefit to reduce tensor ranks.
    \item Sorting operation has a good connection with quantum tensor networks, using entanglement operations.
    \item Einsum-based formulation gives a powerful tool to design almost arbitrary tensor network topology.
    \item EinSort provides a wide range of flexibility to adjust the hyperparameters, including slicing order, mode selection, folding combination, nonlinear mapping, and reduction operations.
    \item It can be used for any tensor compression: e.g., LLM weights decomposition; KV cache reduction; forward activation compression used for backward gradient calculation.
\end{itemize}

Nonetheless, it has some drawbacks and limitations.
\begin{itemize}
    \item Finding best tensor topology, folding shapes, index ordering, etc. is not straightforward.
    \item Computing large tensor factorization is computationally challenging.
    \item Permutation memory is not easy to reduce.
    \item Compatible high-performance CUDA kernels which can optimize contraction and index reordering are not available.
\end{itemize}

\section{Experiments setup}
\label{sec:setup}

We conduct experiments for LLM benchmarks to evaluate the effectiveness of our method.
Our experiments are based on the same setting of SparseLLM~\cite{bai2024sparsellm} and their code base\footnote{\url{https://github.com/BaiTheBest/SparseLLM}}.
Following existing work~\cite{sun2023simple}, we decompose all linear layers in LLM transformers.
We implemented EinSort in PyTorch~\cite{paszke2019pytorch} and used the HuggingFace Transformers library \cite{wolf2019huggingface} for handling models and datasets. 
All experiments are conducted on NVIDIA A40 or A100 GPUs. 

For LLM experiments, we consider the Qwen3~\cite{yang2025qwen3}, Gemma3~\cite{team2025gemma}, and Phi3~\cite{haider2024phi} models.
We show results on different sizes of models to provide a broader picture for the performance of EinSort. 
We measure perplexity score for one of the most widely used benchmarks: raw-WikiText2 (WT2)~\cite{merity2016pointer}.
Details of LLMs and datasets we used are found in Appendices~\ref{sec:model} and \ref{sec:dataset}.

We use the head axis for reduction and $64$-stacked head dimension for sorting, and thus the sliced sorting requires $\lceil \log_2((64\times d)!)\rceil/(64\times d\times h)$ bits, where $d$ and $h$ are head dimension and the number of heads, respectively.
For Qwen3-0.6B/1.7B, the sliced sorting requires $\lceil \log_2((64\times 128)!)\rceil/(64\times 128\times 8)\simeq 1.44$ bits per KV cache parameter.
For Gemma3-4B, the sorting requires $\lceil \log_2((64\times 256)!)\rceil/(64\times 256\times 4)\simeq 3.14$ bits per cache parameter.
For Phi3-mini, the permutation requires
$\lceil \log_2((64\times 96)!)\rceil/(64\times 96\times 32)\simeq 0.35$ bits per parameter.
Note that we can freely adjust the slicing operation to control the required memory.

\section{LLM benchmark}
\label{sec:bench}

\subsection{LLM model variants}
Besides Qwen3-0.6B and Gemme3-4B models in Fig.~\ref{fig:rate2}, we add WT2 perplexity evaluations for various LLM models including Qwen3-1.7B and Phi-3.5-mini.
Fig.~\ref{fig:qwen2b-rate} shows the PPL for Qwen3-1.7B model when KV cache is compressed by regular SVD, full-sort SVD, and EinSort.
The trend is similar to Qwen3-0.6B model in Fig.~\ref{fig:wt2}.

Fig.~\ref{fig:phi3-rate} shows the PPL for Phi3-mini model when KV cache is compressed by regular SVD, full-sort SVD, and EinSort.
We observe that EinSort can significantly improve the tensorization accuracy towards full-sort case (which requires $\lceil \log_2((64\times 96)!)\rceil/(64\times 96) \simeq 11.1$ bits for permutation memory) while the permutation memory is kept small as $0.35$~bits.

\begin{figure}
    \centering
    \begin{subfigure}{0.45\linewidth}
    \includegraphics[width=\linewidth]{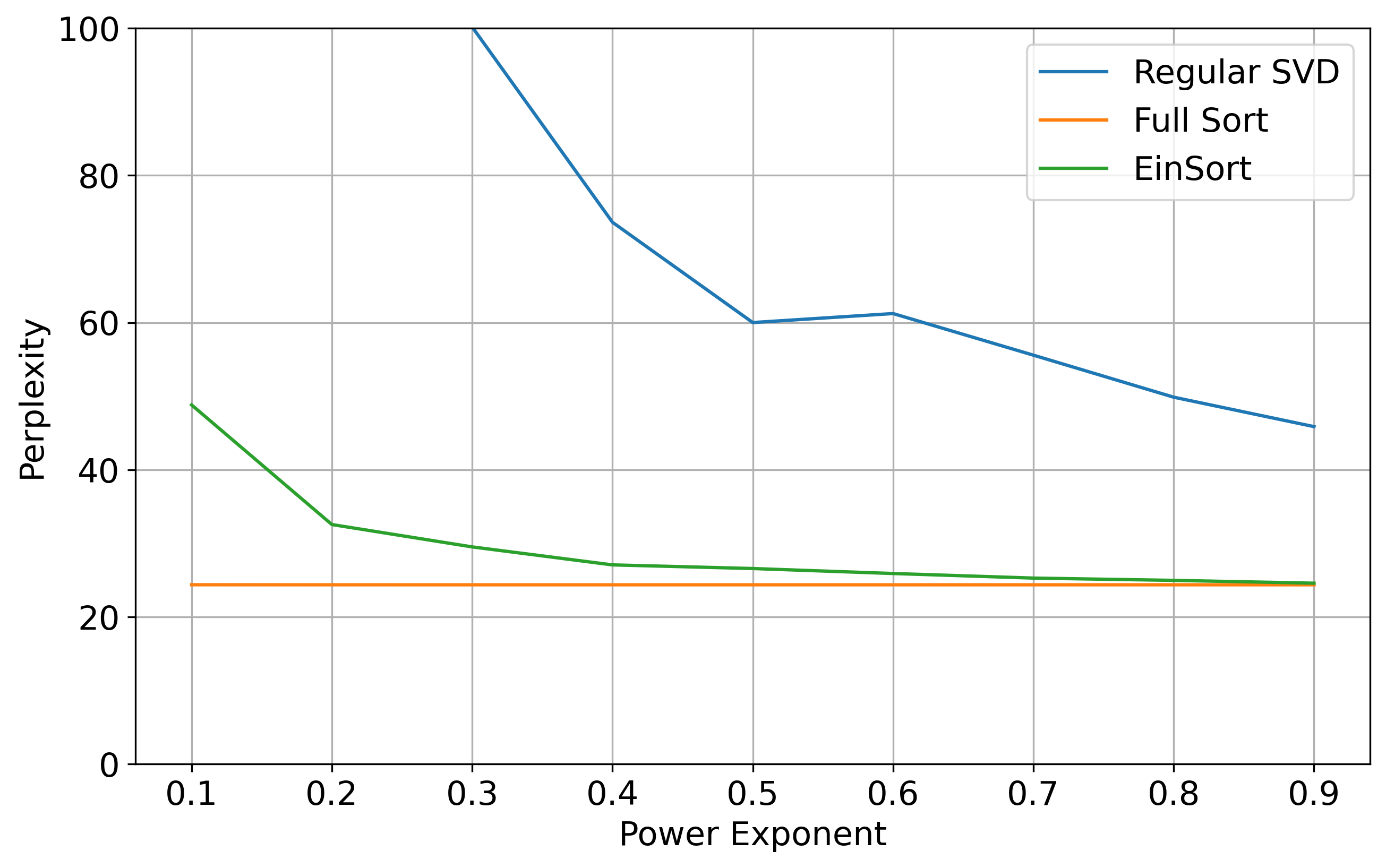}
    \caption{Qwen3-1.7B model.}
    \label{fig:qwen2b-rate}
    \end{subfigure}
    \begin{subfigure}{0.45\linewidth}    \includegraphics[width=\linewidth]{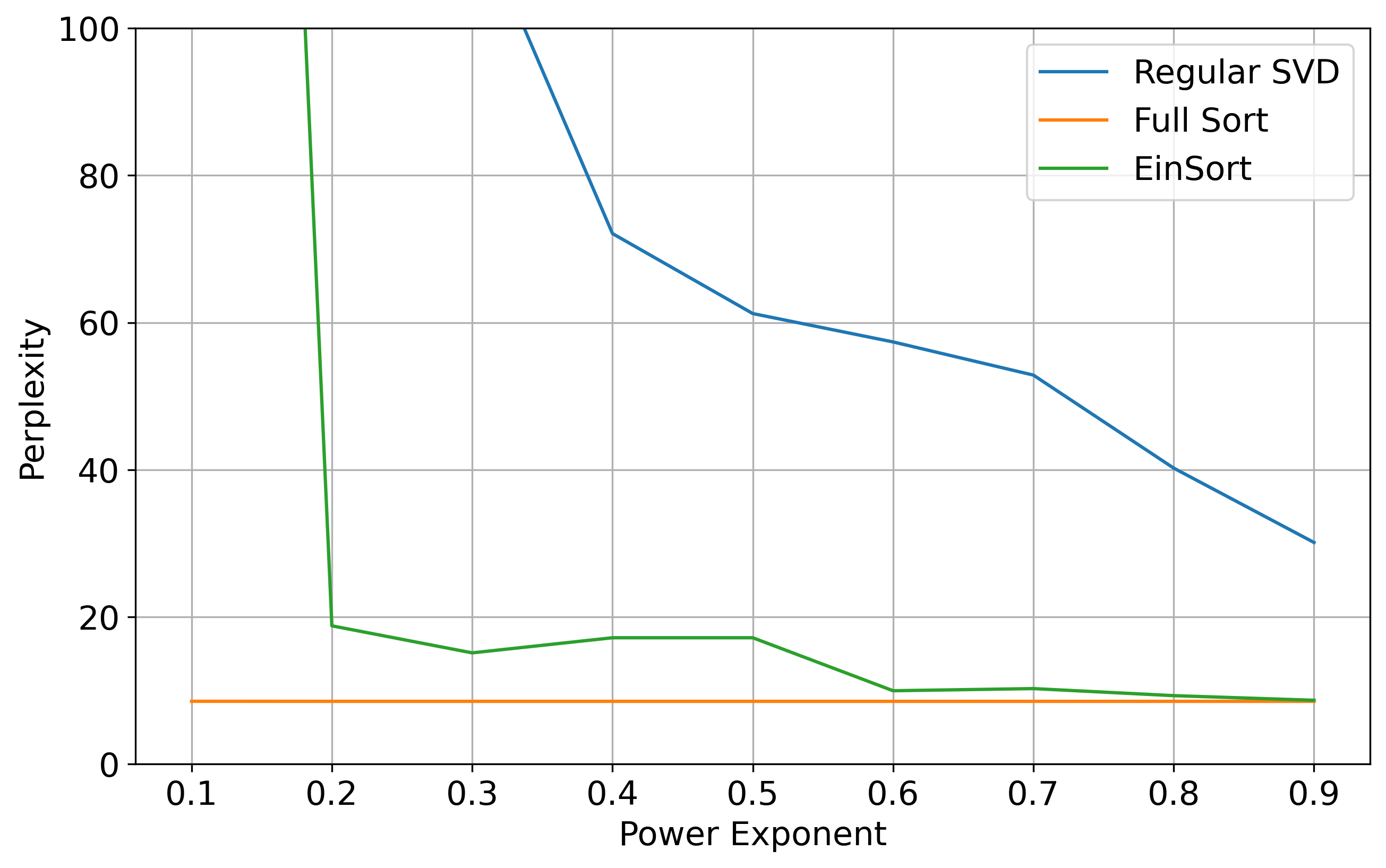}
    \caption{Phi3-mini (3.8B) model.}
    \label{fig:phi3-rate}
    \end{subfigure}
    \caption{PPL over compression rate for Qwen3-1.7B and Phi3-mini (3.8B) models.}
    \label{fig:rate2}
\end{figure}

\subsection{LLM math reasoning}
Besides perplexity evaluations, we add more practical LLM benchmark, specifically GSM8K~\cite{cobbe2021training} for mathematical reasoning.
Fig.~\ref{fig:gsm8k} shows the accuracy for Phi-4-mini when we employ KV cache compression.
As well as the average accuracy, we plot a shaded band to represent the Wilson confidence interval at $99\%$.
Similar to WT2 perplexity results, we observe that EinSort can significantly improve the accuracy over the regular SVD compression.

\begin{figure}[t]
    \centering
    \includegraphics[width=0.7\linewidth]{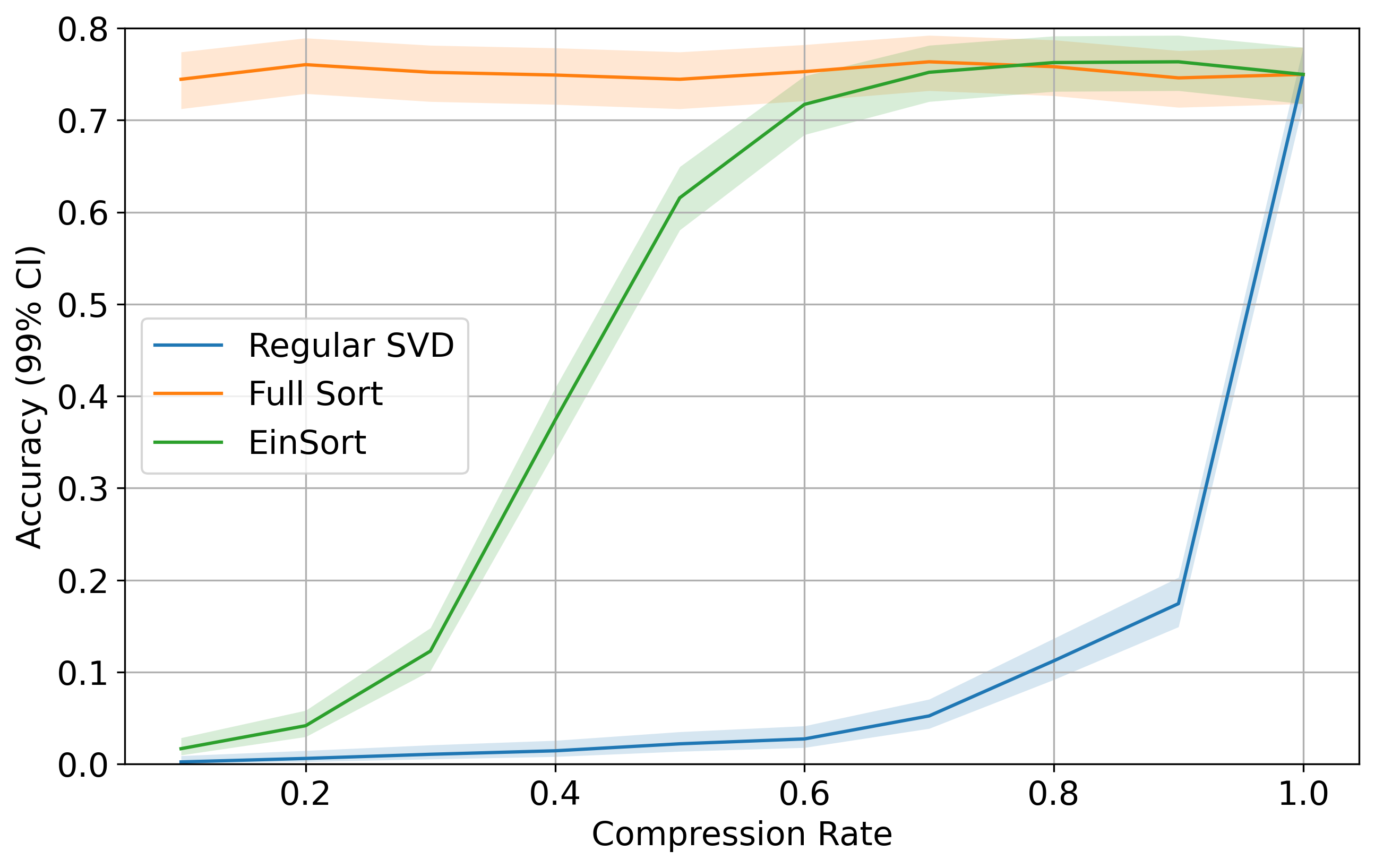}
    \caption{GSM8K accuracy for Phi-4-mini as a function of KV cache compression ratio.
    Shaded zone is Wilson's $99\%$ confidence interval.}
    \label{fig:gsm8k}
\end{figure}

\subsection{VLM visual reasoning}

We next consider practical applications of VLMs on visual reasoning tasks.
Specifically, we use Qwen3-VL~\cite{bai2025qwen3} models on TextVQA~\cite{singh2019towards} benchmark.
Rather than KV cache reduction, we compress weights of linear modules for the LLM backbone in VLM models.
We compare plain SVD, ASVD~\cite{yuan2023asvd}, SVD-LLM~\cite{wang2024svd}, and LatentLLM~\cite{koike2026latentllm} as baselines.
All methods except plain SVD use activation aware preconditioning based on 64 calibration samples in TextVQA train splits.
Fig.~\ref{fig:tvqa} shows performance for Qwen3-VL-4B and 8B models. 
We confirm that the proposed EinSort can keep high accuracy even at 20\% compression, whereas the other baselines show steep degradation when we compress weights.

\begin{figure}
    \centering
    \begin{subfigure}{0.49\linewidth}
        \includegraphics[width=\linewidth]{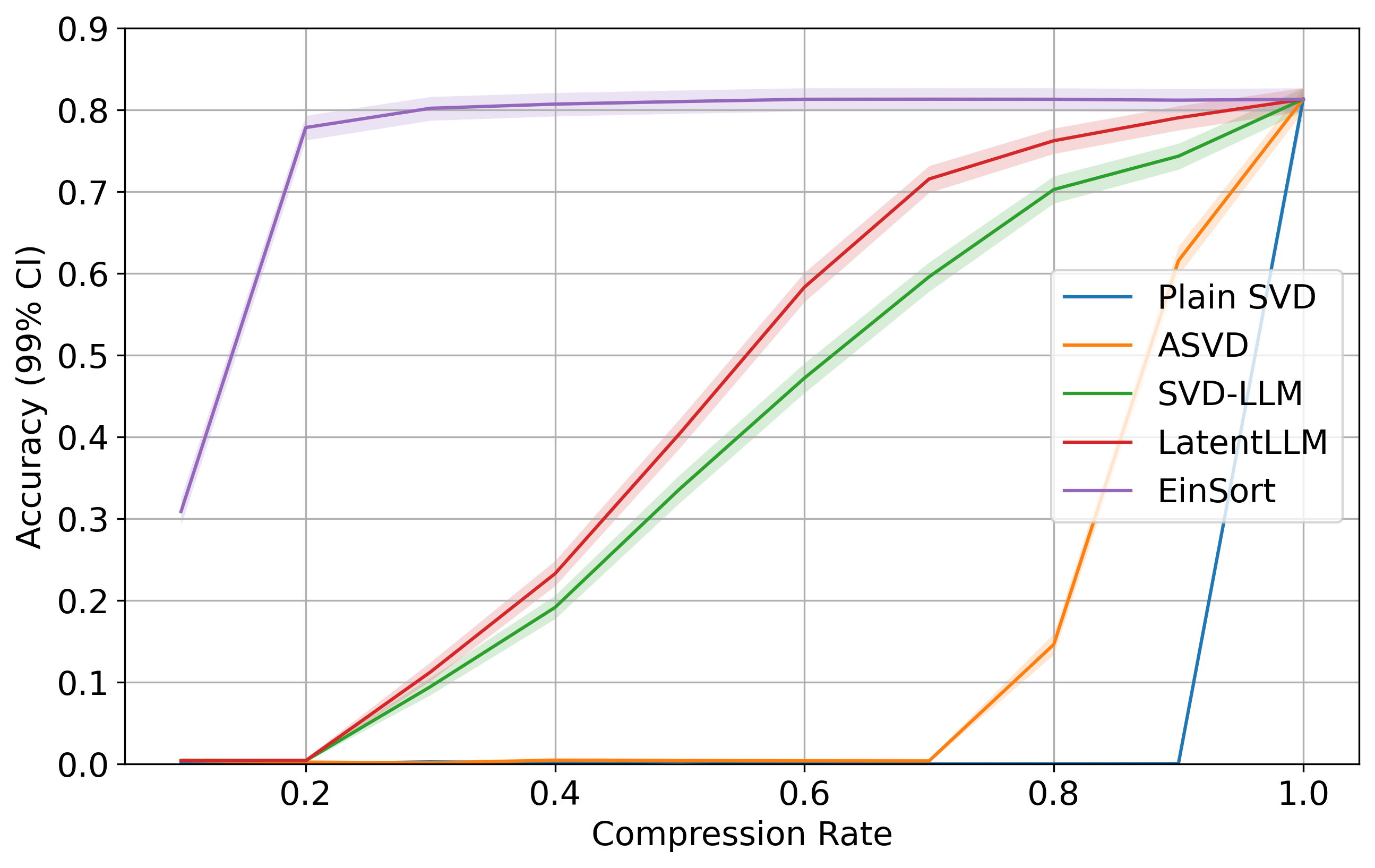}
        \caption{Qwen3-VL-4B-Instruct}
    \end{subfigure}
    \hfill
    \begin{subfigure}{0.49\linewidth}
        \includegraphics[width=\linewidth]{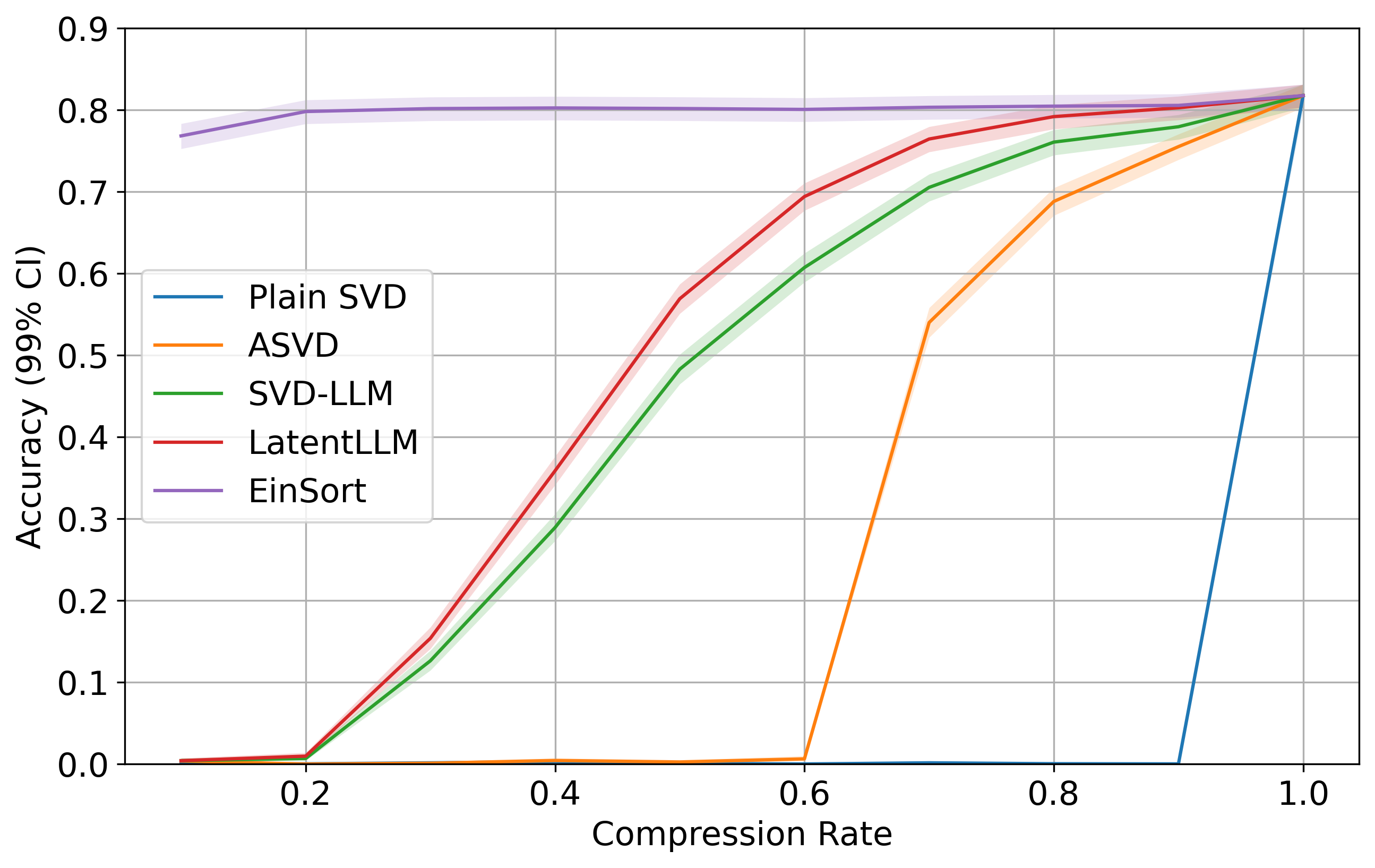}
        \caption{Qwen3-VL-8B-Instruct}
    \end{subfigure}
    \caption{Accuracy for compressed Qwen3-VL models on TextVQA benchmark.}
    \label{fig:tvqa}
\end{figure}

\subsection{VLA robot manipulation}

We further consider practical applications of LLMs on robot manipulation tasks using several VLAs.
Specifically, we evaluate LIBERO~\cite{liu2023libero} robot manipulation benchmarks for X-VLA~\cite{zheng2025x}, $\pi_{0.5}$~\cite{intelligence2025pi05}, and VLA-JEPA~\cite{sun2026vla} foundation models, which use Florence-2~\cite{xiao2024florence}, PaliGemma~\cite{beyer2024paligemma}, and Qwen3-VL~\cite{bai2025qwen3} as VLM backbone, respectively.
Here, we focus on model weight compression via EinSort, rather than KV cache reduction.
We compress all linear modules in attention and multi-layer perceptron (MLP) for the LLM in VLA.

We compare plain SVD, ASVD~\cite{yuan2023asvd}, SVD-LLM~\cite{wang2024svd}, and LatentLLM~\cite{koike2026latentllm} as baselines.
Except plain SVD, we use 64 episodes in LIBERO-Spatial task suite for activation calibration to compute preconditioning.
We evaluate success rates over 800 rollouts on four LIBERO benchmarks: Spatial, Object, Goal, and Long.
Tables~\ref{tab:xvla-libero}, \ref{tab:pi05-libero}, and \ref{tab:jepa-libero} show the success rates for X-VLA, $\pi_{0.5}$, and VLA-JEPA models, respectively.
We verify that our EinSort method significantly outperforms the state-of-the-art weight decomposition methods across all LIBERO benchmarks.
It is interesting to see that EinSort can occasionally perform better than un-compressed VLA models.
It is potentially because of regularization benefit, while we note that the gaps are within the Wilson confidence interval. 
Fig.~\ref{fig:libero} shows example video snapshots using the compressed VLA-JEPA model on LIBERO benchmarks.

\begin{table}[t]
    \centering
    \caption{Success rate ($\uparrow$) of X-VLA model with different quantization methods on LIBERO robot manipulation benchmark at a compression rate of $0.1$.
    \textbf{Bold} and \underline{underline} denote the best and second best, respectively.
    Asterisk ``*'' indicates reaching competitive performance to the original un-compressed VLA.
    }
    \label{tab:xvla-libero}
    \newcolumntype{g}{>{\columncolor{lblue}}r}
    \begin{tabular}{lrrrrgr}
    \toprule
    Benchmark & 
     Spatial &  Object &  Goal & Long & Avg & (95\% CI) \\
    \midrule
    Original &
    95.0\% & 98.5\% & 93.5\% & 87.5\% & 93.63\% & (91.72--95.12\%)
    \\
    \addlinespace
    Plain SVD & 7.5\% & 1.0\% & 0.0\% & 2.0\% & 2.63\% & (1.72--3.98\%)
    \\
    ASVD & 32.5\% & 21.0\% & \underline{5.0}\% & 13.0\% & 17.88\% & (15.38--20.68\%)
    \\
    SVD-LLM & 41.0\% & \underline{26.0}\% & 4.0\% & 31.0\% & 25.50\% & (22.60--28.63\%) %
    \\
    LatentLLM & \underline{43.0}\% & \underline{26.0}\% & 3.5\% & \underline{32.5}\% & \underline{26.25}\% & (23.32--29.41\%) %
    \\
    \rowcolor{lgreen}
    \rowcolor{lgreen}
    \textbf{EinSort} &
    \textbf{93.5}\% & \textbf{97.5}\% & \textbf{90.5}\% & *\textbf{88.0}\% & \textbf{92.38}\% & (90.33--94.02\%)
    \\
    \bottomrule
    \end{tabular}
\end{table}

\begin{table}[t]
    \centering
    \caption{Success rate ($\uparrow$) of $\pi_{0.5}$ VLA model with different quantization methods on LIBERO robot manipulation benchmark at a compression rate of $0.4$. 
    \textbf{Bold} and \underline{underline} denote the best and second best, respectively.
    Asterisk ``*'' indicates reaching competitive performance to the original un-compressed VLA.
    }
    \label{tab:pi05-libero}    
    \newcolumntype{g}{>{\columncolor{lblue}}r}

    \begin{tabular}{lrrrrgr}
    \toprule
    Benchmark & 
     Spatial &  Object &  Goal & Long & Avg & (95\% CI) \\
    \midrule
    Original &
    97.5\% & 100.0\% & 97.0\% & 96.5\% & 97.75\% & (96.47--98.57\%)
    \\
    \addlinespace
    Plain SVD & 27.0\% & 34.5\% & 8.5\% & 1.0\% & 17.75\% & (15.26--20.55\%)
    \\
    ASVD & 4.0\% & 5.5\% & 7.5\% & 0.0\% & 4.25\% & (3.06--5.88\%)
    \\
    SVD-LLM & 60.0\% & 79.0\% & 32.0\% & 27.0\% & 49.50\% & (45.05--52.96\%)
    \\
    LatentLLM & \underline{66.5}\% & \underline{88.5}\% & \underline{37.0}\% & \underline{37.5}\% & \underline{57.38}\% & (53.92--60.76\%)
    \\
    \rowcolor{lgreen}
    \rowcolor{lgreen}
    \textbf{EinSort} &
    \textbf{96.0}\% & \textbf{99.5}\% & *\textbf{97.5}\% & \textbf{95.5}\% & \textbf{97.13}\% & (95.72--98.08\%)
    \\
    \bottomrule
    \end{tabular}    
\end{table}

\begin{table}[t]
    \centering
    \caption{Success rate ($\uparrow$) of VLA-JEPA model with different quantization methods on LIBERO robot manipulation benchmark at a compression rate of $0.8$.
    \textbf{Bold} and \underline{underline} denote the best and second best, respectively.
    Asterisk ``*'' indicates reaching competitive performance to the original un-compressed VLA.
    }
    \label{tab:jepa-libero}
    \newcolumntype{g}{>{\columncolor{lblue}}r}
    \begin{tabular}{lrrrrgr}
    \toprule
    Benchmark & 
     Spatial &  Object &  Goal & Long & Avg & (95\% CI) \\
    \midrule
    Original &
    97.5\% & 100.0\% & 99.0\% & 93.5\% & 97.50\% & (96.17--98.38\%)
    \\
    \addlinespace
    Plain SVD & 0.0\% & 0.0\% & 0.0\% & 0.0\% & 0.00\% & (0.00--0.48\%) %
    \\
    ASVD & 62.0\% & \underline{75.5}\% & 49.5\% & 21.0\% & 52.00\% & (48.54--55.44\%) %
    \\
    SVD-LLM & 18.5\% & 16.0\% & 19.0\% & 0.5\% & 13.50\% & (11.31--16.04\%)
    \\ %
    LatentLLM & \underline{87.0}\% & 64.5\% & \underline{59.0}\% & \underline{28.5}\% & \underline{59.75}\% & (56.31--63.09\%) %
    \\
    \rowcolor{lgreen}
    \rowcolor{lgreen}
    \textbf{EinSort} &
    *\textbf{98.0}\% & *\textbf{100.0}\% & \textbf{98.0}\% & *\textbf{95.0}\% & *\textbf{97.75}\% & (96.47--98.57\%) %
    \\
    \bottomrule
    \end{tabular}
\end{table}

\begin{figure}[t]
    \centering
    \begin{subfigure}{\linewidth}
        \includegraphics[width=\linewidth]{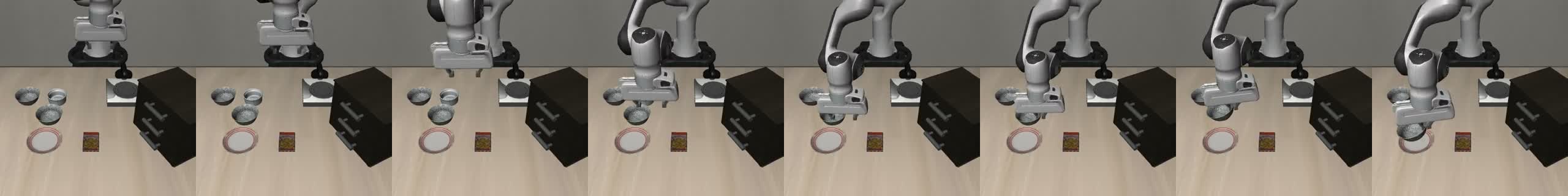}
        \caption{LIBERO-Spatial task 0: ``pick up the black bowl between the plate and the ramekin and place it on the plate''}
    \end{subfigure}
    \begin{subfigure}{\linewidth}
        \includegraphics[width=\linewidth]{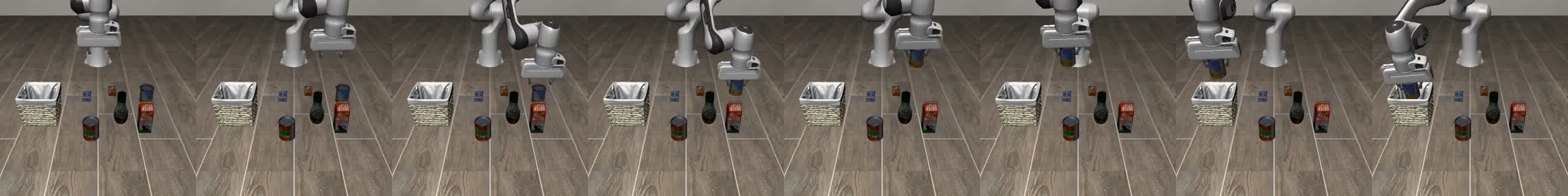}
        \caption{LIBERO-Object task 0: ``pick up the alphabet soup and place it in the basket''}
    \end{subfigure}
    \begin{subfigure}{\linewidth}
        \includegraphics[width=\linewidth]{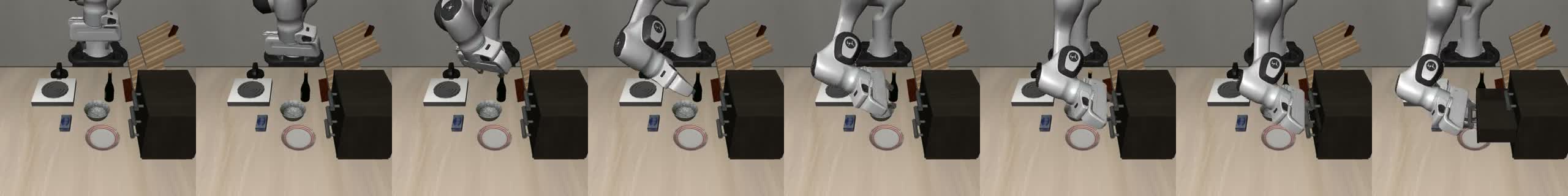}
        \caption{LIBERO-Goal task 0: ``open the middle drawer of the cabinet''}
    \end{subfigure}
    \begin{subfigure}{\linewidth}
        \includegraphics[width=\linewidth]{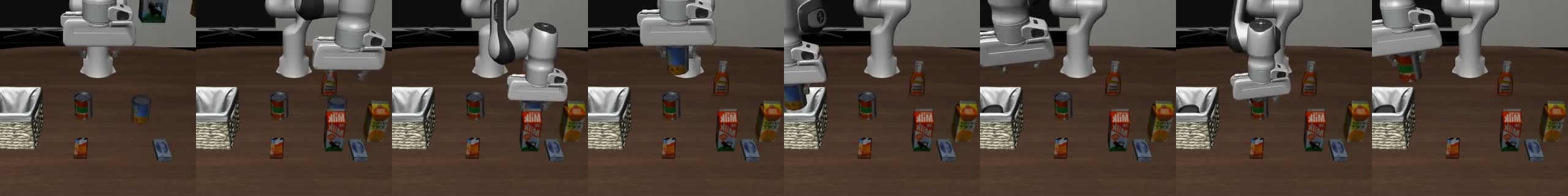}
        \caption{LIBERO-Long task 0: ``put both the alphabet soup and the tomato sauce in the basket''}
    \end{subfigure}
    \caption{Robot manipulation video snapshots for the compressed VLA-JEPA model on LIBERO benchmarks.}
    \label{fig:libero}
\end{figure}

\section{Large foundation models}
\label{sec:model}

\paragraph{Qwen3}
We use Qwen3~\citep{yang2025qwen3} dense models, which are decoder-only transformers spanning 270M to 30B parameters, built on a consistent architecture with RMSNorm, SwiGLU feed-forward layers, and rotary positional embeddings. 
All variants employ grouped-query attention (GQA) with a fixed small number of key–value heads while scaling the number of query heads with model width, reducing KV-cache cost. 
Importantly, the hidden size is decoupled from the attention projection width, providing additional flexibility.
Parameters are listed in Table~\ref{tab:qwen3-dense}.
It is under Apache-2.0 license.

\paragraph{Gemma3}
Gemma3 models~\citep{team2025gemma} are decoder-only transformer architectures released across a wide range of scales, from 270M to 27B parameters, and include both text-only and multimodal variants. 
Similar to Qwen3, all Gemma3 models adopt RMSNorm, SwiGLU feed-forward networks, and rotary positional embeddings, as well as grouped-query attention (GQA). 
The per-head dimension is fixed at 256 across model sizes. 
Parameters are listed in Table~\ref{tab:gemma3-it}.
It is licensed under the Gemma terms of use.

\paragraph{Phi-3}
Phi-3 models~\citep{haider2024phi} are language models developed for efficient on-device and edge AI. 
Phi-3 models achieve strong reasoning and coding performance through high-quality training data and optimized transformer architectures.
Parameters are listed in Table~\ref{tab:phi3}.
It is under MIT license.

\paragraph{Phi-4}
Phi-4 models~\citep{abdin2024phi} are a family of compact language and multimodal foundation models for efficient reasoning, coding, and on-device AI. 
The family includes text-only, reasoning-specialized~\cite{aneja2026phi}, and multimodal variants~\cite{abouelenin2025phi} supporting language, vision, and audio inputs, while maintaining strong efficiency relative to model size. 
Parameters are listed in Table~\ref{tab:phi4}. 
It is released under the MIT license.

\begin{table}[t]
\centering
\caption{Architecture parameters of Qwen3 dense models~\cite{yang2025qwen3}}
\label{tab:qwen3-dense}
\small
\begin{tabular}{rrrrrrrl}
\toprule
Model
& layers
& heads
& KV heads
& hidden size
& head dim
& MLP dim 
& Huggingface ID \\
\midrule
0.6B
& 28 & 16 & 8
& 1024 & 128 
& 3072 &
\href{https://huggingface.co/Qwen/Qwen3-0.6B}
  {Qwen/Qwen3-0.6B}\\

1.7B
& 28 & 16 & 8
& 2048 & 128 
& 6144 &
\href{https://huggingface.co/Qwen/Qwen3-1.7B}
  {Qwen/Qwen3-1.7B}\\

4B
& 36 & 32 & 8
& 2560 & 128 
& 9728 &
\href{https://huggingface.co/Qwen/Qwen3-4B}
  {Qwen/Qwen3-4B}\\

8B
& 36 & 32 & 8
& 4096 & 128 
& 12288 &
\href{https://huggingface.co/Qwen/Qwen3-8B}
  {Qwen/Qwen3-8B}\\

14B
& 40 & 40 & 8
& 5120 & 128 
& 17408 &
\href{https://huggingface.co/Qwen/Qwen3-14B}
  {Qwen/Qwen3-14B}\\

32B
& 64 & 64 & 8
& 5120 & 128 
& 25600 &
\href{https://huggingface.co/Qwen/Qwen3-32B}
  {Qwen/Qwen3-32B}\\
\bottomrule
\end{tabular}
\end{table}

\begin{table}[t]
\centering
\caption{Gemma3 instruction-tuned text transformer parameters~\cite{team2025gemma}}
\label{tab:gemma3-it}
\small
\begin{tabular}{rrrrrrrl}
\toprule
Model
& layers
& heads
& KV heads
& hidden size
& head dim
& MLP dim
& Huggingface ID \\
\midrule
270M
& 18 & 4 & 1
& 640 & 256 & 2048
& \href{https://huggingface.co/google/gemma-3-270m-it}
  {google/gemma-3-270m-it} \\

1B
& 26 & 4 & 1
& 1152 & 256 & 6912
& \href{https://huggingface.co/google/gemma-3-270m-it}
  {google/gemma-3-1b-it} \\

4B
& 34 & 8 & 4
& 2560 & 256 & 10240
& \href{https://huggingface.co/google/gemma-3-270m-it}
  {google/gemma-3-4b-it} \\

12B
& 48 & 16 & 8
& 3840 & 256 & 15360
& \href{https://huggingface.co/google/gemma-3-270m-it}
  {google/gemma-3-12b-it} \\

27B
& 62 & 32 & 16
& 5376 & 256 & 21504
& \href{https://huggingface.co/google/gemma-3-270m-it}
  {google/gemma-3-27b-it} \\
\bottomrule
\end{tabular}
\end{table}

\begin{table}[t]
\centering
\caption{Phi-3 instruction-tuned text transformer parameters~\cite{haider2024phi}}
\label{tab:phi3}
\small
\begin{tabular}{rrrrrrrl}
\toprule
Model
& layers
& heads
& KV heads
& hidden size
& head dim
& MLP dim
& Huggingface ID \\
\midrule
mini (3.8B)
& 32 & 32 & 32
& 3072 & 96 & 8192
& \href{https://huggingface.co/microsoft/Phi-3.5-mini-instruct}
  {microsoft/Phi-3.5-mini-instruct} \\

small (7B)
& 32 & 32 & 8
& 4096 & 128 & ---
& \href{https://huggingface.co/microsoft/Phi-3-small-4k-instruct}
  {microsoft/Phi-3-small-8k-instruct} \\

medium (14B)
& 40 & 40 & 10
& 5120 & 128 & 17920
& \href{https://huggingface.co/microsoft/Phi-3-medium-4k-instruct}
  {microsoft/Phi-3-medium-4k-instruct} \\
\bottomrule
\end{tabular}
\end{table}

\begin{table}[t]
\centering
\caption{Phi-4 instruction-tuned text transformer parameters~\cite{abouelenin2025phi,aneja2026phi}}
\label{tab:phi4}
\small
\begin{tabular}{rrrrrrrl}
\toprule
Model
& layers
& heads
& KV heads
& hidden size
& head dim
& MLP dim
& Huggingface ID \\
\midrule
mini (3.8B)
& 32 & 32 & 8
& 3072 & 128 & 8192
& \href{https://huggingface.co/microsoft/Phi-4-mini-instruct}
  {microsoft/Phi-4-mini-instruct} \\

reasoning (14B)
& 40 & 40 & 10
& 5120 & 128 & 17920
& \href{https://huggingface.co/microsoft/Phi-4-reasoning}
  {microsoft/Phi-4-reasoning} \\
\bottomrule
\end{tabular}
\end{table}

\paragraph{Qwen3-VL}
Qwen3-VL~\cite{bai2025qwen3} is a family of multimodal LLMs that extend the Qwen3 transformer with vision–language capabilities. 
The models integrate a SigLIP-based vision encoder~\cite{zhai2023sigmoid} with the Qwen3 language backbone~\cite{yang2025qwen3} through a projection module, enabling joint reasoning over images, videos, and text for tasks such as visual question answering, captioning, and document understanding.
Parameters are listed in Table~\ref{tab:qwen3vl}.
It is licensed under Apache-2.0.

\begin{table}[t]
\centering
\caption{Qwen3-VL transformer parameters~\cite{bai2025qwen3}}
\label{tab:qwen3vl}
\small
\setlength{\tabcolsep}{4pt}
\begin{tabular}{rrrrrrrl}
\toprule
Model
& layers
& heads
& KV heads
& hidden size
& head dim
& MLP dim
& Hugginface ID
\\
\midrule
2B	& 24 & 16 & 8 &
2048 & 128 & 11008 &
\href{https://huggingface.co/Qwen/Qwen3-VL-2B-Instruct}
  {Qwen/Qwen3-VL-2B-Instruct}
\\
4B & 32 & 32 & 8 &
4096 & 128 & 22016 &
\href{https://huggingface.co/Qwen/Qwen3-VL-4B-Instruct}
  {Qwen/Qwen3-VL-4B-Instruct}
\\
8B & 36 & 32 & 8 &
4096 & 128 & 22016 &
\href{https://huggingface.co/Qwen/Qwen3-VL-8B-Instruct}
  {Qwen/Qwen3-VL-8B-Instruct}
\\
32B & 64 & 40 & 8 &
5120 & 128 & 27392 &
\href{https://huggingface.co/Qwen/Qwen3-VL-32B-Instruct}
  {Qwen/Qwen3-VL-32B-Instruct}
\\
\bottomrule
\end{tabular}
\end{table}

\paragraph{X-VLA}
X-VLA (Cross-modal Vision–Language–Action)~\cite{zheng2025x} is a unified framework that jointly models visual perception, language understanding, and action generation within a shared representation space. 
Leveraging multimodal pretraining, X-VLA captures compositional and temporal structure, allowing robust generalization across tasks and environments for embodied AI systems.
It uses Florence-2~\cite{xiao2024florence} as a VLM backbone, which uses DaViT-B vision encoder~\cite{ding2022davit}.
The parameter is listed in Table~\ref{tab:xvla}.
It is released under the Apache-2.0 license.

\paragraph{$\boldsymbol{\pi_{0.5}}$}

The $\pi_{0.5}$ model~\cite{intelligence2025pi05} is the state-of-the-art VLA transformer having 2.3B parameters, that maps visual observations and language instructions directly to continuous robot actions through flow-matching diffusion policy. 
It uses PaliGemma~\cite{beyer2024paligemma} as a VLM backborne, and Gemma~\cite{team2024gemma} as a flow-matching diffusion policy.
Trained via distillation from a larger foundation model on diverse robot interaction data, it achieves strong zero-shot generalization while remaining lightweight and deployable for real-world manipulation tasks.
Parameters are listed in Table~\ref{tab:pi05}.
It is licensed under the Gemma terms of use.

\paragraph{VLA-JEPA}
VLA-JEPA (Vision--Language--Action with Joint Embedding Predictive Architecture)~\cite{sun2026vla} augments VLA policies with a latent predictive world model based on V-JEPA2~\cite{assran2025v}, enabling the agent to learn action-relevant future representations without reconstructing pixels. 
Built on a Qwen3-VL-2B VLM backbone~\cite{bai2025qwen3} and a V-JEPA2 latent encoder based on ViT-L, it predicts future latent states and generates robot actions using a diffusion policy based on DiT-B~\cite{peebles2023scalable}. 
By jointly learning perception, language grounding, and latent dynamics, VLA-JEPA achieves strong generalization and robustness across manipulation tasks while improving data efficiency and transfer to unseen environments. 
Parameters are listed in Table~\ref{tab:vlajepa}.
It is released under the Apache-2.0 license.

\begin{table}[t]
\centering
\caption{X-VLA transformer parameters~\cite{zheng2025x}:
Huggingface ID \href{https://huggingface.co/lerobot/xvla-libero}
{lerobot/xvla-libero}}
\label{tab:xvla}
\small
\begin{tabular}{lrrrrrr}
\toprule
Module
& layers
& heads
& KV heads
& hidden size
& head dim
& MLP dim
\\
\midrule
VLM: Florence-2-Large
& 24 & 16 & 16
& 1024 & 64 & 4096
\\

Vision Encoder: DaViT
& 12 & 64 & 64
& 2048 & 32 & 8192
\\

\bottomrule
\end{tabular}
\end{table}

\begin{table}[t]
\centering
\caption{$\pi_{0.5}$ VLA transformer parameters~\cite{intelligence2025pi05}:
Huggingface ID \href{https://huggingface.co/lerobot/pi05_libero_finetuned}
  {lerobot/pi05\_libero\_finetuned}}
\label{tab:pi05}
\small
\begin{tabular}{lrrrrrr}
\toprule
Module
& layers
& heads
& KV heads
& hidden size
& head dim
& MLP dim
\\
\midrule
LLM: Gemma-2B
& 18 & 8 & 1
& 2048 & 256 & 16384
 \\

Vision Encoder: SigLIP ViT-L
& 24 & 16 & 16
& 1024 & 64 & 4096
\\
Diffusion Policy: Gemma-300M
&
18 & 8 & 1 &
1024 & 128 & 4096
\\
\bottomrule
\end{tabular}
\end{table}

\begin{table}[t]
\centering
\caption{VLA-JEPA transformer parameters~\cite{sun2026vla}:
Huggingface ID \href{https://huggingface.co/lerobot/VLA-JEPA-LIBERO}
{lerobot/VLA-JEPA-LIBERO}}
\label{tab:vlajepa}
\small
\begin{tabular}{lrrrrrr}
\toprule
Module
& layers
& heads
& KV heads
& hidden size
& head dim
& MLP dim
\\
\midrule
VLM: Qwen3-VL-2B
& 36 & 16 & 8
& 1536 & 96 & 8960
\\

World model: V-JEPA2 ViT-L
& 24 & 16 & 16
& 1024 & 64 & 4096
\\

Diffusion Policy: DiT-B
& 12 & 12 & 12
& 768 & 64 & 3072
\\
\bottomrule
\end{tabular}
\end{table}

\section{Datasets}
\label{sec:dataset}

\paragraph{Wikitext-2 (WT2)}
The WikiText language modeling dataset~\citep{merity2016pointer} is a collection of over 100 million tokens extracted from the set of verified good and featured articles on Wikipedia. 
The dataset is available under the CC BY-SA-4.0 license.
The wikitext-2-raw-v1 contains 36{,}718, 3{,}760, and 4{,}358 samples for train, validation, and test splits, respectively.
We use \url{https://huggingface.co/datasets/mindchain/wikitext2}.

\paragraph{GSM8K}
The Grade School Math 8K (GSM8K) dataset~\citep{cobbe2021training} is a benchmark for evaluating mathematical reasoning in LLMs.
It consists of 8{,}500 high-quality grade-school-level word problems written by human annotators, each paired with a detailed natural-language solution and a final numeric answer. 
The dataset is designed to assess multi-step arithmetic reasoning rather than factual recall, requiring models to generate intermediate reasoning steps to solve problems correctly. 
GSM8K is released under the MIT license and contains 7{,}473 training samples and 1{,}319 test samples. 
We use \url{https://huggingface.co/datasets/openai/gsm8k}.

\paragraph{TextVQA}

TextVQA~\cite{singh2019towards} requires VLM models to read and reason about text in images to answer questions about them. 
Specifically, models need to incorporate the new modality of text present in the images and reason over it to answer TextVQA questions. 
TextVQA dataset contains 45{,}336 questions over 28{,}408 images from the OpenImages dataset. 
We use \url{https://huggingface.co/datasets/lmms-lab/textvqa}, licensed under CC-BY-4.0.

\paragraph{LIBERO}
The LIBERO dataset~\cite{liu2023libero} is a benchmark for robotic vision-language-action (VLA) learning that evaluates long-horizon, compositional manipulation in simulated household environments. 
It provides five benchmark suites: spatial; object; goal; long, and short.
The short task suite has 90 task variants and the other suites have 10 different tasks. 
Each task includes multimodal data---video observations, language instructions, and low-level control trajectories---enabling end-to-end learning from vision and language to robot actions.
We use \url{https://huggingface.co/datasets/lerobot/libero},
 licensed under Apache-2.0.

\section{Libraries}
\label{sec:library}

We partly use some public libraries as below.

\begin{itemize}
    \item Pytorch \url{https://github.com/pytorch/pytorch}; version 2.9.1; BSD-3-Clause license
    \item opt\_einsum \url{https://github.com/dgasmith/opt_einsum}; version 3.3.0; MIT license
    \item einops \url{https://github.com/arogozhnikov/einops}; version 0.8.2; MIT license
    \item tensorly \url{https://github.com/tensorly/tensorly}; version 0.9.0; BSD-3 Clause license
    \item transformers \url{https://github.com/huggingface/transformers}; version 5.8.0; Apache-2.0 license
    \item datasets \url{https://github.com/huggingface/datasets}; version 4.85; Apache-2.0 license
    \item SciPy \url{https://github.com/scipy/scipy}; version 1.17.1; BSD-3-Clause license
    \item PyEDA \url{https://github.com/cjdrake/pyeda}; version 0.29.0; BSD-2-Clause license
    \item RevKit \url{https://github.com/msoeken/revkit}; MIT license
    \item Galois \url{https://github.com/mhostetter/galois}; version 0.4.11; MIT license
    \item LeRobot \url{https://github.com/huggingface/lerobot}; version 0.5.1; Apache-2.0 license

\end{itemize}

\end{document}